\documentclass[lettersize,journal]{IEEEtran}

\usepackage[numbers,sort&compress]{natbib}
\usepackage{amsmath,amssymb,amsthm}
\usepackage{amsfonts,bm}
\usepackage{caption}
\usepackage{subcaption}
\usepackage{algpseudocode} 
\usepackage{graphicx}
\usepackage{balance}
\usepackage{enumitem}
\usepackage{textcomp}
\usepackage{xcolor}
\usepackage{color}
\usepackage{multirow}
\usepackage{url}
\usepackage{booktabs}
\usepackage{hyperref}
\hypersetup{hidelinks}
\usepackage{url}
\usepackage{booktabs}
\usepackage{amsmath,bm}
\usepackage{graphicx}
\usepackage{enumitem}
\usepackage{color}
\usepackage{makecell}
\usepackage{multirow}
\usepackage{colortbl}
\usepackage{wrapfig,lipsum}
\usepackage{array}

\usepackage{amsmath}
\newtheorem{proposition}{Proposition}
\newtheorem{corollary}{Corollary}     
\theoremstyle{definition}

\usepackage{etoc}
\usepackage{algorithm}   
\usepackage{xcolor}
\allowdisplaybreaks[4]
\usepackage{bm}
\usepackage{amsmath}
\def\BibTeX{{\rm B\kern-.05em{\sc i\kern-.025em b}\kern-.08em
    T\kern-.1667em\lower.7ex\hbox{E}\kern-.125emX}}
\usepackage{balance}
\usepackage{newfloat}
\usepackage{listings}
\usepackage{booktabs}
\usepackage{float}

\newtheorem{theorem}{Theorem}

\def\method{S$^2$PLR} 
\begin{document}

\title{Safe-Subspace Pseudo-Label Refinement for Source-Free Graph Domain Adaptation}
\author{Yingxu Wang, Xinwang Liu, Siyang Gao, Nan Yin
\IEEEcompsocitemizethanks{
\IEEEcompsocthanksitem Yingxu Wang is with Department of Computer Science and Engineering, Chinese University of Hong Kong. E-mail: yingxv.wang@gmail.com.
\IEEEcompsocthanksitem Xinwang Liu is with College of Computer, National University of Defense Technology, Changsha, 410073, China. E-mail: xinwangliu@nudt.edu.cn.
\IEEEcompsocthanksitem Siyang Gao is with Department of Data Science, City University of Hong Kong. E-mail: siyangao@city.edu.hk.
\IEEEcompsocthanksitem Nan Yin is with The Education University of Hong Kong, Hong Kong. E-mail: yinnan8911@gmail.com.
}
}


\maketitle

\begin{abstract}
Source-free graph domain adaptation (SF-GDA) aims to adapt source-trained graph models to unlabeled target graphs when source graphs are no longer accessible. A central obstacle is pseudo-label reliability: under feature and topological shifts, source-induced predictions may become confidently wrong, and indiscriminate self-training can amplify systematic errors through graph message passing. This paper studies SF-GDA from a selective pseudo-labeling perspective. Instead of assuming globally bounded pseudo-label noise over the entire target domain, we identify a confidence-consistent safe subspace on which pseudo-label noise can be controlled under restricted posterior discrepancy, and derive a target-risk decomposition that separates safe-subspace fitting error, selected-label noise, and uncertain-set risk. Guided by this analysis, we propose Safe-Subspace Pseudo-Label Refinement (\method{}), a source-free graph adaptation framework that applies hard pseudo-label supervision only to target graphs supported by both semantic and structural evidence. Specifically, \method{} estimates semantic reliability using source-committee confidence and disagreement, learns a target-intrinsic structural representation via graph contrastive learning, verifies pseudo-labels through neighborhood consistency, and exploits the remaining uncertain samples with noise-tolerant soft regularization rather than unreliable hard labels. Experiments on image and real-world graph benchmarks under different domain shifts demonstrate that \method{} achieves robust and competitive performance across diverse source-free transfer settings.
\end{abstract}

\begin{IEEEkeywords}
Graph Neural Networks, Source-Free Domain Adaptation, Noisy Label Learning
\end{IEEEkeywords}

\section{Introduction}

Graph data are ubiquitous in social networks, biological systems, and molecular discovery, yet models trained on one graph domain often suffer substantial performance degradation when deployed on another because distribution shifts may arise in both node attributes and topology \citep{ long2018transferable,  wu2020comprehensive, xie2022self,jin2022empowering,jin2020graph,wang2024degree}. Graph domain adaptation (GDA) addresses this challenge by transferring knowledge from labeled source graphs to unlabeled target graphs and has become an important paradigm for robust graph representation learning \citep{liu2024revisiting,yin2023coco, wang2026dsbd, wang2026riemannian}. However, most existing GDA methods assume that source graphs remain accessible during adaptation, which is often unrealistic because of privacy, proprietary, and storage constraints. This limitation motivates source-free domain adaptation, where adaptation is performed using only a pretrained source model and unlabeled target data \citep{li2024survey,zhao2024stable, tang2024source}. Extending this setting to graph-structured data gives rise to source-free graph domain adaptation (SF-GDA), a substantially more challenging problem because domain shift affects not only feature distributions but also relational structures and message-passing behavior \citep{kim2021domain, mao2024sourcefreegraph,luo2024gala,luo2024rankalign,wang2026nested}.

A central difficulty in SF-GDA is that target adaptation must rely on pseudo supervision induced by a fixed source hypothesis. Such pseudo-labels can be systematically biased under target-side structural shift, and recent source-free adaptation analyses show that full-domain self-training may enter an unbounded-noise regime in which a source classifier is confidently wrong on part of the target domain \citep{yi2023sfda_noisy,xu2025unraveling,wang2024tackling}. This failure mode is especially consequential for graph data: an erroneous pseudo-label is not an isolated classification mistake, but can be propagated through neighborhood aggregation and may distort both local structural consistency and global representation geometry. Thus, the key question in SF-GDA is not merely how to refine pseudo-labels, but which target samples can be trusted for hard supervision and which should be handled only through softer target-side regularization.

Existing SF-GDA methods have made important empirical progress through topology-aware alignment, collaborative pseudo-label refinement, and structure-preserving self-supervision \citep{mao2024sourcefreegraph,luo2024rankalign,luo2024gala,yang2021exploiting}. Nevertheless, most of them do not explicitly characterize the reliability region on which pseudo-label training is theoretically justified. Confidence filtering alone is insufficient because a source model may make overconfident target errors, whereas neighborhood consistency alone can be misleading if the target representation is geometrically distorted. Moreover, discarding all uncertain samples wastes target distributional information, while assigning hard pseudo-labels to them risks negative transfer. These observations call for a selective adaptation framework that (i) identifies a reliable target subspace using both semantic confidence and structural evidence, (ii) learns a compact target-intrinsic geometry for neighborhood verification, and (iii) regularizes the remaining uncertain samples without treating their pseudo-labels as ground truth.

In this work, we develop such a framework by revisiting pseudo-label noise in source-free graph domain adaptation from a selective-risk perspective. Rather than claiming that pseudo-label noise is globally bounded over the entire target domain, we establish sufficient conditions under which it is bounded on a confidence-consistent safe subspace. Our analysis explicitly separates two aspects that are often conflated: source confidence and posterior discrepancy determine the population-level pseudo-label noise bound, while neighborhood consistency and geometric compactness provide target-only evidence for identifying reliable samples in practice. This distinction avoids overclaiming beyond the assumptions and leads to an actionable risk decomposition: hard pseudo-label supervision should be restricted to the safe subspace, and the residual uncertain set should be controlled by soft regularization.
Motivated by this selective-risk analysis, we propose Safe-Subspace Pseudo-Label Refinement (\method{}), a selective-risk-guided framework for source-free graph domain adaptation. \method{} first estimates reliable target supervision through multi-expert uncertainty quantification, then learns a target-intrinsic structural manifold via self-supervised graph contrastive learning, and subsequently performs dual-view nested pseudo-label refinement to identify safe target samples. For the remaining uncertain samples, \method{} applies noise-tolerated regularization so that their distributional information can still be exploited without propagating incorrect hard pseudo-labels. In this way, \method{} connects selective bounded-noise analysis with structure-aware graph adaptation, moving beyond purely empirical pseudo-label refinement or topology alignment \citep{mao2024sourcefreegraph,luo2024gala,luo2024rankalign}.
The main contributions of this work are summarized as follows:
\begin{itemize}[leftmargin=*]
\item We provide a selective theoretical analysis of pseudo-label noise in SF-GDA. Under an explicit restricted posterior-discrepancy assumption, the analysis derives a pseudo-label noise bound on a confidence-qualified target subset and decomposes the resulting target risk into safe-subspace fitting error, selected-label noise, and uncertain-set risk.
\item We propose \method{}, a source-free graph domain adaptation framework that integrates multi-expert uncertainty quantification, target-intrinsic structure learning, dual-view nested pseudo-label refinement, and noise-tolerated regularization according to the reliability and risk factors identified by the analysis.
\item We evaluate \method{} on image and graph benchmarks under different domain shifts. The results show that \method{} achieves competitive performance in most transfer settings.
\end{itemize}

\section{Related Work}

\paragraph{Source-Free Domain Adaptation and Graph Domain Adaptation}
Source-free domain adaptation (SFDA) transfers a source-trained model to an unlabeled target domain when source samples are no longer accessible during adaptation~\citep{kundu2020universal,yang2021generalized, wen2025fgplfa, qu2024lead, wang2026usbd}. Existing SFDA methods commonly rely on information maximization, self-training, pseudo-label refinement, feature consistency, or confidence calibration to adapt the source decision boundary to the target distribution~\citep{yi2023sfda_noisy,hwang2024sf,xia2021adaptive,roy2022uncertainty,wang2026usbd}. Source-free graph domain adaptation (SF-GDA) extends this setting to graph-structured data, where domain shifts may affect both node attributes and graph topology~\citep{mao2024sourcefreegraph,yu2025samgpt,luo2024rankalign,zhang2025aggregate,li2024survey, luo2026robust}. This makes adaptation more challenging than in Euclidean data because message passing can amplify local structural mismatch and propagate pseudo-label errors across graph neighborhoods. Recent SF-GDA methods address these issues through topology reconstruction, contrastive learning, graph collaborative training, and structure-aware representation alignment~\citep{luo2023source,liu2024revisiting,zeng2024ugda,chen2025smoothness}. Different from topology-alignment methods that primarily reduce representation mismatch, \method{} focuses on the reliability of the pseudo-supervision used during source-free adaptation. It explicitly separates target samples into a safe subspace for hard pseudo-label supervision and an uncertain set for soft regularization. This selective formulation differs from conventional confidence filtering because target-side structural consistency is used as an additional label-free certificate before a pseudo-label is allowed to affect supervised adaptation.

\paragraph{Learning with Noisy Labels and Pseudo-Label Reliability}
Learning with noisy labels has been widely studied in supervised learning, where common strategies include robust losses, sample selection, label correction, co-training, and uncertainty-aware reweighting~\citep{yang2023parametrical, zhu2024robust,song2022learning,han2020survey,yin2024sport}. In SFDA, pseudo-label noise differs from ordinary annotation noise because it is induced by applying a fixed source hypothesis to a shifted target distribution. Such noise can be systematic rather than random, and recent analyses show that full-domain source-free self-training may suffer from unbounded pseudo-label noise under severe domain shift~\citep{yi2023sfda_noisy,xu2025unraveling}. For graph data, the issue is further complicated by structural heterogeneity: overconfident pseudo-labels may be locally unsupported, while neighborhood agreement may be unreliable if the target representation is geometrically distorted. \method{} therefore combines multi-expert uncertainty estimation, target-intrinsic structural learning, and nested pseudo-label refinement to identify a confidence-consistent safe subspace. This design is complementary to existing noisy-label learning methods because it treats pseudo-label reliability as a selective adaptation problem under source-free graph shifts rather than as a standard label-corruption problem.

\section{Revisiting Theoretical Analysis of Label Noise in SF-GDA}

\subsection{Source-Free Graph Domain Adaptation}

We consider source-free graph domain adaptation (SF-GDA) for graph-level classification
\citep{li2024survey,mao2024sourcefreegraph,luo2024gala,luo2024rankalign}. 
Let a graph be denoted by \(G=(V,E,\mathbf{X})\), where \(V\) and \(E \subseteq V \times V\) are the node and edge sets, respectively, and \(\mathbf{X}\in\mathbb{R}^{|V|\times d}\) is the node-feature matrix. 
During source pretraining, a classifier \(h_S\) is learned from a labeled source domain
$\mathcal{D}_S=\{(G_i^S,y_i^S)\}_{i=1}^{n_S}\sim P_S(G,Y)$.
During adaptation, source graphs are unavailable, and the learner only has access to the pretrained source model \(h_S\) and an unlabeled target domain
$\mathcal{D}_T=\{G_j^T\}_{j=1}^{n_T}\sim P_T(G)$,
where the two domains share the same label space \(\mathcal{Y}=\{1,\ldots,C\}\) but follow different graph distributions due to shifts in both node attributes and topology \citep{liu2024revisiting,yin2023coco}. 
The objective of SF-GDA is to learn a target model \(h_T\) that adapts \(h_S\) to \(\mathcal{D}_T\) without revisiting source data and minimizes the target risk
$R_T(h_T)=\Pr_{(G,Y)\sim P_T}\big[h_T(G)\neq Y\big]$.

\subsection{Setup and Theoretical Question}

We analyze source-free graph domain adaptation in the induced target representation space, as commonly adopted in prior SF-GDA methods \citep{mao2024sourcefreegraph,luo2024gala,luo2024rankalign}. Specifically, each target graph \(G\in\mathcal{D}_T\) is mapped to a representation \(x=\phi(G)\in\mathcal{X}\), and we denote the set of target representations by
$\mathcal{X}_T=\{\phi(G)\mid G\in\mathcal{D}_T\}$.
In the source-free setting, a source model \(h_S\) is trained on the labeled source domain and then deployed on unlabeled target samples only. The learner has no access to source graphs during adaptation and must rely on pseudo-labels induced by the source model,
\begin{equation}
\tilde{y}(x)=\arg\max_{y\in\mathcal{Y}} p_S(y\mid x),\nonumber
\end{equation}
where \(p_S(y\mid x)\) denotes the posterior predicted by the source model. Therefore, the success of source-free adaptation fundamentally depends on whether these source-induced pseudo-labels remain reliable under domain shift \citep{li2024survey,yi2023sfda_noisy,xu2025unraveling}.

To formalize this issue, we define the instance-wise pseudo-label noise rate on the target domain as
\begin{equation}
\eta(x)=\Pr_{Y\sim p_T(\cdot\mid x)}[\tilde{y}(x)\neq Y]
      =1-p_T(\tilde{y}(x)\mid x),\nonumber
\end{equation}
where \(p_T(y\mid x)\) denotes the target posterior. In conventional learning with label noise, effective training typically relies on the bounded-noise premise that the corrupted label remains positively correlated with the ground truth. In source-free adaptation, however, the pseudo-label \(\tilde{y}(x)\) is produced by a fixed source classifier on a shifted target distribution, so the resulting noise is a structured consequence of domain mismatch rather than random corruption \citep{yi2023sfda_noisy,xu2025unraveling}. For graph data, this issue is further entangled with target geometry and local structural relations, since neighborhood consistency may either support or contradict a pseudo-label in the learned representation space.

These observations lead to the central theoretical question of this work: \emph{Can pseudo-label noise in SF-GDA be bounded under domain and structural shifts, and if so, on which subset of target samples can such a guarantee be established?} Equivalently, rather than assuming that there exists a constant \(\bar{\eta}<1/2\) such that
\begin{equation}
\eta(x)\leq \bar{\eta}, \qquad \forall x\in\mathcal{X}_T,\nonumber
\end{equation}
we ask whether one can identify a restricted subset \(\mathcal{H}\subseteq\mathcal{X}_T\) on which the bounded-noise condition is restored, while the remaining samples are handled more cautiously. Answering this question is essential for justifying pseudo-label-based adaptation in SF-GDA. In the following, we first revisit the existing unbounded-noise theory in source-free adaptation, then analyze its scope, and finally establish a bounded-noise result within a confidence-consistent safe subspace.

\subsection{Existing Unbounded-Noise Theory}

To answer the question raised above, we revisit recent theoretical analyses of label noise in source-free adaptation through the lens of learning with label noise \citep{yi2023sfda_noisy,xu2025unraveling}. In conventional learning with noisy labels, a standard premise is that the noisy supervision remains bounded, namely, there exists a constant \(\bar{\eta}<1/2\) such that
\begin{equation}
\eta(x)\leq \bar{\eta}, \qquad \forall x\in\mathcal{X}_T.\nonumber
\end{equation}
This condition ensures that the observed label is still positively correlated with the ground-truth label, which underlies the validity of robust loss correction and noise-tolerant optimization. In source-free adaptation, however, the pseudo-label \(\tilde{y}(x)\) is not generated by random corruption. Instead, it is induced by applying a fixed source classifier to a shifted target distribution, making the resulting noise structurally determined by domain mismatch rather than stochastic perturbation \citep{yi2023sfda_noisy,xu2025unraveling}.

Following \citep{xu2025unraveling}, consider a binary classification setting in which the source domain is modeled as a Gaussian mixture:
\begin{equation}
x \mid y=k \sim \mathcal{N}(\mu_k,\sigma^2 I_d), \qquad k\in\{1,2\},\nonumber
\end{equation}
while the target domain is a globally shifted version of the source:
\begin{equation}
x \mid y=k \sim \mathcal{N}(\mu_k+\Delta,\sigma^2 I_d), \qquad k\in\{1,2\},\nonumber
\end{equation}
where \(\Delta\in\mathbb{R}^d\) denotes the class-invariant translation vector. A key quantity governing the noise behavior is the projection of the domain shift onto the class-separation direction,
\begin{equation}
\alpha=\frac{\Delta^\top(\mu_2-\mu_1)}{\|\mu_2-\mu_1\|^2}.\nonumber
\end{equation}
Under this setup, the pseudo-label noise is shown to be unbounded in a precise conditional sense.

\begin{theorem}[Unbounded Label Noise in SFDA {\citep{xu2025unraveling}}]
\label{thm:unbounded_noise_sfda}
Suppose \(\alpha\neq 0\). Then there exists a non-empty region \(\mathcal{R}\subseteq \mathcal{X}_T\), referred to as a mislabeling area, such that for any \(\delta\in(0,1)\), if
\begin{equation}
\alpha>\frac{1}{d}\log\frac{1-\delta}{\delta},\nonumber
\end{equation}
the conditional error of the source classifier on \(\mathcal{R}\) satisfies
\begin{equation}
\Pr_{(x,y)\sim P_T}\!\big[h_S(x)\neq y \mid x\in\mathcal{R}\big]\geq 1-\delta.\nonumber
\end{equation}
\end{theorem}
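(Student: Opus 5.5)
We would prove Theorem~\ref{thm:unbounded_noise_sfda} by explicit computation in the Gaussian model. The first step is to identify $h_S$. For the balanced source mixture with shared isotropic covariance, the Bayes-optimal source classifier is the linear rule
\[
h_S(x)=2 \iff w^\top\big(x-\tfrac{\mu_1+\mu_2}{2}\big)>0, \qquad w=\mu_2-\mu_1 .
\]
Next we compute the target class posterior. The target is the same mixture translated by $\Delta$, so
\[
\log\frac{p_T(y=2\mid x)}{p_T(y=1\mid x)}=\frac{1}{\sigma^2}\,w^\top\Big(x-\tfrac{\mu_1+\mu_2}{2}-\Delta\Big).
\]
Writing $t=w^\top(x-\tfrac{\mu_1+\mu_2}{2})/\|w\|^2$ for the normalized coordinate along the class-separation direction, this log-odds equals $\frac{\|w\|^2}{\sigma^2}(t-\alpha)$. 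This is where $\alpha$ enters.

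The second step is to take the mislabeling area to be the slab between the two decision boundaries. For $\alpha>0$ we set
\[
\mathcal{R}=\{x:\ 0<t<\alpha\}\cap\mathcal{X}_T ,
\]
and for $\alpha<0$ we use the symmetric slab $\{\alpha<t<0\}$; we treat $\alpha>0$ without loss of generality. This region is nonempty with positive target mass precisely because $\alpha\neq0$. On $\mathcal{R}$ the source rule outputs $2$, so the conditional error is
\[
\Pr_T[y=1\mid x\in\mathcal{R}]=\mathbb{E}\big[\sigma\big(-\tfrac{\|w\|^2}{\sigma^2}(t-\alpha)\big)\,\big|\,x\in\mathcal{R}\big],
\]
where $\sigma(\cdot)$ denotes the logistic function. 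To obtain the stated threshold we would shrink $\mathcal{R}$ to the deep part of the slab, $\{0<t\le \alpha-\tau\}$, choosing the margin $\tau$ so that the log-odds of class $1$ pointwise exceeds $\log\frac{1-\delta}{\delta}$. Pointwise posterior at least $1-\delta$ then gives a conditional error of at least $1-\delta$ after averaging.

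The main obstacle is matching the constant $\frac1d\log\frac{1-\delta}{\delta}$. This requires relating the scale factor $\|w\|^2/\sigma^2$ to $d$. We would adopt the normalization implicit in \citep{xu2025unraveling}, where the separation $\|\mu_2-\mu_1\|^2/\sigma^2$ grows like $d$, for example when the means differ by $\Theta(1)$ in each coordinate with $\sigma=1$. Under that normalization the pointwise condition becomes $d(\alpha-t)\ge\log\frac{1-\delta}{\delta}$. This is satisfiable for some $t>0$ exactly when $\alpha>\frac1d\log\frac{1-\delta}{\delta}$, with the region $\{0<t<\alpha-\frac1d\log\frac{1-\delta}{\delta}\}$ nonempty. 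If the normalization differs, the same argument goes through with $d$ replaced by $\|w\|^2/\sigma^2$. We would state this dependence explicitly rather than hide it.
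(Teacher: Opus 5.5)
The paper gives no proof of Theorem~\ref{thm:unbounded_noise_sfda}. It is imported from \citep{xu2025unraveling} as background, so there is no in-paper argument to compare against. Your route is the natural explicit computation, and the algebra is correct: source Bayes rule, target log-odds $\frac{\|w\|^2}{\sigma^2}(t-\alpha)$, restriction to the part of the slab between the two boundaries where the class-1 target posterior is at least $1-\delta$, then averaging.

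\textbf{Quantifier order.} The statement fixes one $\mathcal{R}$ and then ranges over all admissible $\delta$. Your final region $\{0<t<\alpha-\frac1d\log\frac{1-\delta}{\delta}\}$ depends on $\delta$, so you prove a different statement, and the difference is not cosmetic. Take your normalization $\|w\|^2=d\sigma^2$ and $\alpha>0$, and write $\mathrm{sig}(u)=(1+e^{-u})^{-1}$. The pointwise error of $h_S$ is $\mathrm{sig}(d(\alpha-t))<\mathrm{sig}(d\alpha)$ on $\{t>0\}$ and at most $\mathrm{sig}(-d\alpha)$ on $\{t\le 0\}$. Hence every region of positive target mass has conditional error $E(\mathcal{R})<\mathrm{sig}(d\alpha)$. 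The admissible $\delta$ are exactly $\delta>\delta^*:=(1+e^{d\alpha})^{-1}$, and $1-\delta^*=\mathrm{sig}(d\alpha)$. So any $\delta\in(\delta^*,1-E(\mathcal{R}))$ satisfies the hypothesis and violates the conclusion.

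What your argument actually gives, and what is provable, is this: for every $\delta$ with $\alpha>\frac1d\log\frac{1-\delta}{\delta}$ there is a nonempty, positive-mass $\mathcal{R}_\delta$ with error at least $1-\delta$. A single thin slab $\{0<t<\epsilon\}$ can serve all admissible $\delta$ only when $\|w\|^2>d\sigma^2$ strictly. State explicitly that this is the version you prove. Also drop the initial fixed slab $\{0<t<\alpha\}$: the averaged error there can fall below $1-\delta$, because the pointwise error tends to $1/2$ near $t=\alpha$.

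\textbf{Normalization.} This is a genuinely missing hypothesis, and ``$\|w\|^2/\sigma^2$ grows like $d$'' or ``$\Theta(1)$ per coordinate'' does not suffice. Let $\kappa=\|\mu_2-\mu_1\|^2/\sigma^2$. For $\alpha>0$, your computation shows a suitable positive-mass region exists if and only if $\kappa\alpha>\log\frac{1-\delta}{\delta}$. The stated threshold guarantees this for all such $\alpha,\delta$ precisely when $\kappa\ge d$, for example unit per-coordinate separation with $\sigma=1$. If instead $\kappa=cd$ with $c<1$, choose $\alpha>0$ and $\delta$ with $cd\alpha<\log\frac{1-\delta}{\delta}<d\alpha$. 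Then the pointwise error is below $\mathrm{sig}(cd\alpha)<\mathrm{sig}(\log\frac{1-\delta}{\delta})=1-\delta$ everywhere, and no region works.

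So state three assumptions explicitly, since the transcribed statement specifies none of them:
\begin{itemize}
\item $\kappa\ge d$;
\item equal class priors in both domains;
\item $h_S$ is the source Bayes rule.
\end{itemize}

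\textbf{The case $\alpha<0$.} Your reduction to $\alpha>0$ is not a pure symmetry, because the threshold uses signed $\alpha$. It is still fine: for $\alpha<0$ the hypothesis forces $\delta>1/2$. Any positive-mass part of the mirrored slab $\{\alpha<t<0\}$ then works, since the error there exceeds $1/2>1-\delta$.
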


Theorem~\ref{thm:unbounded_noise_sfda} shows that, under domain shift, the source classifier can be not merely uncertain but systematically and even confidently wrong on a non-empty target region. In other words, the pseudo-label error rate may exceed the usual learnable regime and can approach one within the mislabeling area. This result provides a rigorous explanation for why naive self-training on the full target domain may fail in source-free adaptation. It also indicates that robust objectives developed under the bounded-noise premise may lose their effectiveness once the pseudo-labels are dominated by systematic target-side bias rather than random label corruption \citep{yi2023sfda_noisy,xu2025unraveling}.

Although Theorem~\ref{thm:unbounded_noise_sfda} reveals a genuine failure mode of source-free adaptation, it is derived under a highly structured global-shift model and analyzes pseudo-label behavior over the entire target domain. These assumptions are stronger than the selective and structure-aware adaptation protocols typically used in SF-GDA. Therefore, before concluding that pseudo-label-based graph adaptation is universally unsafe, it is necessary to examine the scope and limitations of the existing theory.

\subsection{Limitations of Existing Theory}
\label{subsec:limitations_existing_theory}

Although Theorem~\ref{thm:unbounded_noise_sfda} reveals a genuine failure mode of source-free adaptation, its conclusion should be interpreted with caution when transferred to source-free graph domain adaptation. The theorem establishes an impossibility result for a highly structured worst-case regime, whereas practical SF-GDA is conducted under selective adaptation protocols rather than full-domain self-training \citep{xu2025unraveling,mao2024sourcefreegraph,luo2024gala,luo2024rankalign}.

First, the existing analysis assumes a class-invariant global translation model,
\begin{equation}
x \mid y=k \sim \mathcal{N}(\mu_k+\Delta,\sigma^2 I_d), \qquad k\in\{1,2\},\nonumber
\end{equation}
where all classes share the same shift vector \(\Delta\). This particular structure is crucial for constructing the mislabeling area in which the source decision boundary becomes systematically misaligned with the target distribution. In realistic graph adaptation scenarios, domain shift is often heterogeneous across classes and intertwined with variations in graph size, substructure composition, connectivity patterns, and neighborhood statistics \citep{liu2024revisiting,mao2024sourcefreegraph,luo2024gala}.

Second, the theorem requires that the projected shift magnitude
\begin{equation}
\alpha=\frac{\Delta^\top(\mu_2-\mu_1)}{\|\mu_2-\mu_1\|^2}\nonumber
\end{equation}
be sufficiently large to guarantee that the conditional error on the mislabeling area approaches one. This requirement characterizes a severe and highly directional domain discrepancy. When the shift is moderate, or when \(\Delta\) is largely orthogonal to the class-separation direction, the condition for near-certain mislabeling is no longer satisfied \citep{xu2025unraveling}.

Third, and most importantly for SF-GDA, the unbounded-noise theorem analyzes pseudo-label behavior over the entire target domain. In contrast, practical SF-GDA methods rarely update the model using all target samples uniformly. Instead, they typically rely on confidence screening, neighborhood verification, topology-aware refinement, or consistency regularization to restrict training to a filtered subset of target samples that is more likely to provide trustworthy supervision \citep{mao2024sourcefreegraph,luo2024gala,luo2024rankalign}. In graph domains, this distinction is particularly consequential because local topology provides additional relational evidence that can either support or invalidate a pseudo-label. As a result, the effective adaptation distribution is often a restricted subset of \(\mathcal{X}_T\), rather than the full target space analyzed by Theorem~\ref{thm:unbounded_noise_sfda}.

Taken together, these observations suggest that the existing unbounded-noise theory should be viewed as a valid characterization of full-domain source-free self-training under strong global-shift assumptions. It does not rule out the possibility that pseudo-label noise becomes bounded on a carefully filtered target subset whose samples satisfy both high prediction confidence and local structural consistency. Therefore, the more relevant theoretical question for SF-GDA is not whether pseudo-label noise is bounded everywhere on \(\mathcal{X}_T\), but whether one can identify a confidence-consistent safe subspace \(\mathcal{H}_{\tau,\rho}\subseteq\mathcal{X}_T\) on which the bounded-noise condition is restored.

\subsection{Safe Subspace and Bounded Pseudo-Label Noise}
\label{subsec:safe_subspace_bounded_noise}

Motivated by the above discussion, we now analyze a restricted adaptation protocol that operates on target samples supported by both source confidence and local structural agreement. Let
\begin{equation}
s(x)=\max_{c\in\mathcal{Y}}p_S(c\mid x), \; \tilde{y}(x)=\arg\max_{c\in\mathcal{Y}}p_S(c\mid x)\nonumber
\end{equation}
denote the source confidence and the corresponding pseudo-label. Let \(\mathcal{N}(x)\) be the local neighborhood of \(x\) in the target representation space. We define the confidence-qualified consistent neighborhood of \(x\) as
\begin{equation}
\mathcal{N}^{\tau}_{\mathrm{same}}(x)
=
\left\{
x' \in \mathcal{N}(x)
\;\middle|\;
\tilde{y}(x')=\tilde{y}(x),\;
s(x') \ge \tau
\right\}.\nonumber
\end{equation}
The corresponding confidence-consistency ratio is
\begin{equation}
r_{\tau}(x)
=
\frac{|\mathcal{N}^{\tau}_{\mathrm{same}}(x)|}{|\mathcal{N}(x)|},\nonumber
\end{equation}
and the average radius of the confidence-qualified consistent neighborhood is
\begin{equation}
\bar{r}(x)
=
\frac{1}{|\mathcal{N}^{\tau}_{\mathrm{same}}(x)|}
\sum_{x' \in \mathcal{N}^{\tau}_{\mathrm{same}}(x)} d(x,x'),\nonumber
\end{equation}
where \(d(\cdot,\cdot)\) denotes the metric in the target representation space. If \(\mathcal{N}^{\tau}_{\mathrm{same}}(x)=\emptyset\), we set \(\bar r(x)=+\infty\).

\noindent\textbf{Definition 1 (Confidence-Consistent Safe Subspace).}
Given thresholds \(\tau \in (0,1)\) and \(\rho \in (0,1]\), the safe subspace is
\begin{equation}
\mathcal{H}_{\tau,\rho}
=
\left\{
x \in \mathcal{X}_T
\;\middle|\;
s(x)\ge \tau,\;
r_{\tau}(x)\ge \rho
\right\}.
\label{eq:safe_subspace_theory}
\end{equation}
Thus, \(\mathcal{H}_{\tau,\rho}\) contains target samples whose pseudo-labels are supported by both source confidence and local neighborhood agreement. The threshold \(\tau\) controls the population reliability of a selected pseudo-label, while \(\rho\) is an observable target-only certificate that prevents isolated high-confidence samples from dominating adaptation.

To analyze the noise behavior inside \(\mathcal{H}_{\tau,\rho}\), we use the following two assumptions.

\noindent\textbf{Assumption 1 (Restricted Posterior Discrepancy).}
For a confidence threshold \(\tau\), there exists \(\beta_{\tau}(\alpha)\ge 0\) such that every confidence-qualified target point \(z\) satisfies
\begin{equation}
p_T(\tilde{y}(z)\mid z)
\ge
p_S(\tilde{y}(z)\mid z)-\beta_{\tau}(\alpha).\nonumber
\end{equation}
Here, \(\alpha\) denotes the severity of the source-target shift, and \(\beta_{\tau}(\alpha)\) summarizes the residual posterior mismatch on the confidence-qualified region. This assumption is explicitly local to the selected region. Without some target-side posterior regularity of this form, no source-free method can certify pseudo-label correctness from unlabeled target samples alone.

\noindent\textbf{Assumption 2 (Target-Manifold Smoothness).}
The target posterior is \(L\)-Lipschitz continuous on \(\mathcal{X}_T\):
\begin{equation}
|p_T(y\mid x)-p_T(y\mid x')|
\le Ld(x,x'),
\;
\forall x,x'\in\mathcal{X}_T,\; y\in\mathcal{Y}.\nonumber
\end{equation}

The following theorem gives a bounded-noise guarantee on the safe subspace and clarifies the respective roles of confidence, posterior discrepancy, neighborhood consistency, and geometry.

\begin{theorem}[Bounded Pseudo-Label Noise in the Safe Subspace]
\label{thm:bounded_noise_safe_subspace}
Suppose Assumptions 1 and 2 hold. For every \(x\in\mathcal{H}_{\tau,\rho}\), the instance-wise pseudo-label noise satisfies
\begin{equation}
\eta(x)
\le
1-\tau+\beta_{\tau}(\alpha).
\label{eq:center_noise_bound}
\end{equation}
Moreover, the confidence-qualified consistent neighborhood provides the following structural certificates:
\begin{align}
\frac{1}{|\mathcal{N}(x)|}
\sum_{x'\in\mathcal{N}(x)}p_T(\tilde y(x)\mid x')
&\ge
\rho\big(\tau-\beta_{\tau}(\alpha)\big),
\label{eq:aggregate_neighbor_certificate}
\end{align}
\begin{align}
p_T(\tilde y(x)\mid x)
&\ge
\tau-\beta_{\tau}(\alpha)-L\bar r(x).
\label{eq:geometric_neighbor_certificate}
\end{align}
Consequently, if \(\tau-\beta_{\tau}(\alpha)>1/2\), then pseudo-labels on \(\mathcal{H}_{\tau,\rho}\) satisfy the Massart bounded-noise condition.
\end{theorem}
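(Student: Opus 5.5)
The argument is pointwise and uses only the two assumptions and the definition of \(\mathcal{H}_{\tau,\rho}\). Throughout, fix \(x\in\mathcal{H}_{\tau,\rho}\) and write \(c=\tilde y(x)\).

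\textbf{Center bound (\ref{eq:center_noise_bound}).} Since \(s(x)\ge\tau\), the point \(x\) is confidence-qualified, so Assumption~1 applies at \(z=x\). It gives
\[
p_T(c\mid x)\ge p_S(c\mid x)-\beta_\tau(\alpha)=s(x)-\beta_\tau(\alpha)\ge\tau-\beta_\tau(\alpha).
\]
Then \(\eta(x)=1-p_T(c\mid x)\le 1-\tau+\beta_\tau(\alpha)\).

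\textbf{Aggregate certificate (\ref{eq:aggregate_neighbor_certificate}).} Split the neighborhood sum into \(\mathcal{N}^{\tau}_{\mathrm{same}}(x)\) and its complement, and drop the complement because posteriors are nonnegative. Each \(x'\in\mathcal{N}^{\tau}_{\mathrm{same}}(x)\) satisfies \(s(x')\ge\tau\) and \(\tilde y(x')=c\). Assumption~1 at \(z=x'\) therefore gives
\[
p_T(c\mid x')=p_T(\tilde y(x')\mid x')\ge s(x')-\beta_\tau(\alpha)\ge\tau-\beta_\tau(\alpha).
\]
Averaging over \(\mathcal{N}(x)\) gives a lower bound of \(r_\tau(x)\,(\tau-\beta_\tau(\alpha))\), and \(r_\tau(x)\ge\rho\) finishes the step. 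One point needs a short remark: if \(\tau-\beta_\tau(\alpha)<0\), multiplying by \(r_\tau\ge\rho\) goes the wrong way. In that case the right-hand side is negative and the bound holds trivially, so I would note this as a separate case. Here the identity \(\tilde y(x')=\tilde y(x)\) is essential, since it is what lets Assumption~1's statement about \(\tilde y(x')\) become a statement about \(c\).

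\textbf{Geometric certificate (\ref{eq:geometric_neighbor_certificate}).} Since \(r_\tau(x)\ge\rho>0\), the set \(\mathcal{N}^{\tau}_{\mathrm{same}}(x)\) is nonempty and \(\bar r(x)\) is finite. For each \(x'\) in this set, Assumption~2 and the bound from the previous step give
\[
p_T(c\mid x)\ge p_T(c\mid x')-L\,d(x,x')\ge\tau-\beta_\tau(\alpha)-L\,d(x,x').
\]
Averaging this inequality over \(\mathcal{N}^{\tau}_{\mathrm{same}}(x)\) replaces \(d(x,x')\) by its mean \(\bar r(x)\), which is (\ref{eq:geometric_neighbor_certificate}). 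This bound is weaker than the direct center bound; its role is that it holds through neighbors alone.

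\textbf{Massart conclusion.} If \(\tau-\beta_\tau(\alpha)>1/2\), then (\ref{eq:center_noise_bound}) gives \(\eta(x)\le\bar\eta:=1-\tau+\beta_\tau(\alpha)<1/2\) uniformly on \(\mathcal{H}_{\tau,\rho}\). This is exactly the bounded-noise condition restricted to the safe subspace.

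\textbf{Main obstacle.} The only delicate points are bookkeeping. The aggregate step needs the sign caveat above and the use of \(\tilde y(x')=c\). The \(\bar r(x)=+\infty\) convention is never triggered because \(\rho>0\). Beyond these, every step is a direct application of the assumptions.
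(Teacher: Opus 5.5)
Your proof is correct and follows the paper's argument step for step. You apply Assumption~1 at the center for the noise bound. For the aggregate certificate you apply Assumption~1 at each $x'\in\mathcal{N}^{\tau}_{\mathrm{same}}(x)$ and drop the remaining neighbors by nonnegativity. For the geometric certificate you average Assumption~2 over $\mathcal{N}^{\tau}_{\mathrm{same}}(x)$. Your sign remark is a genuine small refinement: the paper's step $r_\tau(x)\big(\tau-\beta_\tau(\alpha)\big)\ge\rho\big(\tau-\beta_\tau(\alpha)\big)$ silently requires $\tau\ge\beta_\tau(\alpha)$. The paper also leaves implicit your observation that $\rho>0$ forces $\mathcal{N}^{\tau}_{\mathrm{same}}(x)\neq\emptyset$.
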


Theorem~\ref{thm:bounded_noise_safe_subspace} should be read as a selective rather than global guarantee. The pointwise part follows from high source confidence and restricted posterior discrepancy; its role is not to claim global correctness of pseudo-labels, but to identify the condition under which hard pseudo-label supervision is justified on a selected target subset. It does not contradict the full-domain unbounded-noise result in Theorem~\ref{thm:unbounded_noise_sfda}. The structural quantities \(\rho\) and \(L\bar r(x)\) should not be interpreted as artificially multiplying the pointwise posterior lower bound. Their role is to provide target-only evidence for identifying reliable samples and to prevent adaptation from relying on isolated high-confidence predictions.

\begin{proposition}[Finite-Sample Reliability of the Structural Gate]
\label{prop:finite_sample_structural_gate}
For a fixed target graph representation \(x\), suppose its \(k_{\mathrm{nn}}\) neighbors are sampled from the local target-neighborhood distribution, and let \(\hat r_{\tau}(x)\) be the empirical confidence-consistency ratio computed from these neighbors. Let \(r_{\tau}(x)=\mathbb{E}[\hat r_{\tau}(x)]\) be the corresponding population ratio. Then, for any \(\epsilon>0\), with probability at least \(1-\delta\),
\begin{equation}
\left|\hat r_{\tau}(x)-r_{\tau}(x)\right|
\le
\sqrt{\frac{\log(2/\delta)}{2k_{\mathrm{nn}}}}.
\label{eq:structural_gate_concentration}
\end{equation}
Consequently, if \(\hat r_{\tau}(x)\ge \rho_{\min}+\epsilon\) and \(k_{\mathrm{nn}}\ge \log(2/\delta)/(2\epsilon^2)\), then \(r_{\tau}(x)\ge \rho_{\min}\) with probability at least \(1-\delta\).
\end{proposition}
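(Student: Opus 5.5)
The plan is to write \(\hat r_{\tau}(x)\) as an empirical mean of bounded i.i.d.\ indicators and apply Hoeffding's inequality. Let the sampled neighbors be \(x'_1,\ldots,x'_{k_{\mathrm{nn}}}\), drawn i.i.d.\ from the local target-neighborhood distribution of the fixed point \(x\). Define
\[
Z_i=\mathbb{1}\big[\tilde y(x'_i)=\tilde y(x),\; s(x'_i)\ge\tau\big].
\]
Since \(x\), the source model, and \(\tau\) are fixed, each \(Z_i\) is a deterministic function of \(x'_i\) alone. Hence the \(Z_i\) are i.i.d.\ Bernoulli variables in \([0,1]\). By the definition of \(r_\tau\) with \(|\mathcal N(x)|=k_{\mathrm{nn}}\),
\[
\hat r_{\tau}(x)=\frac{1}{k_{\mathrm{nn}}}\sum_i Z_i ,
\]
and by hypothesis \(r_\tau(x)=\mathbb{E}[\hat r_\tau(x)]=\mathbb{E}[Z_1]\).

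The second step applies the two-sided Hoeffding inequality for variables bounded in \([0,1]\):
\[
\Pr\big[|\hat r_\tau(x)-r_\tau(x)|\ge t\big]\le 2\exp(-2k_{\mathrm{nn}}t^2).
\]
Setting the right-hand side equal to \(\delta\) gives \(t=\sqrt{\log(2/\delta)/(2k_{\mathrm{nn}})}\), which is exactly \eqref{eq:structural_gate_concentration}. The quantifier ``for any \(\epsilon>0\)'' in the first display plays no role there and can simply be carried along.

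For the consequence, the condition \(k_{\mathrm{nn}}\ge\log(2/\delta)/(2\epsilon^2)\) rearranges to \(\sqrt{\log(2/\delta)/(2k_{\mathrm{nn}})}\le\epsilon\). So on the event of probability at least \(1-\delta\) from the previous step, \(r_\tau(x)\ge\hat r_\tau(x)-\epsilon\). Combining this with \(\hat r_\tau(x)\ge\rho_{\min}+\epsilon\) gives \(r_\tau(x)\ge\rho_{\min}\).

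A one-sided Hoeffding bound would already suffice here. It would even allow replacing \(\log(2/\delta)\) by \(\log(1/\delta)\), but I will keep the stated form.

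The only delicate point is the conditioning in the consequence. The event \(\hat r_\tau(x)\ge\rho_{\min}+\epsilon\) is itself random, so the claim should be read as follows: with probability at least \(1-\delta\), the implication \(\hat r_\tau(x)\ge\rho_{\min}+\epsilon\Rightarrow r_\tau(x)\ge\rho_{\min}\) holds. Equivalently, on the high-probability event, any \(x\) passing the empirical gate truly satisfies the population condition.

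I would state the result in that form to avoid claiming a conditional probability bound. I would also note that the i.i.d.\ modeling of neighbors is an assumption: in an actual \(k\)-NN construction the neighbors are not independent. This independence, rather than the computation itself, is the main potential objection to the argument.
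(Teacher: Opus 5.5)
Your proposal is correct and follows the paper's proof essentially step for step. The paper also writes \(\hat r_\tau(x)\) as an average of \(k_{\mathrm{nn}}\) Bernoulli indicators, applies the two-sided Hoeffding bound \(2\exp(-2k_{\mathrm{nn}}\epsilon^2)\), sets it equal to \(\delta\), and subtracts the concentration radius from \(\hat r_\tau(x)\). You go slightly further by explicitly using \(k_{\mathrm{nn}}\ge\log(2/\delta)/(2\epsilon^2)\) to bound the radius by \(\epsilon\), and by reading the consequence as an implication that holds on the high-probability event; both make precise steps the paper leaves implicit.
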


\noindent\textit{Proof.}
The empirical ratio \(\hat r_{\tau}(x)\) is an average of \(k_{\mathrm{nn}}\) Bernoulli indicators specifying whether a neighbor has the same pseudo-label and confidence at least \(\tau\). Hoeffding's inequality gives Eq.~\eqref{eq:structural_gate_concentration}. The second statement follows by subtracting the concentration radius from \(\hat r_{\tau}(x)\). 

\subsection{Learnability and Target-Risk Implications}
\label{subsec:learnability_risk}

Theorem~\ref{thm:bounded_noise_safe_subspace} becomes algorithmically meaningful when the pseudo-label noise inside the safe subspace falls into a learnable regime. Let
\begin{equation}
\bar{\eta}_{\tau}
=
\min\{1,\,1-\tau+\beta_{\tau}(\alpha)\}.\nonumber
\end{equation}
By Theorem~\ref{thm:bounded_noise_safe_subspace}, \(\bar{\eta}_{\tau}\) is a uniform upper bound of the instance-wise pseudo-label noise on \(\mathcal{H}_{\tau,\rho}\).

\begin{corollary}[Learnability on the Safe Subspace]
\label{cor:learnability_safe}
If
\begin{equation}
\tau-\beta_{\tau}(\alpha)>\frac{1}{2},
\label{eq:massart_condition_safe}
\end{equation}
then \(\bar{\eta}_{\tau}<1/2\), and the pseudo-labels on \(\mathcal{H}_{\tau,\rho}\) lie in the bounded-noise regime. Consequently, minimizing a classification-calibrated loss on pseudo-labeled samples from \(\mathcal{H}_{\tau,\rho}\) remains statistically learnable under standard noisy-label learning arguments \citep{angluin1988learning,natarajan2013learning}.
\end{corollary}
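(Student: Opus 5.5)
The plan is to obtain the corollary directly from the pointwise bound \eqref{eq:center_noise_bound} of Theorem~\ref{thm:bounded_noise_safe_subspace}, and then appeal to classical Massart/bounded-noise learnability results. The first step is purely arithmetic. Under \eqref{eq:massart_condition_safe} we have $1-\tau+\beta_{\tau}(\alpha) = 1-(\tau-\beta_{\tau}(\alpha)) < 1/2 < 1$. Hence the minimum in the definition of $\bar\eta_\tau$ is attained by its second argument, and $\bar\eta_\tau = 1-\tau+\beta_\tau(\alpha) < 1/2$. Theorem~\ref{thm:bounded_noise_safe_subspace} then gives $\eta(x)\le\bar\eta_\tau<1/2$ for every $x\in\mathcal{H}_{\tau,\rho}$. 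This is exactly the uniform Massart condition on the restricted domain. Equivalently, $p_T(\tilde y(x)\mid x)\ge 1-\bar\eta_\tau>1/2$, so on $\mathcal{H}_{\tau,\rho}$ the pseudo-label coincides with the target Bayes label $\arg\max_y p_T(y\mid x)$, with a margin of at least $1/2-\bar\eta_\tau$.

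The second step is to pass to learnability. Consider the conditional distribution $P_T(\cdot\mid x\in\mathcal{H}_{\tau,\rho})$, which we assume has positive mass. On it, the pseudo-labels are noisy observations of the Bayes-optimal labels $y^\star(x)$, each flipped with probability at most $\bar\eta_\tau<1/2$. The main observation is that the pseudo-label Bayes classifier and the target Bayes classifier agree on this region. So for any classification-calibrated surrogate $\ell$, the pseudo-label surrogate risk is minimized by a classifier with the same sign (argmax) as $y^\star$. Calibration, as in the cited argument of \citep{natarajan2013learning}, then gives the excess-risk transfer: driving the surrogate excess risk toward zero drives the 0-1 excess risk with respect to $y^\star$ toward zero. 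Massart noise also makes the conversion linear, with a factor of order $1/(1-2\bar\eta_\tau)$. In the binary case, standard PAC results under bounded noise \citep{angluin1988learning} provide sample complexity polynomial in $1/(1-2\bar\eta_\tau)$ for a finite-VC hypothesis class. For $C>1$ classes I would invoke the multiclass analogue, namely that argmax preservation plus calibration of multiclass losses such as cross-entropy suffices.

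The main obstacle is making the reduction precise. Instance-dependent noise (as opposed to class-conditional noise) is not covered by every noisy-label result. I would therefore phrase the conclusion only in terms of Bayes consistency on $\mathcal{H}_{\tau,\rho}$, which requires just the pointwise condition $\eta(x)<1/2$, rather than claiming unbiased loss correction. I would also note explicitly that the guarantee concerns the restricted distribution and says nothing about the complement $\mathcal{X}_T\setminus\mathcal{H}_{\tau,\rho}$. That complement is deferred to the risk decomposition as the uncertain-set term.
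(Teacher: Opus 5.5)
Your proposal is correct and follows essentially the paper's own route: the condition \(\tau-\beta_\tau(\alpha)>1/2\) forces \(\bar\eta_\tau=1-\tau+\beta_\tau(\alpha)<1/2\), and the pointwise bound \eqref{eq:center_noise_bound} then gives the Massart condition on \(\mathcal{H}_{\tau,\rho}\), which is exactly how the paper ends its proof of Theorem~\ref{thm:bounded_noise_safe_subspace}; the paper gives no separate proof of the learnability claim beyond citing standard noisy-label results. Your additional steps make that last step more careful than the paper: you show \(\tilde y(x)\) coincides with the target Bayes label on \(\mathcal{H}_{\tau,\rho}\) (in fact exactly, since \(\tilde y\) is deterministic in \(x\), so the only ``noise'' is relative to the random \(Y\)), and you replace the class-conditional machinery of \citep{natarajan2013learning} with a Bayes-consistency argument because the noise is instance-dependent.
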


Corollary~\ref{cor:learnability_safe} shows that reliable source-free adaptation does not require pseudo-label noise to be bounded everywhere on the target domain. It is sufficient that the bounded-noise condition holds on the confidence-consistent subset that actually drives hard pseudo-label supervision. The size of this subset, controlled in practice by \(\rho\), determines the coverage of reliable supervision and creates a quality-coverage trade-off.

The above bound can be improved in practice when pseudo-labels are produced by a diverse committee rather than a single source model. The following result formalizes the idealized case and also states its limitation.

\begin{corollary}[Multi-Expert Voting Bound]
\label{cor:voting_bound}
Suppose that the pseudo-label for each \(x\in\mathcal{H}_{\tau,\rho}\) is produced by majority voting among \(K_e\) conditionally independent experts, each having error at most \(\bar e<1/2\) on \(\mathcal{H}_{\tau,\rho}\). Then the voting error satisfies
\begin{equation}
\eta_{\mathrm{vote}}(x)
\le
\exp\!\left(-2K_e\left(\frac{1}{2}-\bar e\right)^2\right),
\qquad
\forall x\in\mathcal{H}_{\tau,\rho}.
\label{eq:voting_bound}
\end{equation}
\end{corollary}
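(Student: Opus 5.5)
Fix $x\in\mathcal{H}_{\tau,\rho}$ and let $y^\star$ be its true label. For $k=1,\dots,K_e$, let $Z_k=\mathbb{1}[\text{expert }k\text{ errs on }x]$. The plan is to show that the majority vote is wrong only if at least half of these indicators equal one, and then bound that event with Hoeffding's inequality. This is the same concentration tool used in the proof of Proposition~\ref{prop:finite_sample_structural_gate}.

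\textbf{Reduction to a tail event.} The pseudo-label $\tilde y_{\mathrm{vote}}(x)$ is wrong only if $y^\star$ fails to win the plurality. Fewer than half the experts being wrong means more than half vote for $y^\star$, so $y^\star$ wins strictly regardless of how the wrong votes are spread over the other $C-1$ classes. Hence
\begin{equation}
\eta_{\mathrm{vote}}(x)\le \Pr\Big[\tfrac{1}{K_e}\textstyle\sum_{k=1}^{K_e} Z_k\ge \tfrac12\Big].\nonumber
\end{equation}
This inclusion only gets looser if ties are counted as errors, so the bound holds for any tie-breaking rule.

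\textbf{Concentration.} By the conditional-independence hypothesis, $Z_1,\dots,Z_{K_e}$ are independent given $x$, with $\mathbb{E}[Z_k]\le \bar e$. Write $\bar Z=\frac{1}{K_e}\sum_k Z_k$. Since $\mathbb{E}[\bar Z]\le\bar e$, we have $\Pr[\bar Z\ge 1/2]\le \Pr[\bar Z-\mathbb{E}\bar Z\ge 1/2-\bar e]$. The gap $t=1/2-\bar e$ is positive because $\bar e<1/2$. The one-sided Hoeffding inequality for $[0,1]$-valued independent variables gives $\exp(-2K_e t^2)$, which is exactly Eq.~\eqref{eq:voting_bound}. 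The experts need not share a common error rate: Hoeffding only needs independence and boundedness, and the centering step uses only the upper bound $\bar e$ on each mean. The bound is uniform over $\mathcal{H}_{\tau,\rho}$ because $\bar e$ bounds each expert's error at every such $x$.

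\textbf{Main obstacle.} The computation itself is routine; the delicate part is interpreting the hypotheses. The independence must hold \emph{conditionally on $x$}. In the multiclass case, the reduction relies on a strict majority for $y^\star$, which is sufficient but not necessary for a correct plurality. I would state both points explicitly. I would also add a remark on the limitation: correlated source-trained experts violate independence, and then the bound degrades toward the single-expert bound $\bar e$. One natural choice is $\bar e=\bar\eta_\tau$ from Corollary~\ref{cor:learnability_safe}, which requires $\tau-\beta_\tau(\alpha)>1/2$ for each expert.
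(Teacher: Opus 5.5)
Your proof is correct and follows the paper's argument: error indicators $Z_k$, the observation that a wrong vote forces $\frac{1}{K_e}\sum_k Z_k\ge 1/2$, and one-sided Hoeffding with gap $1/2-\bar e$. Two steps the paper states without comment are handled more carefully in yours: you justify the multiclass and tie-breaking reduction explicitly, and you center at $\mathbb{E}\bar Z$ rather than at $\bar e$.
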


The conditional-independence assumption in Corollary~\ref{cor:voting_bound} is an idealization. In implementation, source experts trained from the same source domain may be correlated. Therefore, we do not rely on Eq.~\eqref{eq:voting_bound} as an unconditional guarantee. Instead, the algorithm uses ensemble consensus and predictive variance as observable diagnostics: only samples with high agreement and low variance are admitted into the semantic candidate set before structural verification.

Beyond learnability, the safe-subspace view yields a simple target-risk decomposition. Let
\begin{equation}
\pi_{\mathcal{H}}
=
\Pr_{x\sim P_T}\!\left[x\in\mathcal{H}_{\tau,\rho}\right],
\qquad
\pi_{\mathcal{U}}=1-\pi_{\mathcal{H}},\nonumber
\end{equation}
where \(\mathcal{U}=\mathcal{X}_T\setminus\mathcal{H}_{\tau,\rho}\) denotes the uncertain set. Define the pseudo-label risk on the safe subspace as
\begin{equation}
\widetilde{R}_{\mathcal{H}}(h)
=
\Pr_{x\sim P_T}\!\left[h(x)\neq \tilde{y}(x)\mid x\in\mathcal{H}_{\tau,\rho}\right],\nonumber
\end{equation}
and the clean conditional risk on the uncertain set as
\begin{equation}
R_{\mathcal{U}}(h)
=
\Pr_{(x,y)\sim P_T}\!\left[h(x)\neq y \mid x\in\mathcal{U}\right].\nonumber
\end{equation}

\begin{proposition}[Target-Risk Bound Under Safe-Subspace Decomposition]
\label{prop:target_risk_bound}
For any classifier \(h\), the target risk satisfies
\begin{equation}
R_T(h)
\le
\pi_{\mathcal{H}}
\Big(
\widetilde{R}_{\mathcal{H}}(h)+\bar{\eta}_{\tau}
\Big)
+
\pi_{\mathcal{U}}\,R_{\mathcal{U}}(h).
\label{eq:target_risk_bound_basic}
\end{equation}
Furthermore, if the uncertain-set risk is controlled by a regularizer-induced upper bound \(R_{\mathcal{U}}(h)\le \Gamma_{\mathcal{U}}(h)\), then
\begin{equation}
R_T(h)
\le
\pi_{\mathcal{H}}
\Big(
\widetilde{R}_{\mathcal{H}}(h)+\bar{\eta}_{\tau}
\Big)
+
\pi_{\mathcal{U}}\,\Gamma_{\mathcal{U}}(h).
\label{eq:target_risk_bound_reg}
\end{equation}
\end{proposition}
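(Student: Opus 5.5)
We split the target risk over the partition $\mathcal{X}_T=\mathcal{H}_{\tau,\rho}\cup\mathcal{U}$ using the law of total probability, and then control the safe-subspace part by a triangle inequality through the pseudo-label. First,
\begin{equation}
R_T(h)=\pi_{\mathcal{H}}\Pr\big[h(x)\neq y\mid x\in\mathcal{H}_{\tau,\rho}\big]+\pi_{\mathcal{U}}R_{\mathcal{U}}(h),\nonumber
\end{equation}
which holds exactly because the two events are disjoint and cover the target. If $\pi_{\mathcal{H}}=0$ or $\pi_{\mathcal{U}}=0$, the corresponding conditional term simply drops out. So the only nontrivial task is bounding the clean conditional risk on $\mathcal{H}_{\tau,\rho}$.

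For that term we use the pointwise inclusion of events
\begin{equation}
\{h(x)\neq y\}\subseteq\{h(x)\neq\tilde y(x)\}\cup\{\tilde y(x)\neq y\}.\nonumber
\end{equation}
This holds because if $h(x)=\tilde y(x)$ and $\tilde y(x)=y$, then $h(x)=y$. A union bound under the conditional law given $x\in\mathcal{H}_{\tau,\rho}$ then gives
\begin{equation}
\Pr[h\neq y\mid\mathcal{H}]\le\widetilde R_{\mathcal{H}}(h)+\Pr[\tilde y\neq y\mid\mathcal{H}].\nonumber
\end{equation}

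The second term is $\mathbb{E}[\eta(x)\mid x\in\mathcal{H}_{\tau,\rho}]$. This follows by conditioning on $x$ and noting that $\tilde y(x)$ is a deterministic function of $x$, so that $\Pr[\tilde y(x)\neq Y\mid x]=\eta(x)$. By Theorem~\ref{thm:bounded_noise_safe_subspace}, $\eta(x)\le 1-\tau+\beta_\tau(\alpha)$ for every $x\in\mathcal{H}_{\tau,\rho}$. Since trivially $\eta(x)\le 1$, we get $\eta(x)\le\bar\eta_\tau$ pointwise on $\mathcal{H}_{\tau,\rho}$, and hence the same bound holds for the conditional expectation.

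Substituting back yields Eq.~\eqref{eq:target_risk_bound_basic}. Eq.~\eqref{eq:target_risk_bound_reg} then follows immediately by replacing $R_{\mathcal{U}}(h)$ with its upper bound $\Gamma_{\mathcal{U}}(h)$, which is valid because $\pi_{\mathcal{U}}\ge 0$.

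The main point requiring care is a measure-theoretic subtlety. Theorem~\ref{thm:bounded_noise_safe_subspace} is stated pointwise on the set $\mathcal{H}_{\tau,\rho}\subseteq\mathcal{X}_T$, while the risks are defined under $P_T$. We therefore interpret membership $x\in\mathcal{H}_{\tau,\rho}$ as a measurable event determined by $x$, namely through $s(x)$ and $r_\tau(x)$ computed from the neighborhood structure, so that conditioning on it is legitimate. We also ensure that the pointwise bound is applied only to points actually in $\mathcal{H}_{\tau,\rho}$, so that the averaging step does not need Assumption 1 outside the confidence-qualified region.
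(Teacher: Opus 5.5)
Your proposal is correct and follows the paper's proof essentially step for step: you decompose $R_T(h)$ over $\mathcal{H}_{\tau,\rho}$ and $\mathcal{U}$, route the safe-subspace error through the pseudo-label via the event inclusion and a union bound, bound the noise term by $\bar{\eta}_{\tau}$ using Theorem~\ref{thm:bounded_noise_safe_subspace}, and then substitute $\Gamma_{\mathcal{U}}(h)$. Your extra details only make explicit what the paper leaves implicit: writing the noise term as a conditional average of $\eta(x)$, capping by $1$ to match the $\min$ in $\bar{\eta}_{\tau}$, and the measurability remark.
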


\noindent\textit{Proof.}
Decompose the target risk over \(\mathcal{H}_{\tau,\rho}\) and \(\mathcal{U}\). On \(\mathcal{H}_{\tau,\rho}\), the event \(h(x)\neq y\) is contained in the union of the events \(h(x)\neq\tilde y(x)\) and \(\tilde y(x)\neq y\). Taking conditional probabilities and using the bound \(\Pr[\tilde y(x)\neq y\mid x\in\mathcal{H}_{\tau,\rho}]\le\bar\eta_\tau\) gives Eq.~\eqref{eq:target_risk_bound_basic}. Replacing \(R_{\mathcal U}(h)\) by its regularized upper bound gives Eq.~\eqref{eq:target_risk_bound_reg}. 

Proposition~\ref{prop:target_risk_bound} reveals that the target risk is governed by three factors: the fitting error to reliable pseudo-labels on \(\mathcal{H}_{\tau,\rho}\), the intrinsic pseudo-label noise level \(\bar{\eta}_{\tau}\) of the safe subspace, and the residual risk on the uncertain set \(\mathcal{U}\). This result directly motivates the use of hard pseudo-label supervision only on \(\mathcal{H}_{\tau,\rho}\), while controlling uncertain samples through soft regularization rather than discarding them or assigning hard labels.

\subsection{Roles of the Bounding and Certification Factors}
\label{subsec:minimality_factors}

The preceding analysis identifies four quantities: confidence \(\tau\), restricted posterior discrepancy \(\beta_{\tau}(\alpha)\), neighborhood consistency \(\rho\), and geometric compactness \(L\bar r\). Their roles are different. The population noise bound in Eq.~\eqref{eq:center_noise_bound} is determined by \(\tau\) and \(\beta_{\tau}(\alpha)\): if confidence is low or the source-target posterior discrepancy is large, no bounded-noise guarantee can be obtained. By contrast, \(\rho\) and \(L\bar r\) are target-only certification factors. They do not create correctness by themselves; rather, they improve the reliability of the selected subset by excluding isolated high-confidence predictions and by ensuring that local neighborhood evidence is geometrically meaningful.

This distinction is important for a rigorous interpretation of the theory. We do not claim that all four quantities are mathematically necessary for a pointwise posterior bound once high source confidence and restricted posterior discrepancy already hold at the center point. Instead, we use \(\rho\) and \(L\bar r\) to bridge the gap between population assumptions and a practical label-free algorithm: they determine which target samples are sufficiently supported by the learned target manifold to be used for hard pseudo-label training.

\subsection{Implications for Algorithm Design}
\label{subsec:implications_algorithm_design}

The theoretical analysis prescribes the following design principles. First, the algorithm should increase the reliability of source-induced pseudo-labels by selecting high-confidence, low-variance predictions, corresponding to the terms \(\tau\) and \(\beta_{\tau}(\alpha)\). Second, it should learn a compact and smooth target representation so that neighborhood evidence is meaningful, corresponding to the geometric term \(L\bar r\). Third, it should enforce a local consistency gate \(\rho\) to avoid isolated overconfident errors. Fourth, it should handle the residual uncertain set \(\mathcal{U}\) through soft regularization, as required by Proposition~\ref{prop:target_risk_bound}.

Guided by this correspondence, we develop in the next section a selective-risk-guided SF-GDA framework with four modules. The first module uses a committee of source experts to construct reliable semantic candidates with high consensus and low predictive variance. The second module learns a target-intrinsic structural manifold to reduce geometric distortion and improve local smoothness. The third module identifies the safe subspace \(\mathcal{H}_{\tau,\rho}\) by enforcing consistency between semantic confidence and structural neighborhoods. The fourth module handles the remaining uncertain samples through soft regularization, thereby controlling residual transfer risk without propagating erroneous hard pseudo-labels.
\section{Methodology}
\label{sec:methodology}

\subsection{Overview}

Motivated by the selective-risk analysis in Section~\ref{subsec:implications_algorithm_design}, we propose Safe-Subspace Pseudo-Label Refinement (\method{}) for source-free graph domain adaptation. The framework follows a simple principle: hard pseudo-label supervision is used only on target graphs that pass both semantic and structural reliability checks, while the remaining target graphs are exploited through soft regularization. This design avoids full-domain self-training and directly matches the safe-subspace decomposition in Proposition~\ref{prop:target_risk_bound}. Operationally, \method{} implements this principle through multi-expert uncertainty quantification, target-intrinsic structure learning, dual-view nested pseudo-label refinement, and noise-tolerated regularization.

Let \(\mathcal{D}^t=\{G_j^t\}_{j=1}^{n_t}\) be the unlabeled target set. The source provider releases a committee of \(K_e\) pretrained source hypotheses
\(\mathcal{M}=\{f_S^k\}_{k=1}^{K_e}\), which may be obtained from different random seeds or heterogeneous graph encoders before the source data are discarded. During target adaptation, no source graph or source label is accessed. In the algorithmic implementation, \(\zeta\) is the empirical counterpart of the theoretical confidence threshold \(\tau\), and \(\rho_{\min}\) is the empirical neighborhood-consistency threshold used to construct \(\mathcal{H}_{\zeta,\rho}\). Unless otherwise stated, the adapted target classifier \(f_T\) is initialized from a primary source checkpoint and is the only model used at inference; auxiliary source experts are frozen and used only for reliability estimation. When \(K_e=1\), \method{} reduces to a single-expert variant by disabling the ensemble-variance gate.

\subsection{Module I: Multi-Expert Uncertainty Quantification}
\label{sec:module_I}

Corollary~\ref{cor:voting_bound} motivates the use of a diverse source committee, while also making clear that correlated experts cannot be treated as independent without verification. We therefore use the committee not only to average predictions but also to estimate predictive disagreement. For a target graph \(G_j^t\), the ensemble consensus is
\begin{equation}
\bar{p}_j
=
\frac{1}{K_e}\sum_{k=1}^{K_e}\sigma\big(f_S^k(G_j^t)\big),
\label{eq:consensus}
\end{equation}
where \(\sigma(\cdot)\) denotes the softmax function. The corresponding pseudo-label and confidence are
\begin{equation}
\hat y_j=\arg\max_{c\in\mathcal{Y}}\bar p_{j,c},
\qquad
s_j=\max_{c\in\mathcal{Y}}\bar p_{j,c}.\nonumber
\end{equation}
To reject samples whose apparent confidence is caused by one unstable expert rather than shared semantic evidence, we compute the predictive variance
\begin{equation}
u_j
=
\frac{1}{K_e}\sum_{k=1}^{K_e}
\left\|\sigma\big(f_S^k(G_j^t)\big)-\bar p_j\right\|_2^2.
\label{eq:variance}
\end{equation}
The semantic candidate set is then defined as
\begin{equation}
\mathcal{S}_{\mathrm{sem}}
=
\left\{
G_j^t\in\mathcal{D}^t
\;\middle|\;
 s_j\ge\zeta,
\; u_j\le \nu
\right\},
\label{eq:selection_criteria}
\end{equation}
where \(\zeta\) is the confidence threshold and \(\nu\) is the variance threshold. We set \(\nu\) by a target-side percentile rule, \(\nu=Q_{q_u}(\{u_j\}_{j=1}^{n_t})\), where \(Q_{q_u}\) is the \(q_u\)-th percentile of predictive variance on the unlabeled target set. This rule is label-free and avoids tuning \(\nu\) with target labels. This module operationalizes the confidence requirement in Theorem~\ref{thm:bounded_noise_safe_subspace}; only high-confidence and low-disagreement target graphs are allowed to enter the next verification stage.

\subsection{Module II: Target-Intrinsic Structure Learning}
\label{sec:module_II}

The structural certificates in Theorem~\ref{thm:bounded_noise_safe_subspace} require a target representation in which neighborhoods are meaningful and compact. Directly using the source feature space may be unreliable because the source encoder can preserve source-specific topology. We therefore introduce a target-intrinsic explicit branch \(g_{\psi}\), trained only with unlabeled target graphs, to provide the metric space used for neighborhood verification.

For each mini-batch \(\mathcal{B}\), we generate two stochastic graph views \(\tilde G_i^1,\tilde G_i^2\sim\mathcal{T}(G_i)\) using edge dropping, feature masking, and subgraph sampling. Their normalized embeddings are
\begin{equation}
z_i^v=\frac{g_{\psi}(\tilde G_i^v)}{\|g_{\psi}(\tilde G_i^v)\|_2},
\qquad v\in\{1,2\}.
\end{equation}
We train \(g_{\psi}\) with the graph contrastive objective
\begin{equation}
D_i^v
=
\sum_{m=1}^{|\mathcal{B}|}
\sum_{w=1}^{2}
\mathbb{I}[(m,w)\neq(i,v)] 
\exp\!\left(
\frac{\operatorname{sim}(z_i^v,z_m^w)}{\tau_g}
\right),\nonumber
\end{equation}
\begin{equation}
\mathcal{L}_{\mathrm{GCL}}
=
-\frac{1}{2|\mathcal{B}|}
\sum_{i=1}^{|\mathcal{B}|}
\sum_{v=1}^{2}
\log
\frac{
\exp\!\left(
\operatorname{sim}(z_i^v,z_i^{3-v})/\tau_g
\right)
}{
D_i^v
}.
\label{eq:gcl_loss}
\end{equation}
Here, \(D_i^v\) denotes the contrastive normalization term for anchor \(z_i^v\). By pulling augmented views of the same graph together and separating different graphs, this branch reduces local dispersion and improves the practical reliability of the neighborhood radius \(\bar r\) used in the safe-subspace construction.

\subsection{Module III: Dual-View Nested Pseudo-Label Refinement}
\label{sec:module_III}

Module I provides a semantic view from the source committee, whereas Module II provides a structural view from the target-intrinsic manifold. The nested refinement module intersects these two views to identify the empirical safe subspace. Specifically, we build a \(k_{\mathrm{nn}}\)-nearest-neighbor graph in the embedding space produced by \(g_{\psi}\). For each candidate \(G_j^t\in\mathcal{S}_{\mathrm{sem}}\), let \(\mathcal{N}_j\) be its \(k_{\mathrm{nn}}\) nearest target graphs. We compute the local confidence-consistency score
\begin{equation}
\rho_j
=
\frac{1}{|\mathcal{N}_j|}
\sum_{G_\ell^t\in\mathcal{N}_j}
\mathbb{I}
\left[
\hat y_\ell=\hat y_j,
\; s_\ell\ge\zeta,
\; u_\ell\le\nu
\right].
\label{eq:consistency_score}
\end{equation}
The safe subspace used for hard pseudo-label supervision is
\begin{equation}
\mathcal{H}_{\zeta,\rho}
=
\left\{
G_j^t\in\mathcal{S}_{\mathrm{sem}}
\;\middle|\;
\rho_j\ge\rho_{\min}
\right\},
\label{eq:safe_subspace_final}
\end{equation}
and the uncertain set is \(\mathcal{U}=\mathcal{D}^t\setminus\mathcal{H}_{\zeta,\rho}\). This selection rule implements the safe-subspace definition at the sample level and is supported by the finite-sample concentration result in Proposition~\ref{prop:finite_sample_structural_gate}. It does not assume that neighborhood consistency alone proves correctness; rather, it uses local consistency as a target-only certificate that screens out isolated high-confidence predictions before they influence model updates.

\subsection{Module IV: Noise-Tolerated Regularization}
\label{sec:module_IV}

The uncertain set \(\mathcal{U}\) still contains useful information about the target distribution, but Theorem~\ref{thm:bounded_noise_safe_subspace} does not justify assigning hard pseudo-labels to these samples. We therefore regularize them softly. Let \(f_T\) be the target model initialized from the primary source checkpoint, while auxiliary experts remain frozen for reliability estimation, and let
\begin{equation}
p_j=\sigma\big(f_T(G_j^t)\big)
\end{equation}
be its current target prediction. For uncertain samples, we minimize
\begin{equation}
\mathcal{L}_{\mathrm{reg}}
=
\frac{1}{|\mathcal{U}|}
\sum_{G_j^t\in\mathcal{U}}
\left[
\mathcal{H}(p_j)
+
\lambda_{\mathrm{kl}}
D_{\mathrm{KL}}\big(\operatorname{sg}(\bar p_j)\,\|\,p_j\big)
\right],
\label{eq:reg_loss}
\end{equation}
where \(\mathcal{H}(\cdot)\) denotes entropy, \(D_{\mathrm{KL}}\) is the Kullback-Leibler divergence, and \(\operatorname{sg}(\cdot)\) stops gradients through the source ensemble anchor. The entropy term encourages low-density decision boundaries on the target domain, while the KL term prevents the target model from drifting arbitrarily far from source knowledge. Since no hard label is imposed on \(\mathcal{U}\), this module controls the residual term \(\Gamma_{\mathcal U}(h)\) in Proposition~\ref{prop:target_risk_bound} without propagating unverified pseudo-label errors.

\subsection{Overall Optimization and Training Protocol}
\label{sec:learning_framework}

For a mini-batch \(\mathcal{B}\!\subset\!\mathcal{D}^t\), define \(\mathcal{B}_{\mathcal H}=\mathcal{B}\cap\mathcal{H}_{\zeta,\rho}\) and \(\mathcal{B}_{\mathcal U}=\mathcal{B}\cap\mathcal{U}\). The hard-supervision loss on the safe subspace is
\begin{equation}
\mathcal{L}_{\mathcal H}
=
\frac{1}{|\mathcal{B}_{\mathcal H}|}
\sum_{G_j^t\in\mathcal{B}_{\mathcal H}}
\mathcal{L}_{\mathrm{ce}}(p_j,\hat y_j),
\label{eq:safe_ce_loss}
\end{equation}
with \(\mathcal{L}_{\mathcal H}=0\) when \(\mathcal{B}_{\mathcal H}\) is empty. The total objective is
\begin{equation}
\mathcal{L}_{\mathrm{total}}
=
\mathcal{L}_{\mathcal H}
+
\lambda_{\mathrm{reg}}\mathcal{L}_{\mathrm{reg}}(\mathcal{B}_{\mathcal U})
+
\lambda_{\mathrm{struct}}\mathcal{L}_{\mathrm{GCL}}(\mathcal{B}).
\label{eq:total_loss}
\end{equation}
We optimize Eq.~\eqref{eq:total_loss} in an alternating manner. At the beginning of each epoch, \method{} updates ensemble predictions, target structural embeddings, nearest-neighbor relations, and the safe subspace. It then updates \(f_T\) and \(g_{\psi}\) using mini-batch stochastic optimization. The details of the algorithm are introduced in Algorithm~\ref{alg:s2plr}.

\begin{table*}[t]
\small
\centering
\caption{Graph classification results (in \%) under node and edge density domain shifts on Mutagenicity, and feature domain shifts on DD, PROTEINS, BZR, BZR\_MD, COX2, and COX2\_MD. For convenience, PROTEINS, DD, COX2, COX2\_MD, BZR, and BZR\_MD are abbreviated as P, D, C, CM, B, and BM, respectively. \textbf{Bold} results indicate the best performance.}

\resizebox{\textwidth}{!}{
\setlength{\tabcolsep}{4pt}
\begin{tabular}{
>{\centering\arraybackslash}m{1.8cm}|
*{3}{>{\centering\arraybackslash}m{1.25cm}}|
*{3}{>{\centering\arraybackslash}m{1.25cm}}|
*{6}{>{\centering\arraybackslash}m{1.25cm}}
}
\toprule
\multirow{2}{*}[-0.25em]{\textbf{Methods}}
& \multicolumn{3}{c|}{\textbf{Node Shift}}
& \multicolumn{3}{c|}{\textbf{Edge Shift}}
& \multicolumn{6}{c}{\textbf{Feature Shift}} \\
\cmidrule(lr){2-4} \cmidrule(lr){5-7} \cmidrule(lr){8-13}
& M0$\rightarrow$M1 & M0$\rightarrow$M2 & M0$\rightarrow$M3
& M0$\rightarrow$M1 & M0$\rightarrow$M2 & M0$\rightarrow$M3
& P$\rightarrow$D & D$\rightarrow$P & C$\rightarrow$CM & CM$\rightarrow$C & B$\rightarrow$BM & BM$\rightarrow$B \\
\midrule

WL subtree
& 34.3 & 40.4 & 52.7
& 34.4 & 47.6 & 52.7
& 43.0 & 42.2 & 53.1 & 58.2 & 51.3 & 44.0 \\

GCN
& 64.1\scriptsize{$\pm$1.4} & 65.5\scriptsize{$\pm$2.0} & 56.9\scriptsize{$\pm$2.1}
& 66.3\scriptsize{$\pm$1.7} & 63.6\scriptsize{$\pm$1.4} & 56.0\scriptsize{$\pm$1.4}
& 48.9\scriptsize{$\pm$2.0} & 60.9\scriptsize{$\pm$2.3}
& 51.0\scriptsize{$\pm$1.8} & 66.9\scriptsize{$\pm$1.8}
& 48.7\scriptsize{$\pm$2.0} & 78.8\scriptsize{$\pm$1.7} \\

GIN
& 66.5\scriptsize{$\pm$2.1} & 52.0\scriptsize{$\pm$1.7} & 53.7\scriptsize{$\pm$1.7}
& 67.1\scriptsize{$\pm$1.7} & 54.2\scriptsize{$\pm$2.6} & 55.4\scriptsize{$\pm$1.9}
& 57.3\scriptsize{$\pm$2.2} & 61.9\scriptsize{$\pm$1.9}
& 53.8\scriptsize{$\pm$2.5} & 55.6\scriptsize{$\pm$2.0}
& 49.9\scriptsize{$\pm$2.4} & 79.2\scriptsize{$\pm$2.8} \\

GMT
& 65.7\scriptsize{$\pm$1.8} & 62.1\scriptsize{$\pm$2.1} & 59.0\scriptsize{$\pm$2.0}
& 67.9\scriptsize{$\pm$1.3} & 61.5\scriptsize{$\pm$1.8} & 58.2\scriptsize{$\pm$2.4}
& 59.5\scriptsize{$\pm$2.5} & 50.7\scriptsize{$\pm$2.2}
& 49.3\scriptsize{$\pm$1.8} & 58.2\scriptsize{$\pm$2.0}
& 50.2\scriptsize{$\pm$2.3} & 74.4\scriptsize{$\pm$1.8} \\

CIN
& 65.1\scriptsize{$\pm$1.7} & 66.0\scriptsize{$\pm$1.7} & 55.2\scriptsize{$\pm$1.5}
& 66.3\scriptsize{$\pm$1.8} & 60.8\scriptsize{$\pm$1.7} & 55.8\scriptsize{$\pm$2.4}
& 59.1\scriptsize{$\pm$2.6} & 58.0\scriptsize{$\pm$2.7}
& 51.2\scriptsize{$\pm$2.0} & 55.6\scriptsize{$\pm$1.5}
& 49.2\scriptsize{$\pm$1.4} & 74.2\scriptsize{$\pm$1.9} \\

PathNN
& 70.2\scriptsize{$\pm$1.5} & 67.1\scriptsize{$\pm$2.0} & 58.0\scriptsize{$\pm$1.9}
& 68.9\scriptsize{$\pm$1.9} & 62.9\scriptsize{$\pm$1.7} & 58.1\scriptsize{$\pm$1.6}
& 57.9\scriptsize{$\pm$1.8} & 53.8\scriptsize{$\pm$3.3}
& 49.8\scriptsize{$\pm$1.7} & 66.9\scriptsize{$\pm$2.5}
& 50.3\scriptsize{$\pm$2.3} & 75.3\scriptsize{$\pm$2.2} \\

\midrule

SFDA\_LLN
& 70.7\scriptsize{$\pm$1.6} & 68.1\scriptsize{$\pm$1.2} & 60.4\scriptsize{$\pm$1.3}
& 67.7\scriptsize{$\pm$1.5} & 66.5\scriptsize{$\pm$2.6} & 58.8\scriptsize{$\pm$2.1}
& 59.2\scriptsize{$\pm$1.1} & 59.9\scriptsize{$\pm$1.5}
& 56.9\scriptsize{$\pm$1.9} & 74.2\scriptsize{$\pm$2.0}
& 53.5\scriptsize{$\pm$2.2} & 77.3\scriptsize{$\pm$1.8} \\

SF(DA)$^2$
& 72.0\scriptsize{$\pm$1.5} & 69.7\scriptsize{$\pm$1.9} & 62.0\scriptsize{$\pm$1.8}
& 71.7\scriptsize{$\pm$1.9} & 68.8\scriptsize{$\pm$2.3} & 61.9\scriptsize{$\pm$2.0}
& 62.2\scriptsize{$\pm$1.7} & 63.2\scriptsize{$\pm$1.8}
& 51.7\scriptsize{$\pm$2.3} & 74.6\scriptsize{$\pm$2.6}
& 48.9\scriptsize{$\pm$1.4} & 78.8\scriptsize{$\pm$2.0} \\

NVC\_LLN
& 71.4\scriptsize{$\pm$1.3} & 67.8\scriptsize{$\pm$1.1} & 62.1\scriptsize{$\pm$1.5}
& 70.4\scriptsize{$\pm$2.2} & 67.0\scriptsize{$\pm$2.2} & 62.2\scriptsize{$\pm$1.7}
& 59.2\scriptsize{$\pm$2.1} & 60.0\scriptsize{$\pm$1.8}
& 56.6\scriptsize{$\pm$3.1} & 76.7\scriptsize{$\pm$2.0}
& 53.8\scriptsize{$\pm$1.8} & 78.2\scriptsize{$\pm$2.0} \\

Ucon\_SFDA & 73.2\scriptsize{$\pm$0.8}
& 69.6\scriptsize{$\pm$2.1}
& 62.9\scriptsize{$\pm$3.2} & 74.1\scriptsize{$\pm$1.1}
& 68.2\scriptsize{$\pm$1.2}
& 63.5\scriptsize{$\pm$3.7} & 64.3\scriptsize{$\pm$0.7} & \textbf{66.0\scriptsize{$\pm$2.1}} & 51.2\scriptsize{$\pm$1.9} & 78.3\scriptsize{$\pm$1.7} & 53.1\scriptsize{$\pm$2.5} & 77.9\scriptsize{$\pm$2.0} \\

\midrule

SOGA
& 73.5\scriptsize{$\pm$1.3} & 69.4\scriptsize{$\pm$2.5} & 63.1\scriptsize{$\pm$2.1}
& 72.3\scriptsize{$\pm$2.2} & 69.8\scriptsize{$\pm$1.9} & 63.1\scriptsize{$\pm$1.8}
& 64.4\scriptsize{$\pm$1.3} & 65.4\scriptsize{$\pm$1.8}
& 51.5\scriptsize{$\pm$2.0} & 76.9\scriptsize{$\pm$2.5}
& 53.3\scriptsize{$\pm$3.2} & 78.9\scriptsize{$\pm$2.3} \\

GraphCTA
& 73.7\scriptsize{$\pm$1.8} & 70.7\scriptsize{$\pm$1.5} & 65.2\scriptsize{$\pm$1.7}
& 72.0\scriptsize{$\pm$2.2} & \textbf{70.5\scriptsize{$\pm$1.6}} & 62.8\scriptsize{$\pm$1.8}
& 60.6\scriptsize{$\pm$1.9} & 61.7\scriptsize{$\pm$1.8}
& 56.8\scriptsize{$\pm$3.5} & 77.3\scriptsize{$\pm$2.0}
& 54.2\scriptsize{$\pm$2.7} & 79.0\scriptsize{$\pm$2.6} \\

GALA
& 75.1\scriptsize{$\pm$2.1} & 70.8\scriptsize{$\pm$1.6} & 64.0\scriptsize{$\pm$2.1}
& 72.2\scriptsize{$\pm$1.2} & 70.3\scriptsize{$\pm$1.7} & 63.4\scriptsize{$\pm$1.8}
& 65.6\scriptsize{$\pm$1.3} & 66.0\scriptsize{$\pm$2.5}
& \textbf{61.8\scriptsize{$\pm$2.7}} & 77.0\scriptsize{$\pm$2.6}
& 57.4\scriptsize{$\pm$2.3} & 78.8\scriptsize{$\pm$1.8} \\

GraphATA
& 75.2\scriptsize{$\pm$2.1}
& 69.9\scriptsize{$\pm$1.3}
& 62.1\scriptsize{$\pm$1.0} & 74.4\scriptsize{$\pm$1.3}
& 69.0\scriptsize{$\pm$0.9}
& 61.9\scriptsize{$\pm$2.2} & 64.8\scriptsize{$\pm$1.5} & 63.7\scriptsize{$\pm$2.1} & 59.2\scriptsize{$\pm$2.0} & 77.9\scriptsize{$\pm$2.1} & 55.1\scriptsize{$\pm$1.3} & 78.0\scriptsize{$\pm$3.6} \\

\midrule

\method{}
& \textbf{77.1\scriptsize{$\pm$0.5}} & \textbf{71.6\scriptsize{$\pm$2.0}} & \textbf{65.8\scriptsize{$\pm$1.1}}
& \textbf{75.8\scriptsize{$\pm$2.1}} & 69.9\scriptsize{$\pm$1.6} & \textbf{66.2\scriptsize{$\pm$1.4}}
& \textbf{66.3\scriptsize{$\pm$1.9}} & 61.8\scriptsize{$\pm$2.2}
& 59.1\scriptsize{$\pm$1.5} & \textbf{78.6\scriptsize{$\pm$0.9}}
& \textbf{58.3\scriptsize{$\pm$1.8}} & \textbf{79.3\scriptsize{$\pm$0.8}} \\

\bottomrule
\end{tabular}}
\label{tab:mutag_feature_combined}

\end{table*}

\begin{table*}[t]
\small
\centering
\caption{Image classification results (in \%) on MNIST and CIFAR-10 under edge density domain shifts. \textbf{Bold} results indicate the best performance.}
\resizebox{\textwidth}{!}{
\begin{tabular}{
>{\centering\arraybackslash}m{1.8cm}|
*{6}{>{\centering\arraybackslash}m{1.25cm}}|
*{6}{>{\centering\arraybackslash}m{1.25cm}}
}
\toprule
\multirow{2}{*}[-0.25em]{\textbf{Methods}}
& \multicolumn{6}{c|}{\textbf{MNIST}}
& \multicolumn{6}{c}{\textbf{CIFAR-10}} \\
\cmidrule(lr){2-7} \cmidrule(lr){8-13}
& S0$\rightarrow$S1 & S1$\rightarrow$S0 & S0$\rightarrow$S2
& S2$\rightarrow$S0 & S1$\rightarrow$S2 & S2$\rightarrow$S1
& C0$\rightarrow$C1 & C1$\rightarrow$C0 & C0$\rightarrow$C2
& C2$\rightarrow$C0 & C1$\rightarrow$C2 & C2$\rightarrow$C1 \\
\midrule

WL subtree
& 29.5 & 15.8 & 9.5 & 8.7 & 18.0 & 13.9
& 10.1 & 10.2 & 9.9 & 10.0 & 9.9 & 10.2 \\

GCN
& 82.4\scriptsize{$\pm$0.3} & 84.7\scriptsize{$\pm$0.3}
& 60.9\scriptsize{$\pm$1.9} & 70.5\scriptsize{$\pm$4.0}
& 81.5\scriptsize{$\pm$1.0} & 83.2\scriptsize{$\pm$0.9}
& 44.9\scriptsize{$\pm$0.4} & 46.5\scriptsize{$\pm$0.2}
& 42.0\scriptsize{$\pm$1.2} & 43.0\scriptsize{$\pm$1.8}
& 44.1\scriptsize{$\pm$0.3} & 45.2\scriptsize{$\pm$0.3} \\

GIN
& 86.2\scriptsize{$\pm$1.5} & 79.8\scriptsize{$\pm$2.3}
& 81.4\scriptsize{$\pm$2.6} & 83.9\scriptsize{$\pm$2.1}
& 83.2\scriptsize{$\pm$1.6} & 84.8\scriptsize{$\pm$2.2}
& 44.4\scriptsize{$\pm$2.2} & 43.1\scriptsize{$\pm$1.1}
& 43.3\scriptsize{$\pm$2.1} & 44.7\scriptsize{$\pm$1.7}
& 42.4\scriptsize{$\pm$1.1} & 42.0\scriptsize{$\pm$2.8} \\

GMT
& 83.3\scriptsize{$\pm$1.3} & 83.5\scriptsize{$\pm$1.4}
& 58.2\scriptsize{$\pm$0.8} & 68.3\scriptsize{$\pm$2.4}
& 80.9\scriptsize{$\pm$1.4} & 84.6\scriptsize{$\pm$1.1}
& 52.2\scriptsize{$\pm$0.4} & 53.6\scriptsize{$\pm$0.1}
& 44.1\scriptsize{$\pm$2.7} & 50.0\scriptsize{$\pm$7.6}
& 49.6\scriptsize{$\pm$0.7} & 50.0\scriptsize{$\pm$0.9} \\

CIN
& 85.8\scriptsize{$\pm$2.0} & 86.4\scriptsize{$\pm$1.8}
& 82.2\scriptsize{$\pm$1.5} & 82.8\scriptsize{$\pm$1.5}
& 86.5\scriptsize{$\pm$1.7} & 86.3\scriptsize{$\pm$2.3}
& 51.0\scriptsize{$\pm$0.5} & 55.4\scriptsize{$\pm$1.9}
& 48.1\scriptsize{$\pm$3.0} & 51.9\scriptsize{$\pm$0.8}
& 47.1\scriptsize{$\pm$0.4} & 47.2\scriptsize{$\pm$0.6} \\

PathNN
& 80.2\scriptsize{$\pm$2.8} & 87.9\scriptsize{$\pm$1.4}
& 39.3\scriptsize{$\pm$2.0} & 53.7\scriptsize{$\pm$1.6}
& 79.9\scriptsize{$\pm$1.7} & 89.6\scriptsize{$\pm$2.5}
& 43.7\scriptsize{$\pm$2.0} & 46.5\scriptsize{$\pm$1.5}
& 47.6\scriptsize{$\pm$2.3} & 52.1\scriptsize{$\pm$1.6}
& 44.5\scriptsize{$\pm$2.1} & 47.7\scriptsize{$\pm$2.0} \\

\midrule

SFDA\_LLN
& 93.3\scriptsize{$\pm$0.3} & 92.6\scriptsize{$\pm$1.5}
& 89.7\scriptsize{$\pm$4.1} & 88.3\scriptsize{$\pm$2.0}
& 92.7\scriptsize{$\pm$1.5} & 92.1\scriptsize{$\pm$1.3}
& 56.5\scriptsize{$\pm$3.4} & 55.4\scriptsize{$\pm$1.2}
& 55.2\scriptsize{$\pm$2.0} & 53.8\scriptsize{$\pm$1.5}
& 58.1\scriptsize{$\pm$0.9} & 56.7\scriptsize{$\pm$1.5} \\

SF(DA)$^2$
& 93.7\scriptsize{$\pm$1.0} & 93.8\scriptsize{$\pm$1.3}
& 90.6\scriptsize{$\pm$0.7} & 90.1\scriptsize{$\pm$1.3}
& 93.1\scriptsize{$\pm$1.2} & 93.3\scriptsize{$\pm$1.1}
& 56.7\scriptsize{$\pm$1.4} & 56.4\scriptsize{$\pm$1.3}
& 55.1\scriptsize{$\pm$0.8} & 53.3\scriptsize{$\pm$1.4}
& 57.7\scriptsize{$\pm$1.4} & 57.4\scriptsize{$\pm$1.8} \\

NVC\_LLN
& 93.4\scriptsize{$\pm$1.3} & 93.2\scriptsize{$\pm$1.0}
& 88.9\scriptsize{$\pm$4.8} & 83.0\scriptsize{$\pm$1.9}
& 93.2\scriptsize{$\pm$1.2} & 92.4\scriptsize{$\pm$1.5}
& 56.9\scriptsize{$\pm$1.5} & 55.9\scriptsize{$\pm$1.2}
& 55.5\scriptsize{$\pm$1.4} & 54.1\scriptsize{$\pm$2.8}
& 57.8\scriptsize{$\pm$1.3} & 57.8\scriptsize{$\pm$1.0} \\

Ucon\_SFDA & 93.8\scriptsize{$\pm$1.7} & 93.8\scriptsize{$\pm$0.8} & 88.5\scriptsize{$\pm$0.9} & 90.9\scriptsize{$\pm$2.2} & 92.9\scriptsize{$\pm$0.7} & 93.8\scriptsize{$\pm$1.4} &
57.3\scriptsize{$\pm$1.7} & 56.1\scriptsize{$\pm$2.2} & 56.6\scriptsize{$\pm$1.6} & 54.4\scriptsize{$\pm$1.5} & 58.3\scriptsize{$\pm$0.8} & 57.5\scriptsize{$\pm$1.3} \\

\midrule

SOGA
& 93.9\scriptsize{$\pm$1.1} & 94.3\scriptsize{$\pm$1.7}
& 91.5\scriptsize{$\pm$1.3} & 90.1\scriptsize{$\pm$2.0}
& 93.2\scriptsize{$\pm$1.1} & 93.8\scriptsize{$\pm$1.0}
& 57.6\scriptsize{$\pm$1.1} & 57.0\scriptsize{$\pm$1.3}
& 56.3\scriptsize{$\pm$1.7} & 56.8\scriptsize{$\pm$1.4}
& 57.9\scriptsize{$\pm$0.9} & 58.1\scriptsize{$\pm$2.2} \\

GraphCTA
& 93.8\scriptsize{$\pm$1.0} & 94.5\scriptsize{$\pm$2.1}
& 91.1\scriptsize{$\pm$0.9} & 88.2\scriptsize{$\pm$1.7}
& 94.1\scriptsize{$\pm$2.0} & 93.5\scriptsize{$\pm$1.4}
& 57.9\scriptsize{$\pm$0.9} & 57.4\scriptsize{$\pm$0.8}
& 57.0\scriptsize{$\pm$1.5} & 56.7\scriptsize{$\pm$2.2}
& 57.7\scriptsize{$\pm$1.1} & 58.4\scriptsize{$\pm$1.3} \\

GALA
& 94.1\scriptsize{$\pm$1.7} & 94.9\scriptsize{$\pm$0.9}
& 91.8\scriptsize{$\pm$1.1} & 89.2\scriptsize{$\pm$0.6}
& 93.9\scriptsize{$\pm$0.8} & 94.0\scriptsize{$\pm$0.7}
& 58.1\scriptsize{$\pm$1.2} & 57.6\scriptsize{$\pm$0.8}
& 57.4\scriptsize{$\pm$0.7} & 56.9\scriptsize{$\pm$2.2}
& 58.3\scriptsize{$\pm$1.3} & 58.8\scriptsize{$\pm$1.5} \\

GraphATA & 93.2\scriptsize{$\pm$2.1} & 94.5\scriptsize{$\pm$1.0} & 91.7\scriptsize{$\pm$1.5} & 89.5\scriptsize{$\pm$1.7} & 93.3\scriptsize{$\pm$2.2} & 94.1\scriptsize{$\pm$0.9} & 57.2\scriptsize{$\pm$1.3} & 56.7\scriptsize{$\pm$2.8} & \textbf{58.8\scriptsize{$\pm$1.9}} & 54.4\scriptsize{$\pm$1.6} & \textbf{59.3\scriptsize{$\pm$1.7}} & 57.6\scriptsize{$\pm$2.5} \\

\midrule

\method{}
& \textbf{94.7\scriptsize{$\pm$1.4}} & \textbf{95.8\scriptsize{$\pm$0.3}}
& \textbf{92.5\scriptsize{$\pm$1.3}} & \textbf{91.6\scriptsize{$\pm$1.0}}
& \textbf{94.3\scriptsize{$\pm$0.6}} & \textbf{94.5\scriptsize{$\pm$0.8}}
& \textbf{58.7\scriptsize{$\pm$1.1}} & \textbf{57.8\scriptsize{$\pm$1.4}}
& 56.4\scriptsize{$\pm$0.7} & \textbf{57.9\scriptsize{$\pm$2.2}}
& 58.0\scriptsize{$\pm$1.2} & \textbf{59.0\scriptsize{$\pm$2.0}} \\

\bottomrule
\end{tabular}}
\vspace{-0.5cm}
\label{tab:mnist_cifar10_edge}

\end{table*}

\subsection{Complexity Analysis}
\label{sec:complexity}

The additional cost of \method{} comes from ensemble inference, target contrastive learning, and nearest-neighbor construction. Ensemble inference for reliability estimation costs \(\mathcal{O}(K_e n_t C)\) after the forward representations are computed, where \(C\) is the number of classes. Unless an ensemble variant is explicitly reported, test-time prediction uses only the adapted target model \(f_T\), so the default inference cost is that of a single graph classifier. Exact nearest-neighbor construction in a \(d\)-dimensional embedding space costs \(\mathcal{O}(n_t^2 d)\), and can be replaced by approximate search for large target sets. The contrastive branch has the same order of graph encoder cost as a standard GNN forward pass on augmented views. Since source graphs are never loaded during adaptation, the memory cost is dominated by target mini-batches, cached target embeddings, and the released source checkpoints.

\section{Experiments}

\subsection{Experimental Settings}

\noindent\textbf{Datasets.} We evaluate \method{} on image-derived graph benchmarks, i.e., MNIST~\citep{lecun2002gradient} and CIFAR10~\citep{krizhevsky2009learning}, and real-world graph benchmarks from TUDataset\footnote{\url{https://chrsmrrs.github.io/datasets/}} and OGB\footnote{\url{https://ogb.stanford.edu/}}. Following~\citep{yin2023coco,luo2024gala,yin2025dream}, we consider two types of domain shifts. 
For structural domain shifts, MNIST, CIFAR10, DD, Mutagenicity, NCI1, FRANKENSTEIN, and ogbg-molhiv are partitioned into quantile-based subdomains according to the number of nodes $|V|$ or edges $|E|$, and each ordered pair of distinct subdomains is treated as a source-free transfer task. 
For feature domain shifts, we evaluate PROTEINS$\leftrightarrow$DD, COX2$\leftrightarrow$COX2\_MD, and BZR$\leftrightarrow$BZR\_MD, where source and target domains share the same label space but differ in node-feature distributions. 
More details of these datasets are provided in Appendix~\ref{sec:dataset}.

\noindent\textbf{Baselines.}
We compare \method{} with three groups of baselines. The first group consists of source-only graph learning methods, including the graph kernel WL~\citep{shervashidze2011weisfeiler} and representative graph neural networks, i.e., GCN~\citep{kipf2022semi}, GIN~\citep{xu2018powerful}, CIN~\citep{bodnar2021weisfeiler}, GMT~\citep{baek2021accurate}, and PathNN~\citep{michel2023path}. These methods are trained on the labeled source domain and directly evaluated on the target domain without source-free adaptation. The second group includes general source-free domain adaptation methods, namely SFDA\_LLN~\citep{yi2023sfda_noisy}, SF(DA)$^2$~\citep{hwang2024sf}, NVC\_LLN~\citep{xu2025unraveling}, and Ucon\_SFDA~\citep{xu2025revisiting}. The third group contains graph-specific source-free domain adaptation methods, including SOGA~\citep{mao2024sourcefreegraph}, GraphCTA~\citep{zhang2024collaborate}, GALA~\citep{luo2024gala}, and GraphATA~\citep{zhang2025aggregate}. More details of baselines are provided in Appendix~\ref{app:baseline_protocol}.

\begin{figure*}[t]
    \centering

    \begin{subfigure}[t]{0.19\textwidth}
        \centering
        \includegraphics[width=\linewidth]{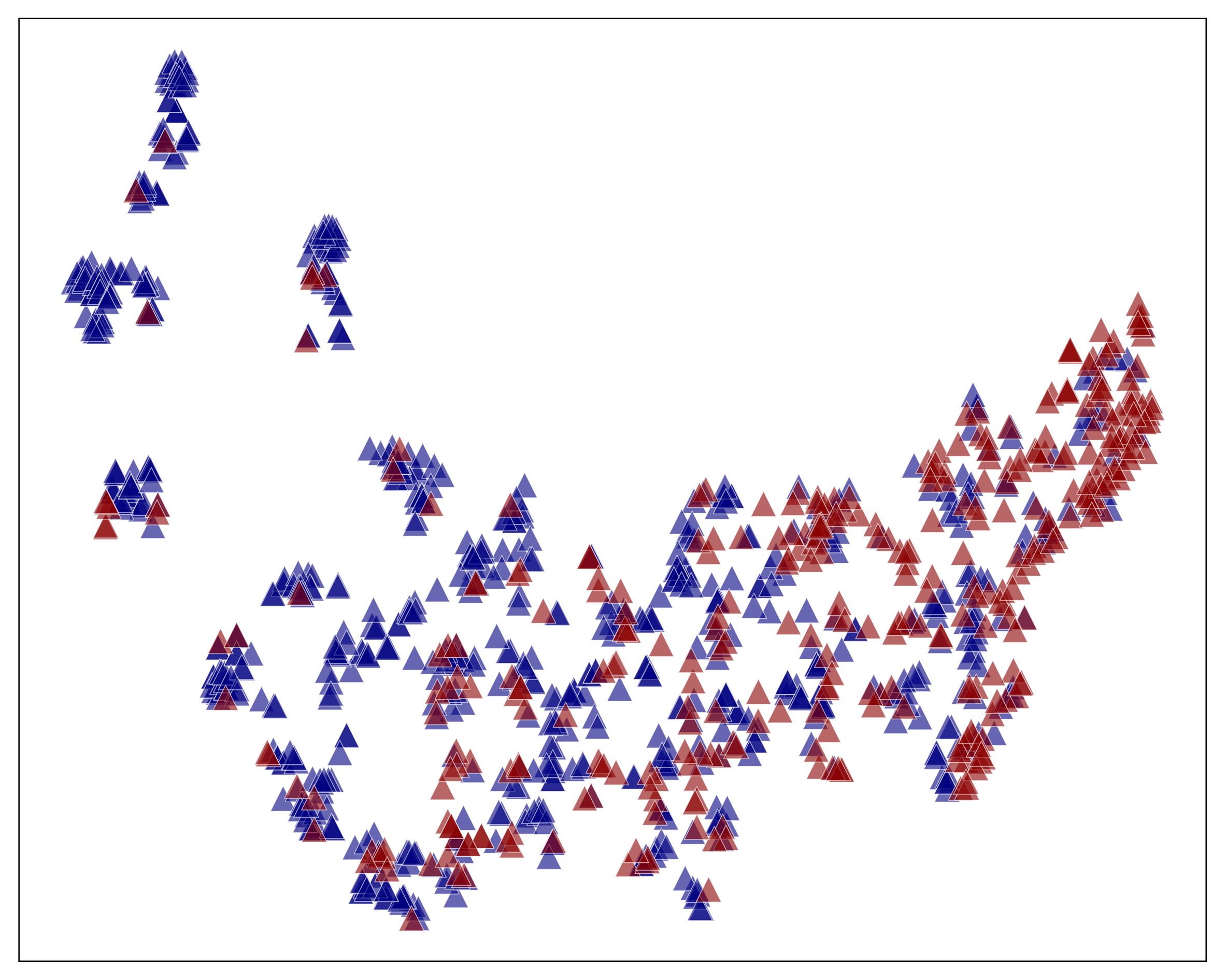}
        \caption{SF(DA)$^2$}
    \end{subfigure}
    \hfill
    \begin{subfigure}[t]{0.19\textwidth}
        \centering
        \includegraphics[width=\linewidth]{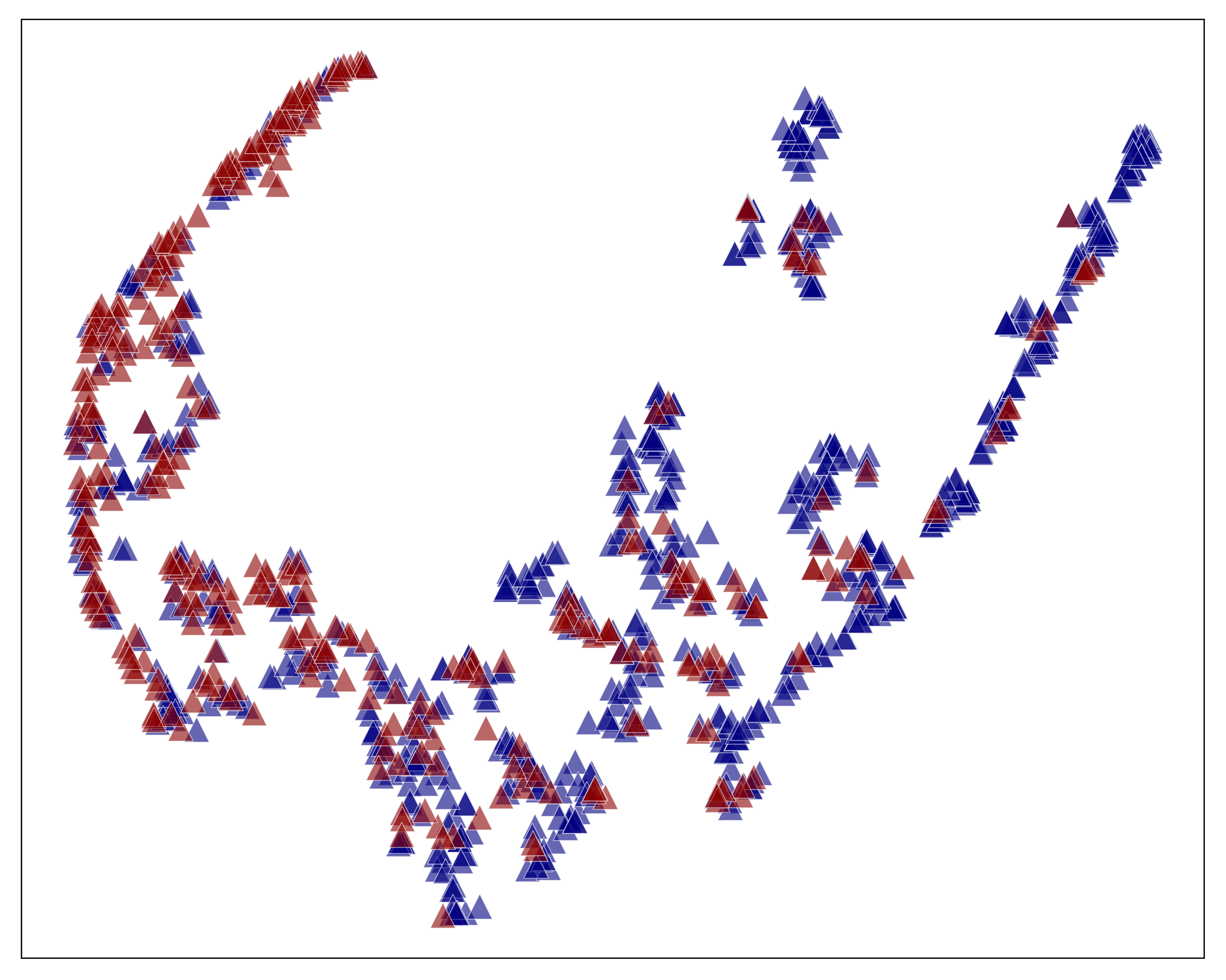}
        \caption{NVC\_LLN}
    \end{subfigure}
    \hfill
    \begin{subfigure}[t]{0.19\textwidth}
        \centering
        \includegraphics[width=\linewidth]{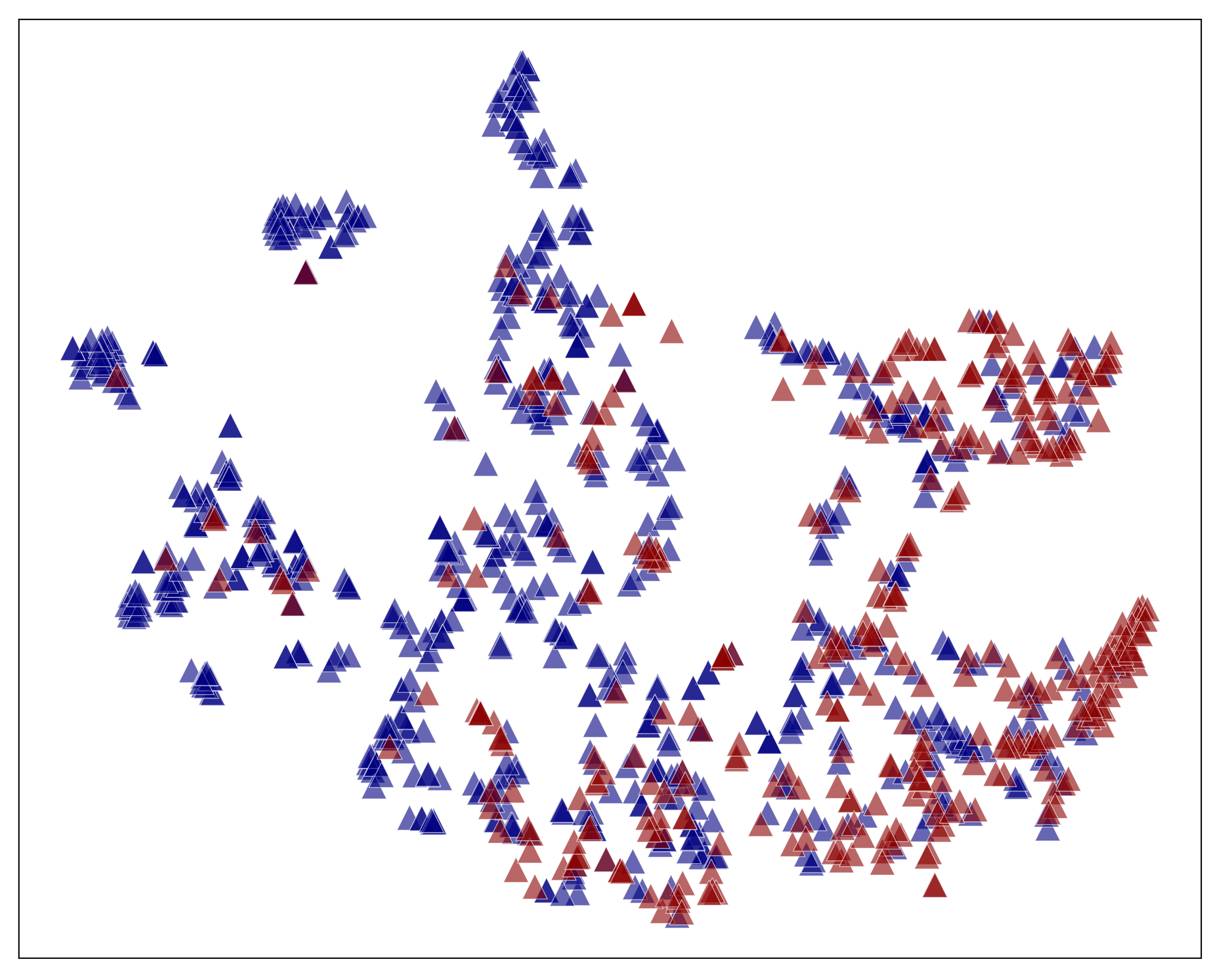}
        \caption{GraphCTA}
    \end{subfigure}
    \hfill
    \begin{subfigure}[t]{0.19\textwidth}
        \centering
        \includegraphics[width=\linewidth]{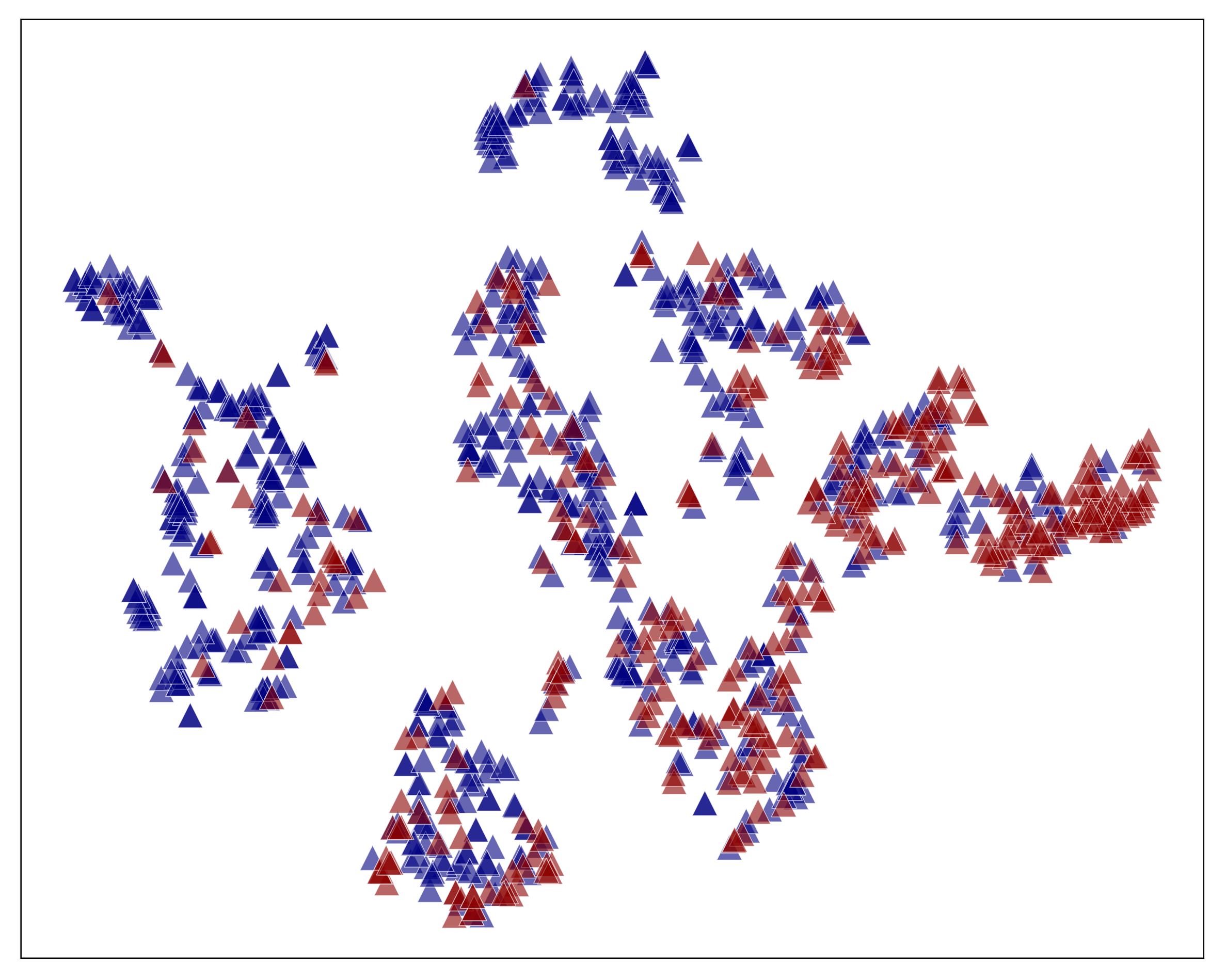}
        \caption{GALA}
    \end{subfigure}
    \hfill
    \begin{subfigure}[t]{0.19\textwidth}
        \centering
        \includegraphics[width=\linewidth]{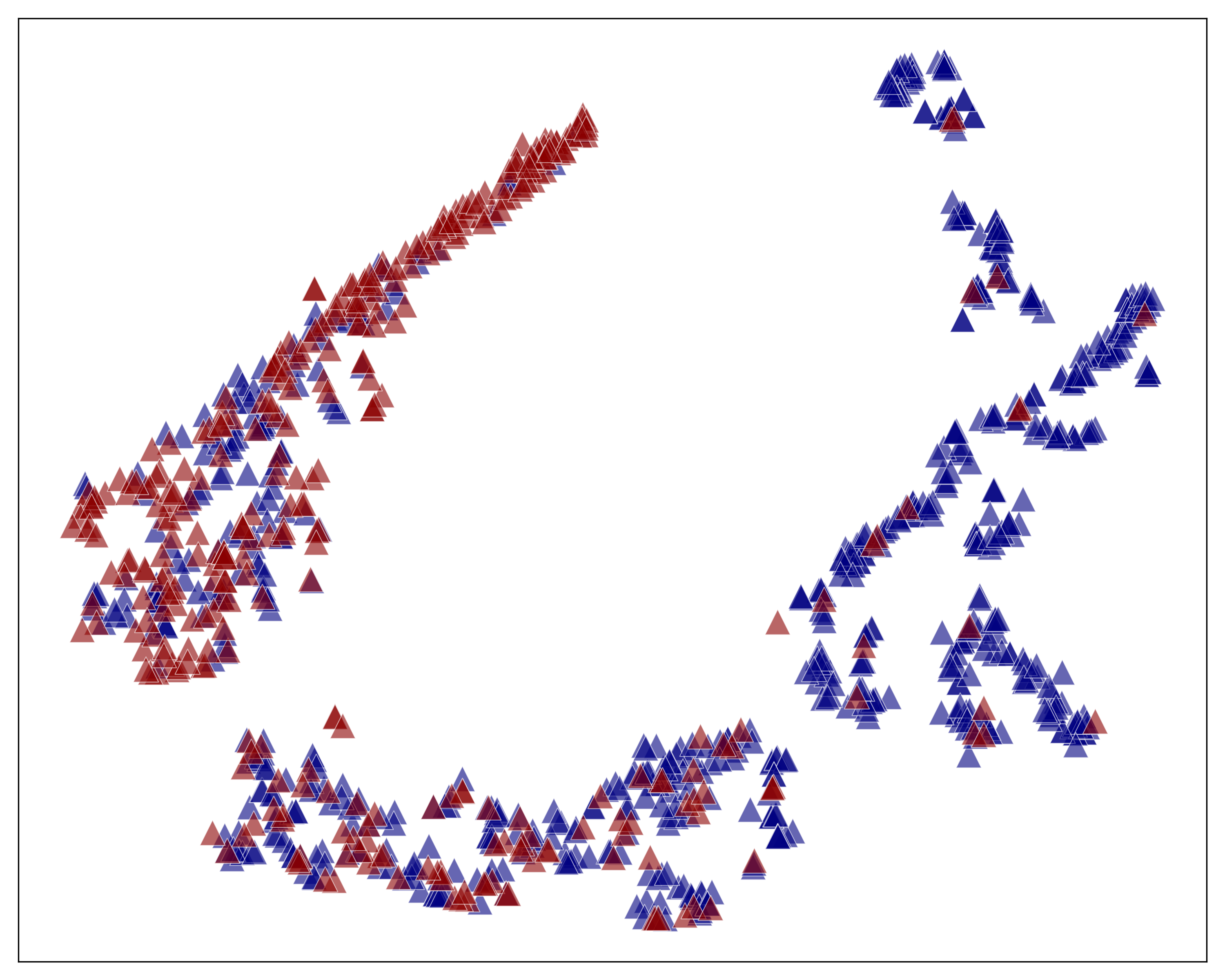}
        \caption{\method{}}
    \end{subfigure}

    \caption{T-SNE visualizations of \method{} and baselines on the Mutagenicity dataset.}
    \label{fig:generalization_tsne_mutag_more}
    \vspace{-0.5cm}
\end{figure*}

\noindent\textbf{Implementation Details.}
We implement \method{} and all baselines in PyTorch\footnote{\url{https://pytorch.org/}} and run all experiments on NVIDIA A100 GPUs. 
Source models are trained only with labeled source graphs, and source graphs and labels are inaccessible during target adaptation. Unless otherwise specified, the target encoder is a 3-layer GNN with hidden dimension 256, optimized by Adam with learning rate \(10^{-4}\) and weight decay \(10^{-12}\). For \method{}, we fix \(K_e=3\) source experts for all datasets. All hyperparameters are determined without using target labels, which are used only for final evaluation. We report classification accuracy on image and TUDataset benchmarks, and ROC-AUC on OGB benchmarks. All results are averaged over five independent runs with different random seeds and reported with standard deviation.

\begin{table}[t]
\centering
\caption{Aggregate comparison across all reported transfer tasks. \textbf{Bold} results indicate the best performance.}
\small
\resizebox{0.5\textwidth}{!}{
\setlength{\tabcolsep}{3pt}
\begin{tabular}{
>{\centering\arraybackslash}m{1.7cm}
ccccc
}
\toprule
Method & Node Avg. & Edge Avg. & Overall Avg. & Avg. Rank & \method{} W/T/L \\
\midrule
WL & 49.29 & 43.23 & 46.10 & 13.60 & 138/0/0 \\
GCN & 51.36 & 53.39 & 52.76 & 12.55 & 134/4/0 \\
GIN & 51.79 & 54.25 & 53.42 & 12.61 & 134/4/0 \\
GMT & 53.62 & 55.12 & 54.55 & 11.97 & 137/1/0 \\
CIN & 51.65 & 54.83 & 53.58 & 12.40 & 134/4/0 \\
PathNN & 56.15 & 57.32 & 56.89 & 11.11 & 135/3/0 \\
\midrule
SFDA\_LLN & 64.67 & 66.42 & 65.53 & 8.01 & 94/44/0 \\
SF(DA)$^2$ & 65.68 & 67.25 & 66.39 & 6.85 & 89/48/1 \\
NVC\_LLN & 65.27 & 66.81 & 66.02 & 7.46 & 90/48/0 \\
Ucon\_SFDA & 66.11 & 67.94 & 67.02 & 5.75 & 80/56/2 \\
\midrule
SOGA & 67.08 & 68.45 & 67.71 & 4.91 & 68/69/1 \\
GraphCTA & 67.60 & 68.69 & 68.06 & 4.26 & 66/71/1 \\
GALA & 67.92 & 69.22 & 68.59 & 3.21 & 63/72/3 \\
GraphATA & 68.66 & 69.88 & 69.20 & 3.53 & 54/80/4 \\
\midrule
{\method{}} & \textbf{71.81} & \textbf{72.77} & \textbf{72.11} & \textbf{1.80} & -- \\
\bottomrule
\end{tabular}
}
\label{tab:overall_summary}
\vspace{-0.5cm}
\end{table}

\subsection{Performance Comparison}
\label{sec:performance_comparison}

\begin{table*}[t]
\centering
\small
\caption{Post-hoc coverage--precision diagnostics for semantic candidates and the final safe subspace across datasets and shift types.}
\label{tab:pseudo_quality}
\resizebox{1.0\textwidth}{!}{
\setlength{\tabcolsep}{3pt}
\hspace*{-0.3cm}
\begin{tabular}{
>{\centering\arraybackslash}m{2.5cm}
>{\centering\arraybackslash}m{2.2cm}
>{\centering\arraybackslash}m{2.2cm}
>{\centering\arraybackslash}m{2.2cm}
>{\centering\arraybackslash}m{1.4cm}
>{\centering\arraybackslash}m{1.4cm}
>{\centering\arraybackslash}m{2.2cm}
>{\centering\arraybackslash}m{2.2cm}
}
\toprule
Dataset & Shift Type & Candidate Cov. & Candidate Prec. & Safe Cov. & Safe Prec. & Precision Gain & Coverage Drop \\
\midrule
Mutagenicity & Node & 35.06\% & 86.32\% & 23.25\% & 92.46\% & \textbf{+6.14\%} & 11.81\% \\
Mutagenicity & Edge & 29.70\% & 80.43\% & 15.96\% & 89.60\% & \textbf{+9.16\%} & 13.75\% \\
NCI1 & Node & 24.34\% & 77.60\% & 13.05\% & 88.06\% & \textbf{+10.46\%} & 11.30\% \\
NCI1 & Edge & 20.25\% & 82.21\% & 11.49\% & 93.22\% & \textbf{+11.01\%} & 8.76\% \\
FRANKENSTEIN & Node & 50.00\% & 68.82\% & 12.73\% & 84.06\% & \textbf{+15.24\%} & 37.27\% \\
FRANKENSTEIN & Edge & 50.00\% & 71.96\% & 12.82\% & 84.89\% & \textbf{+12.94\%} & 37.18\% \\
DD & Node & 50.17\% & 58.11\% & 14.92\% & 70.45\% & \textbf{+12.35\%} & 35.25\% \\
DD & Edge & 45.08\% & 57.89\% & 10.17\% & 73.33\% & \textbf{+15.44\%} & 34.92\% \\
ogbg-molhiv & Node & 49.97\% & 97.88\% & 12.61\% & 99.15\% & \textbf{+1.27\%} & 37.36\% \\
ogbg-molhiv & Edge & 50.00\% & 97.22\% & 8.10\% & 98.92\% & \textbf{+1.70\%} & 41.90\% \\
MNIST & Edge & 49.74\% & 98.04\% & 20.37\% & 99.28\% & \textbf{+1.25\%} & 29.37\% \\
CIFAR10 & Edge & 35.40\% & 63.71\% & 10.56\% & 69.95\% & \textbf{+6.24\%} & 24.84\% \\
\bottomrule
\end{tabular}
}

\end{table*}

We report the performance of \method{} and all baselines under the source-free graph domain adaptation setting in Tables~\ref{tab:mutag_feature_combined}, \ref{tab:mnist_cifar10_edge}, and \ref{tab:dd_node}--\ref{tab:hiv_idx}. 
Overall, the results show that source-only graph models, including WL, GCN, GIN, CIN, GMT, and PathNN, often degrade under structural domain shifts because they cannot adapt to the target distribution after source graphs become unavailable. 
General source-free domain adaptation methods, such as SF(DA)$^2$ and Ucon\_SFDA, improve over source-only models but remain less stable on graph-structured shifts, as they do not explicitly model graph topology or local structural consistency. 
Graph-specific adaptation methods, including GALA and GraphATA, provide stronger baselines and can be competitive on some transfer pairs, especially when the shift is mild or pseudo-labels are relatively reliable. 
In contrast, \method{} achieves the best or empirically comparable performance in most settings. 
Its advantage comes from jointly using multi-expert semantic reliability estimation, target-intrinsic structural learning, and nested safe-subspace refinement, which restricts hard supervision to reliable target samples while regularizing uncertain ones. 
We also provide t-SNE visualizations on Mutagenicity in Fig.~\ref{fig:generalization_tsne_mutag_more}. 
Compared with baselines, \method{} produces more compact and better separated target representations, whereas the baselines, e.g., NVC\_LLN and GALA, show stronger class mixing or scattered local structure. This qualitative evidence is consistent with the quantitative results and suggests that \method{} improves target-domain discriminability under structural shifts. More results on other datasets can be found in Appendix~\ref{sec:more_experiments}.

We further summarize the aggregate results in Table~\ref{tab:overall_summary}. 
Node Avg. and Edge Avg. are computed over the corresponding node-density and edge-density transfer tasks, while Overall Avg. and Avg. Rank are computed over all unique transfer tasks, including feature-shift settings. 
For W/T/L, \method{} is compared with each baseline on every task using the reported mean $\pm$ standard-deviation intervals, where overlapping intervals are treated as ties. The aggregate results show that \method{} achieves the best performance across Node Avg., Edge Avg., Overall Avg., and Avg. Rank, indicating that its improvements are consistent across diverse domain shifts rather than concentrated on a few favorable tasks. 
The newly added baselines, especially GraphATA and Ucon\_SFDA, further strengthen the comparison and improve over several previous source-free adaptation methods. 
Nevertheless, \method{} still maintains the strongest overall robustness. 
The W/T/L statistics also show that \method{} rarely underperforms existing methods, while many comparisons with strong graph adaptation baselines are counted as ties due to overlapping uncertainty intervals. 
These results further confirm the effectiveness of reliability-guided safe-subspace selection under node-density, edge-density, and feature-domain shifts.

\begin{table*}[t]
\centering
\caption{Controlled comparison of source-hypothesis budgets among \method{} and baselines on the Mutagenicity dataset.}
\label{tab:expert_budget}

\setlength{\tabcolsep}{3pt}

\resizebox{\textwidth}{!}{
\begin{tabular}{
>{\centering\arraybackslash}m{2.3cm}
>{\centering\arraybackslash}m{1.0cm}
>{\centering\arraybackslash}m{1.5cm}
>{\centering\arraybackslash}m{1.5cm}
>{\centering\arraybackslash}m{1.2cm}
*{6}{>{\centering\arraybackslash}m{1.2cm}}
>{\centering\arraybackslash}m{1.2cm}
}
\toprule

\multirow{2}{*}[-0.1em]{\centering Method}
& \multirow{2}{*}[-0.1em]{\centering \shortstack{Source\\Hyp.}}
& \multirow{2}{*}[-0.1em]{\centering \shortstack{Committee\\Type}}
& \multirow{2}{*}[-0.1em]{\centering \shortstack{Trainable\\Target Models}}
& \multirow{2}{*}[-0.1em]{\centering \shortstack{Test\\Models}}
& \multicolumn{3}{c}{Node Shift}
& \multicolumn{3}{c}{Edge Shift}
& \multirow{2}{*}[-0.1em]{\centering \shortstack{Avg.\\Rank}} \\

\cmidrule(lr){6-8}
\cmidrule(lr){9-11}

& & & & 
& M0$\rightarrow$M1 
& M0$\rightarrow$M2 
& M0$\rightarrow$M3 
& M0$\rightarrow$M1 
& M0$\rightarrow$M2 
& M0$\rightarrow$M3 
& \\

\midrule

GraphATA-Single
& 1 & None & 1 & 1
& 74.1\scriptsize{$\pm$1.4} & 69.3\scriptsize{$\pm$1.7} & 59.7\scriptsize{$\pm$0.9} 
& 73.8\scriptsize{$\pm$1.3} & 70.0\scriptsize{$\pm$0.7} & 60.6\scriptsize{$\pm$2.5}
& 10.25 \\

GraphATA-Homo3
& 3 & Homogeneous & 3 & 3
& 74.4\scriptsize{$\pm$1.8} & 68.7\scriptsize{$\pm$2.0} & 60.1\scriptsize{$\pm$0.7} 
& 74.1\scriptsize{$\pm$2.3} & 68.9\scriptsize{$\pm$0.5} & 61.6\scriptsize{$\pm$1.3}
& 10.50 \\

GraphATA-Hetero3
& 3 & Heterogeneous & 3 & 3
& 75.2\scriptsize{$\pm$2.1} & 69.9\scriptsize{$\pm$1.3} & 62.1\scriptsize{$\pm$1.0} 
& 74.4\scriptsize{$\pm$1.3} & 69.0\scriptsize{$\pm$0.9} & 61.9\scriptsize{$\pm$2.2}
& 8.50 \\

\midrule

GALA-Single
& 1 & None & 1 & 1
& 75.1\scriptsize{$\pm$2.1}
& 70.8\scriptsize{$\pm$1.6}
& 64.0\scriptsize{$\pm$2.1}
& 72.2\scriptsize{$\pm$1.2}
& 70.3\scriptsize{$\pm$1.7}
& 63.4\scriptsize{$\pm$1.8}
& 6.00 \\

GALA-Homo3
& 3 & Homogeneous & 3 & 3
& 75.7\scriptsize{$\pm$0.1}
& 68.4\scriptsize{$\pm$0.4}
& 63.1\scriptsize{$\pm$0.5}
& 73.7\scriptsize{$\pm$0.2}
& 67.7\scriptsize{$\pm$2.7}
& 60.3\scriptsize{$\pm$1.3}
& 10.75 \\

GALA-Hetero3
& 3 & Heterogeneous & 3 & 3
& 75.7\scriptsize{$\pm$1.0}
& 70.0\scriptsize{$\pm$0.6}
& 63.7\scriptsize{$\pm$2.2}
& 74.7\scriptsize{$\pm$0.8}
& 69.5\scriptsize{$\pm$1.5}
& 61.4\scriptsize{$\pm$1.9}
& 6.67 \\

\midrule

GraphCTA-Single
& 1 & None & 1 & 1
& 73.7\scriptsize{$\pm$1.8}
& 70.7\scriptsize{$\pm$1.5}
& 65.2\scriptsize{$\pm$1.7}
& 72.0\scriptsize{$\pm$2.2}
& \textbf{70.5\scriptsize{$\pm$1.6}}
& 62.8\scriptsize{$\pm$1.8}
& 6.83 \\

GraphCTA-Homo3
& 3 & Homogeneous & 3 & 3
& 74.5\scriptsize{$\pm$2.1}
& 67.8\scriptsize{$\pm$1.3}
& 64.2\scriptsize{$\pm$1.6}
& 73.9\scriptsize{$\pm$1.1}
& 67.0\scriptsize{$\pm$0.8}
& 61.7\scriptsize{$\pm$1.3}
& 10.42 \\

GraphCTA-Hetero3
& 3 & Heterogeneous & 3 & 3
& 75.5\scriptsize{$\pm$1.8}
& 70.1\scriptsize{$\pm$2.2}
& 63.8\scriptsize{$\pm$1.4}
& 74.4\scriptsize{$\pm$1.3}
& 69.0\scriptsize{$\pm$2.1}
& 62.5\scriptsize{$\pm$1.8}
& 6.50 \\

\midrule

SOGA-Single
& 1 & None & 1 & 1
& 73.5\scriptsize{$\pm$1.3}
& 69.4\scriptsize{$\pm$2.5}
& 63.1\scriptsize{$\pm$2.1}
& 72.3\scriptsize{$\pm$2.2}
& 69.8\scriptsize{$\pm$1.9}
& 63.1\scriptsize{$\pm$1.8}
& 9.25 \\

SOGA-Homo3
& 3 & Homogeneous & 3 & 3
& 72.8\scriptsize{$\pm$2.5}
& 67.4\scriptsize{$\pm$1.1}
& 59.7\scriptsize{$\pm$0.6}
& 73.3\scriptsize{$\pm$0.4}
& 68.7\scriptsize{$\pm$0.5}
& 62.2\scriptsize{$\pm$0.5}
& 12.58 \\

SOGA-Hetero3
& 3 & Heterogeneous & 3 & 3
& 75.7\scriptsize{$\pm$1.1}
& 70.3\scriptsize{$\pm$0.5}
& 65.5\scriptsize{$\pm$1.6}
& 74.7\scriptsize{$\pm$1.8}
& 69.5\scriptsize{$\pm$1.2}
& 62.0\scriptsize{$\pm$1.9}
& 4.67 \\

\midrule

\method{}-Single
& 1 & None & 1 & 1
& 75.3\scriptsize{$\pm$1.6}
& 67.9\scriptsize{$\pm$2.3}
& 63.3\scriptsize{$\pm$1.8}
& 73.7\scriptsize{$\pm$2.5}
& 67.7\scriptsize{$\pm$1.3}
& 63.2\scriptsize{$\pm$1.9}
& 9.33 \\

\method{}-Homo3
& 3 & Homogeneous & 1 & 1
& 76.3\scriptsize{$\pm$0.1}
& 67.8\scriptsize{$\pm$3.2}
& 64.7\scriptsize{$\pm$0.7}
& 74.6\scriptsize{$\pm$0.4}
& 68.6\scriptsize{$\pm$1.0}
& 64.8\scriptsize{$\pm$2.3}
& 6.25 \\

\method{}-Hetero3
& 3 & Heterogeneous & 1 & 1
& \textbf{77.1\scriptsize{$\pm$0.5}}
& \textbf{71.6\scriptsize{$\pm$2.0}}
& \textbf{65.8\scriptsize{$\pm$1.1}}
& \textbf{75.8\scriptsize{$\pm$2.1}}
& 69.9\scriptsize{$\pm$1.6}
& \textbf{66.2\scriptsize{$\pm$1.4}}
& \textbf{1.50} \\

\bottomrule
\end{tabular}
}
\vspace{-0.4cm}
\end{table*}

\subsection{Pseudo-Label Quality Diagnostics}
\label{sec:pseudo_quality}

Since the core design of \method{} lies in reliable pseudo-label selection, final classification performance alone cannot fully reveal whether the selected target samples are trustworthy. 
We therefore report post-hoc pseudo-label quality diagnostics in Table~\ref{tab:pseudo_quality} to directly evaluate the selective-learning mechanism. 
Candidate Cov. and Candidate Prec. denote the coverage $|\mathcal{S}_{\mathrm{sem}}|/n_t$ and pseudo-label precision of the semantic candidate set, respectively. 
Safe Cov. and Safe Prec. denote the coverage $|\mathcal{H}_{\zeta,\rho}|/n_t$ and pseudo-label precision of the final safe subspace. 
We further report Precision Gain and Coverage Drop to quantify how much neighborhood-consistency refinement improves pseudo-label reliability at the cost of reducing selected-sample coverage. 
Target labels are used only for this post-hoc diagnostic analysis.

The results reveal a consistent precision--coverage trade-off across the reported datasets and shift types. 
Compared with the initial semantic candidate set, the final safe subspace consistently achieves higher pseudo-label precision, confirming that neighborhood-consistency verification effectively filters out unreliable samples that pass confidence-based semantic selection alone. 
This improvement is especially evident on structurally shifted graph datasets such as DD, NCI1, and FRANKENSTEIN, where confidence-only pseudo-labels are more vulnerable to topology-induced bias; Mutagenicity also shows clear gains under both node- and edge-density shifts. 
For ogbg-molhiv and MNIST, the candidate pseudo-labels are already highly reliable, so the additional gain from safe-subspace refinement is relatively smaller but remains positive. 
CIFAR10 is more challenging, with lower candidate reliability and limited safe coverage, suggesting that conservative pseudo-label selection is harder on complex image-derived graphs. 
Overall, these diagnostics show that \method{} prioritizes pseudo-label reliability over pseudo-label quantity: it selects a smaller but more trustworthy safe subspace for hard supervision, while leaving uncertain samples to be handled by noise-tolerated regularization.

\begin{table*}[t]
\centering
\caption{Per-class safe-subspace diagnostics on the Mutagenicity dataset.}
\label{tab:per_class_coverage}

\resizebox{\textwidth}{!}{
\begin{tabular}{
>{\centering\arraybackslash}m{1.8cm}
>{\centering\arraybackslash}m{1.0cm}
>{\centering\arraybackslash}m{1.5cm}
>{\centering\arraybackslash}m{1.2cm}
*{6}{>{\centering\arraybackslash}m{1.25cm}}
}
\toprule
\multirow{2}{*}[-0.6em]{\centering Dataset}
& \multirow{2}{*}[-0.6em]{\centering Domain}
& \multirow{2}{*}[-0.6em]{\centering Shift Type}
& \multirow{2}{*}[-0.6em]{\centering \shortstack{Safe\\Coverage}}
& \multicolumn{2}{c}{Target Distribution}
& \multicolumn{2}{c}{Safe-Subspace Pseudo-labels}
& \multicolumn{2}{c}{Per-Class Safe Prec.} \\
\cmidrule(lr){5-6} \cmidrule(lr){7-8} \cmidrule(lr){9-10}
& & &
& Class 0 & Class 1
& Class 0 & Class 1
& Class 0 & Class 1 \\
\midrule

\multirow{6}{*}{Mutagenicity} 
& \multirow{3}{*}{Node} 
& M0$\rightarrow$M1 & 23.25\% & 67.34\% & 32.66\% & 76.19\% & 23.81\% & 94.79\% & 85.00\% \\
& & M0$\rightarrow$M2 & 21.86\% & 59.87\% & 40.13\% & 71.31\% & 28.69\% & 89.35\% & 80.88\% \\
& & M0$\rightarrow$M3 & 30.14\% & 46.73\% & 53.27\% & 37.31\% & 62.69\% & 76.23\% & 71.71\% \\
\cmidrule(lr){2-10}
& \multirow{3}{*}{Edge} 
& M0$\rightarrow$M1 & 29.89\% & 66.70\% & 33.30\% & 75.31\% & 24.69\% & 94.67\% & 86.25\% \\
& & M0$\rightarrow$M2 & 15.96\% & 57.01\% & 42.99\% & 56.65\% & 43.35\% & 86.73\% & 93.33\% \\
& & M0$\rightarrow$M3 & 18.34\% & 46.36\% & 53.64\% & 46.73\% & 53.27\% & 73.12\% & 76.42\% \\

\bottomrule
\end{tabular}
}
\end{table*}

\begin{table*}[t]
\small
\centering
\caption{\textcolor{black}{The results of ablation studies on the Mutagenicity dataset (source $\rightarrow$ target).}}
\vspace{-4pt}
\resizebox{1.0\textwidth}{!}{
\begin{tabular}{
>{\centering\arraybackslash}m{2.3cm}|
*{12}{>{\centering\arraybackslash}m{1.2cm}}
}
\toprule
Methods &M0$\rightarrow$M1 &M1$\rightarrow$M0 &M0$\rightarrow$M2 &M2$\rightarrow$M0 &M0$\rightarrow$M3 &M3$\rightarrow$M0 &M1$\rightarrow$M2 &M2$\rightarrow$M1 &M1$\rightarrow$M3 &M3$\rightarrow$M1 &M2$\rightarrow$M3 &M3$\rightarrow$M2 \\
\midrule

\method{} w/o ME & 76.4 & 75.1 & 68.5 & 73.0 & 61.1 & 68.0 & 77.1 & 84.3 & 67.1 & 79.5 & 70.4 & 76.6 \\

\method{} w/o CF & 76.2 & 74.0 & 68.9 & 74.8 & 64.7 & 71.2 & 79.0 & 86.2 & 70.3 & 80.1 & 71.3 & 80.8 \\

\method{} w/o TS & 76.0 & 74.5 & 68.2 & 73.6 & 64.6 & 70.2 & 78.5 & 86.0 & 69.0 & 79.8 & 70.7 & 80.2 \\

\method{} w/o NC & 76.8 & 75.4 & 69.1 & 74.0 & 64.8 & 70.3 & 78.6 & 85.7 & 68.8 & 80.1 & 70.5 & 80.7 \\

\method{} w/o SR & 74.5 & 72.5 & 69.5 & 73.5 & 58.8 & 70.2 & 76.7 & 81.8 & 66.9 & 75.9 & 67.5 & 79.9 \\

\midrule 
\method{} & \textbf{77.1} & \textbf{75.3} & \textbf{71.6} & \textbf{75.4} & \textbf{65.8} & \textbf{72.0} & \textbf{79.9} & \textbf{87.2} & \textbf{70.8} & \textbf{82.4} & \textbf{71.9} & \textbf{81.6} \\
\bottomrule
\end{tabular}
}
\vspace{-0.4cm}
\label{tab:ablation_study_mutag}
\end{table*}

\subsection{Source-Hypothesis Budget and Committee Design}
\label{sec:encoder_combination}

The source committee in \method{} is introduced to provide reliable uncertainty signals rather than simply to increase model capacity. 
To examine the effects of source-hypothesis budget, encoder diversity, and reliability-guided refinement, we report a controlled comparison in Table~\ref{tab:expert_budget}. 
For \method{}, \method{}-Single uses one GMT source hypothesis, \method{}-Homo3 uses three GMT hypotheses trained with different random seeds, and \method{}-Hetero3 uses a heterogeneous committee composed of GMT, GIN, and PathNN. 
All three variants use only one trainable target model and one test model, and the released source hypotheses are used only for semantic reliability estimation. 
For comparison, we apply analogous Single, Homo3, and Hetero3 settings to baselines, including GraphATA, GALA, GraphCTA, and SOGA, where the heterogeneous baseline setting uses GIN, SAGE, and GMT source models.

\begin{figure}[t]
    \centering

    \begin{subfigure}[t]{0.32\linewidth}
        \centering
        \includegraphics[width=\linewidth]{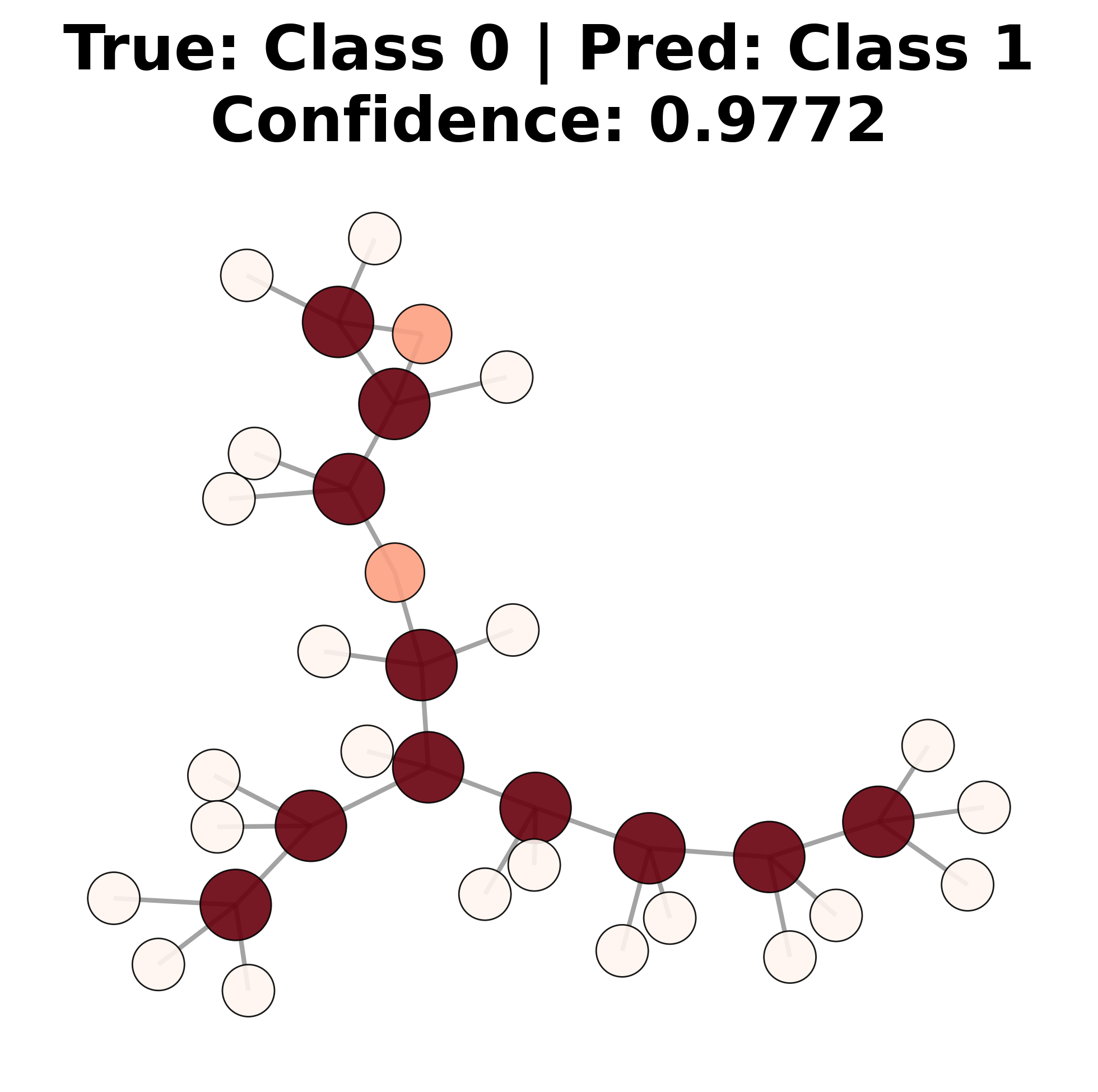}
        \caption{M0$\rightarrow$M1}
    \end{subfigure}
    \hfill
    \begin{subfigure}[t]{0.32\linewidth}
        \centering
        \includegraphics[width=\linewidth]{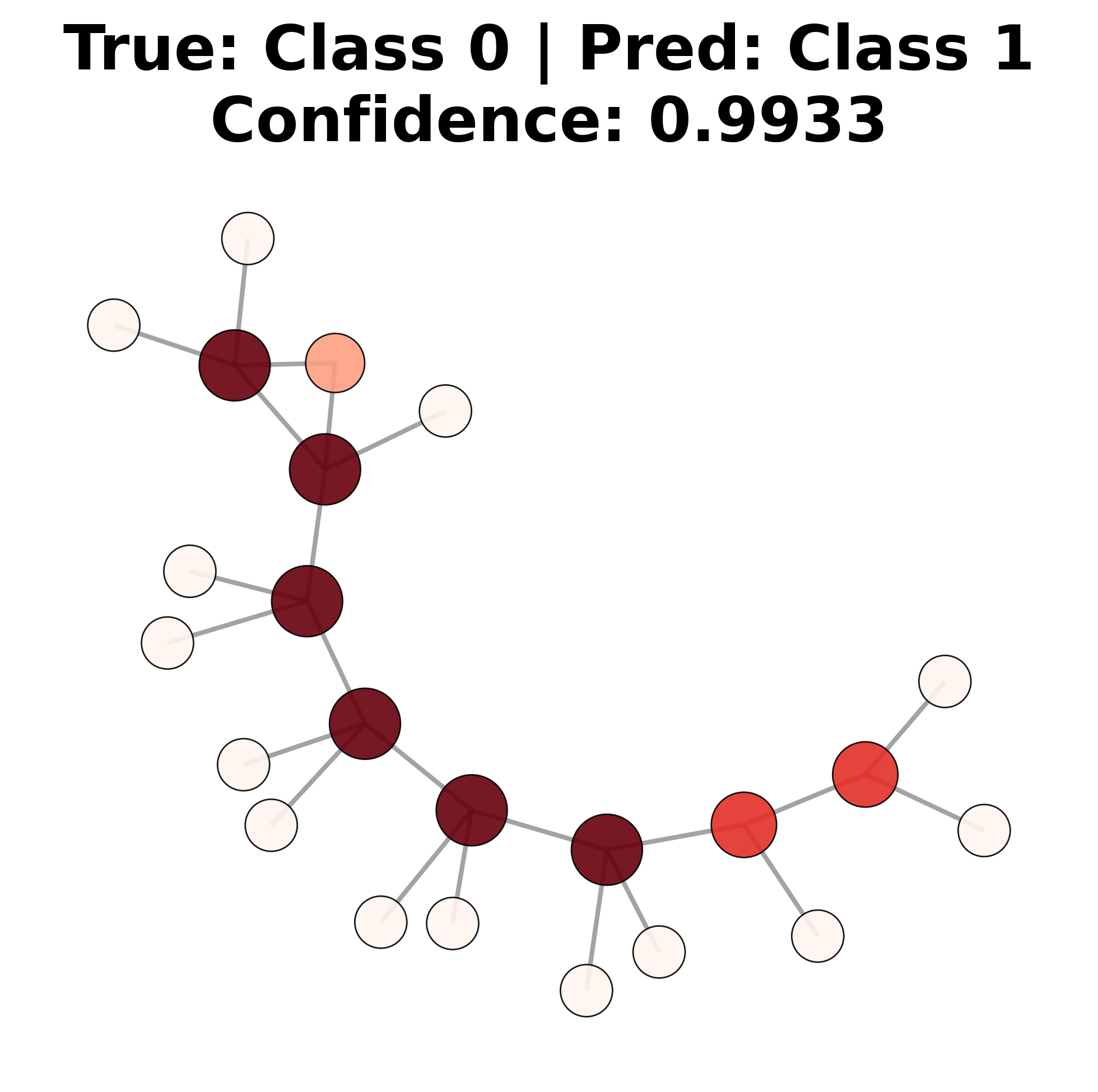}
        \caption{M0$\rightarrow$M2}
    \end{subfigure}
    \hfill
    \begin{subfigure}[t]{0.32\linewidth}
        \centering
        \includegraphics[width=\linewidth]{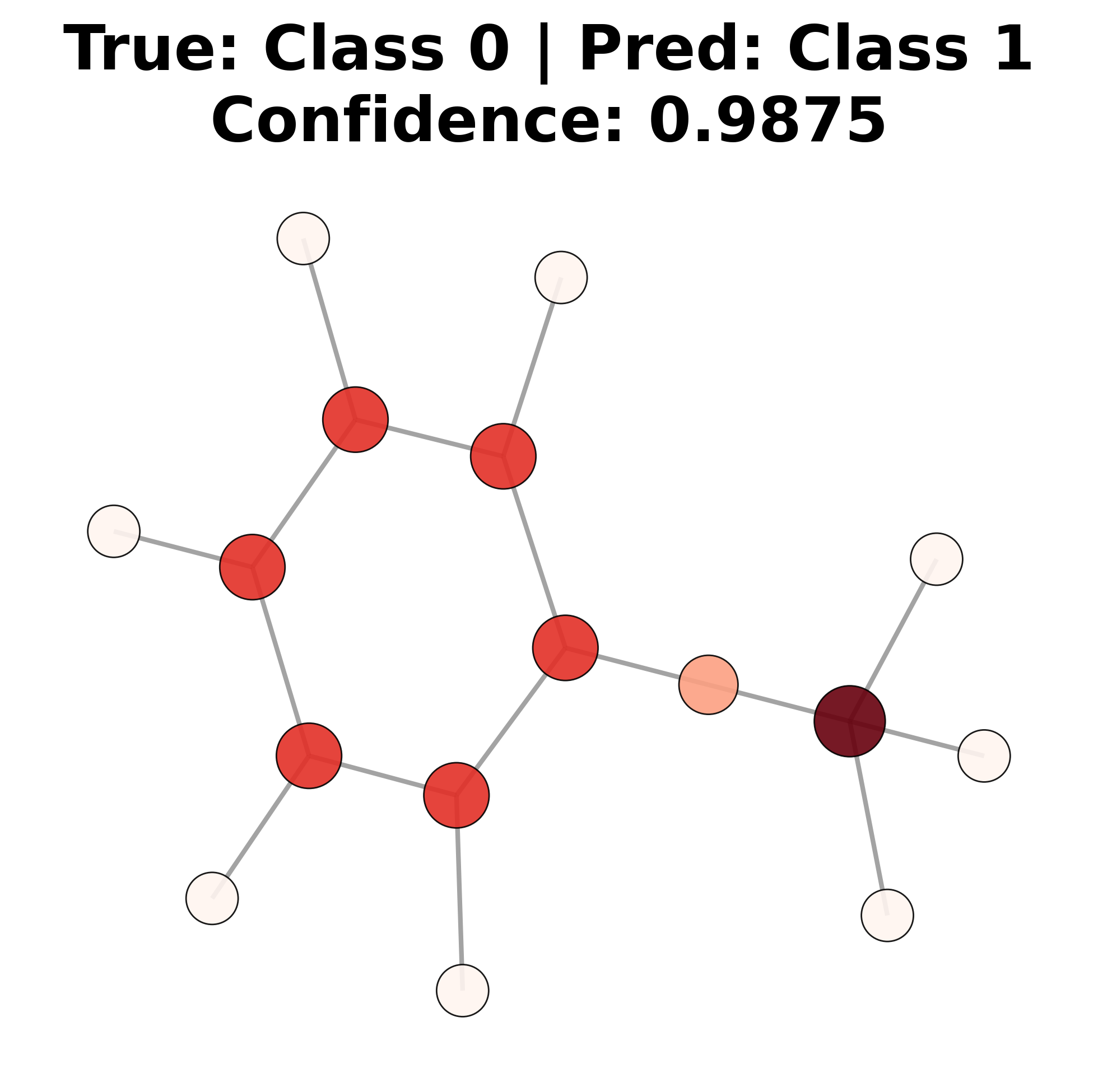}
        \caption{M0$\rightarrow$M3}
    \end{subfigure}

    \caption{High-confidence false pseudo-labels on Mutagenicity.}
    \vspace{-0.5cm}
    \label{fig:failure_cases}
\end{figure}

The results show that increasing the number of source hypotheses alone does not guarantee better adaptation. 
For \method{}, the homogeneous three-GMT committee improves over the single-GMT variant in most transfer tasks, while the heterogeneous GMT/GIN/PathNN committee achieves the best average rank and the strongest overall performance. 
This indicates that complementary graph encoders provide more informative disagreement signals for identifying reliable pseudo-label candidates. 
In contrast, the multi-hypothesis variants of GraphATA, GALA, GraphCTA, and SOGA show less consistent improvements, and some Homo3 or Hetero3 variants even underperform their single-model counterparts despite using more trainable target models and test-time models. 
These observations suggest that the advantage of \method{} is not merely due to a larger source-hypothesis budget or increased test-time capacity. 
Rather, the key benefit comes from integrating multi-expert disagreement with target-structural verification and safe-subspace supervision.

\subsection{Failure Analysis of Safe-Subspace Selection}
\label{sec:failure_analysis}

We further conduct a failure analysis on the Mutagenicity dataset to examine the limitations of safe-subspace selection under severe structural shifts. 
Table~\ref{tab:per_class_coverage} reports the target class distribution, the pseudo-label distribution within the selected safe subspace, and the class-wise pseudo-label precision after safe selection. 
Here, Per-Class Safe Prec. measures the precision of selected safe samples conditioned on each predicted pseudo-label class, and target labels are used only for post-hoc analysis. 
The results show that the safe subspace is generally reliable under mild or moderate shifts, but its class composition and class-wise reliability may become biased when the structural shift becomes severe. 
This issue can be further amplified by class-prior variation induced by density-based partitioning, since the selection process may favor the class whose structural patterns are more confidently recognized by the source hypothesis. 
For example, under the node-density shift M0$\rightarrow$M3, the safe pseudo-label distribution becomes more skewed toward Class 1 than the true target distribution, while the class-wise precision also drops. 
This suggests that some Class-0 target graphs can be absorbed into the Class-1 prediction region, even after confidence-based filtering and neighborhood-consistency verification. 
For edge-density shifts, the selected class distribution can remain relatively balanced, but class-wise precision still decreases under harder transfers, indicating that class balance alone is insufficient to guarantee pseudo-label correctness.

Additionally, Fig.~\ref{fig:failure_cases} visualizes representative high-confidence failure cases where Class-0 target graphs are predicted as Class 1. 
These graphs contain local dense or branched structural patterns that may resemble source-domain Class-1 cues, suggesting that the source hypothesis can rely on shortcut structural signals under topology shift. 
Such cases explain why aggressive pseudo-label expansion is risky: even highly confident predictions may be systematically wrong in certain target regions. 
This analysis supports the design of \method{}, which restricts hard supervision to a verified safe subspace and handles the remaining uncertain samples through noise-tolerated regularization.

\subsection{Ablation Study}
\label{sec:ablation} 

We conduct ablation studies to evaluate the contribution of each component in \method{}, with variants corresponding to the four modules in Section~\ref{sec:methodology}. 
\method{} w/o ME removes multi-expert uncertainty quantification and uses a single source hypothesis, disabling the predictive-variance-based disagreement criterion. 
\method{} w/o CF removes confidence-based semantic filtering and admits pseudo-labels without the threshold $\zeta$. 
\method{} w/o TS removes the target-intrinsic structure learning branch and graph contrastive objective, so neighborhoods are not constructed from a target-specific structural manifold. 
\method{} w/o NC removes neighborhood-consistency filtering and disables the structural gate $\rho_{\min}$. 
\method{} w/o SR removes soft regularization on the uncertain set $\mathcal{U}$, leaving only hard pseudo-label supervision on the safe subspace.

Experimental results are shown in Table~\ref{tab:ablation_study_mutag}. 
The results show that each component contributes to robust adaptation. 
First, removing ME or CF weakens semantic reliability estimation: the former loses ensemble-disagreement filtering, while the latter admits low-confidence samples, both increasing pseudo-label noise. 
Second, removing TS degrades performance because the model can no longer learn target-specific structural neighborhoods, making refinement more dependent on source-biased representations. 
Third, removing NC allows semantic candidates to be used for hard supervision without local structural verification, confirming that confidence alone is insufficient for reliable graph pseudo-labeling. 
Finally, removing SR underutilizes uncertain target samples, indicating that noise-tolerated regularization helps exploit target distribution information without assigning unreliable hard labels. More results on other datasets are provided in Appendix~\ref{sec:more_ablation}.

\begin{figure}[t]
    \centering

    \begin{subfigure}[t]{0.48\linewidth}
        \centering
        \includegraphics[width=\linewidth]{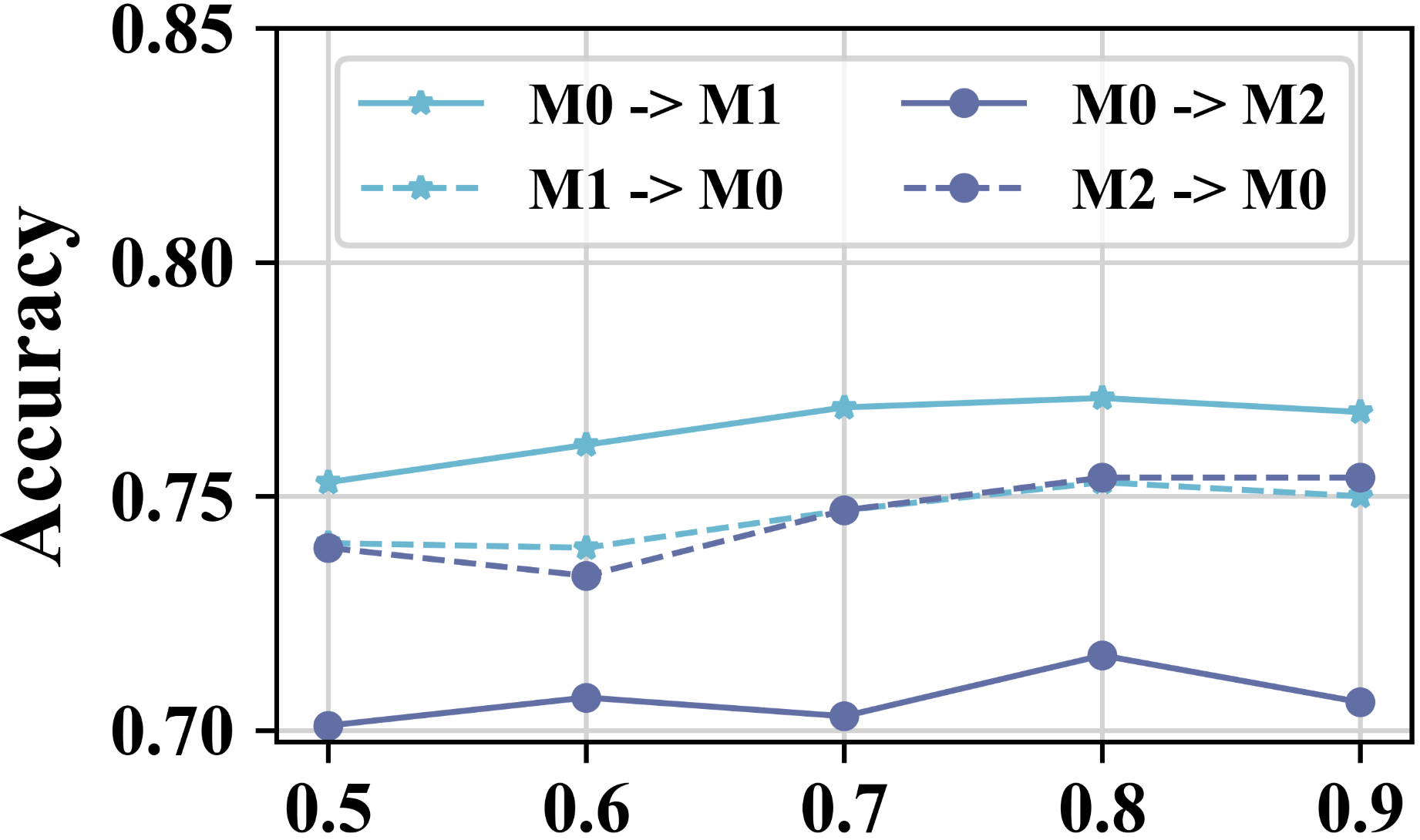}
        \caption{$\zeta$}
    \end{subfigure}
    \hfill
    \begin{subfigure}[t]{0.48\linewidth}
        \centering
        \includegraphics[width=\linewidth]{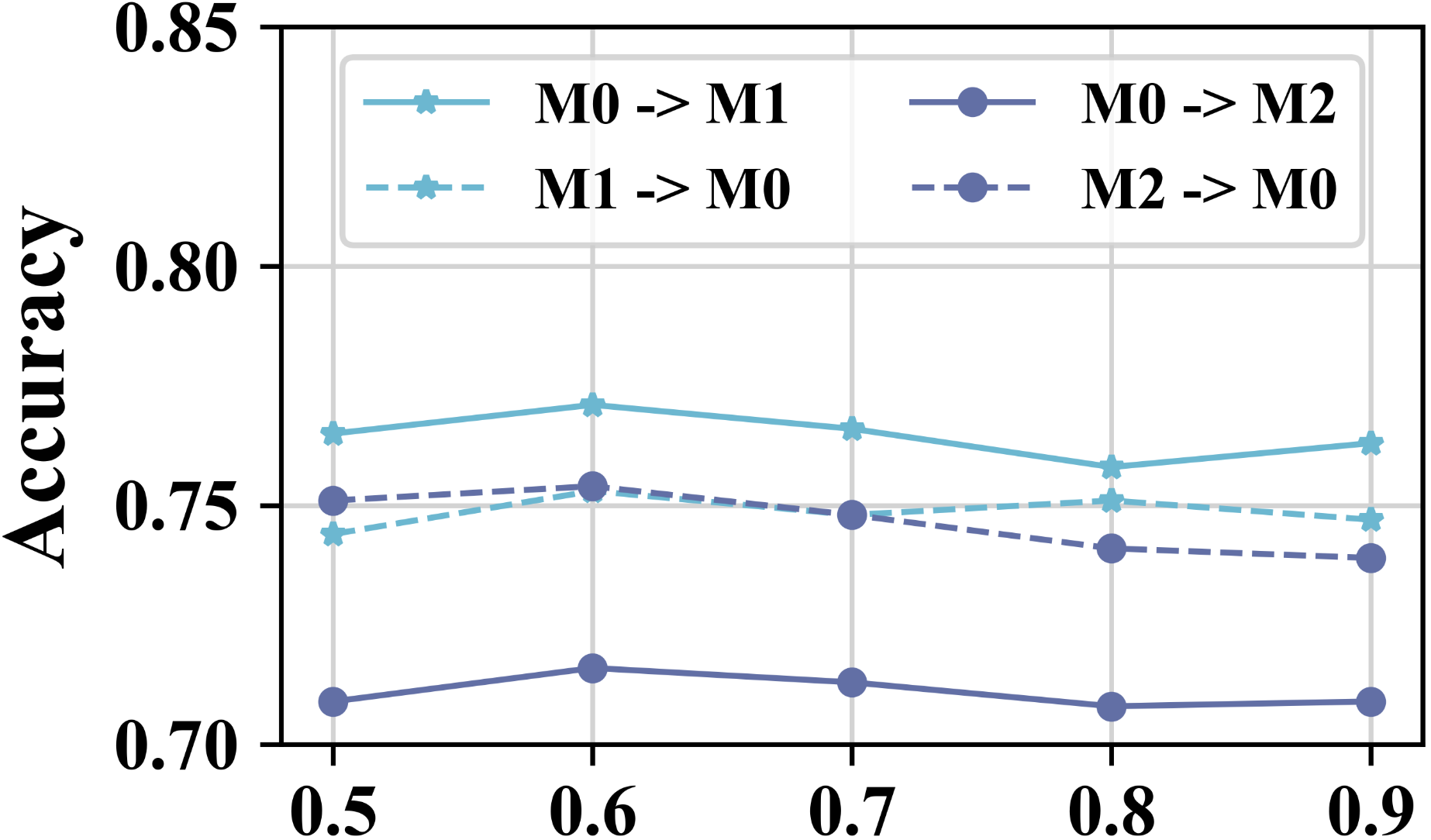}
        \caption{$\rho_{\min}$}
    \end{subfigure}

    \caption{Sensitivity analysis of the confidence threshold $\zeta$ and neighborhood-consistency threshold $\rho_{\min}$ in \method{} on the Mutagenicity dataset.}

    \vspace{-0.5cm}
    \label{fig:sensitivity}
\end{figure}

\subsection{Sensitivity Analysis}
\label{sec:sensitivity}

We conduct a sensitivity analysis on two key thresholds in \method{}, namely the confidence threshold $\zeta$ and the neighborhood-consistency threshold $\rho_{\min}$, as illustrated in Fig.~\ref{fig:sensitivity}. 
The confidence threshold $\zeta$ controls the semantic confidence required for selecting pseudo-label candidates, while $\rho_{\min}$ determines the minimum local neighborhood agreement needed for a sample to be admitted into the safe subspace. 
To evaluate the robustness of \method{} with respect to these thresholds, we vary one parameter at a time while fixing the other hyperparameters to their default settings. 
Specifically, both $\zeta$ and $\rho_{\min}$ are varied over $\{0.5, 0.6, 0.7, 0.8, 0.9\}$.

As shown in Fig.~\ref{fig:sensitivity}, both thresholds exhibit a clear quality--coverage trade-off. 
For $\zeta$, increasing the threshold generally improves pseudo-label reliability by filtering out low-confidence predictions, but an overly strict threshold may reduce the safe-subspace coverage and limit the amount of hard supervision. 
On the Mutagenicity dataset, $\zeta=0.8$ provides a favorable balance between pseudo-label quality and selected-sample coverage. 
For $\rho_{\min}$, smaller values may admit locally inconsistent samples into the safe subspace, whereas larger values may discard useful target samples whose neighborhoods are only partially consistent under domain shift. 
The setting $\rho_{\min}=0.6$ achieves a stable compromise between structural reliability and effective target coverage. 
Overall, the results indicate that \method{} is reasonably robust to threshold choices when the semantic and structural filters are set within a moderate range. 
Additional sensitivity results on other datasets are provided in Appendix~\ref{sec:more_sensitivity}.

\section{Conclusion}

This paper revisited source-free graph domain adaptation from the perspective of selective adaptation under systematic pseudo-label noise. Instead of assuming that source-induced pseudo-labels are reliable over the entire target domain, we characterized a confidence-consistent safe subspace where hard pseudo-label supervision can be more cautiously justified. Based on this view, we proposed \method{}, which integrates source-committee uncertainty estimation, target-intrinsic structural representation learning, semantic-structural pseudo-label verification, and noise-tolerant regularization for uncertain target samples. Extensive experiments on image and graph benchmarks under node-density, edge-density, and feature shifts demonstrate that \method{} improves pseudo-label reliability and achieves robust performance across diverse source-free transfer settings. Further diagnostic, ablation, sensitivity, and source-hypothesis budget analyses show that the gains mainly come from reliability-guided safe-subspace supervision rather than simply from additional source hypotheses. A remaining limitation is that safe-subspace selection may become overly conservative or class-biased under severe structural shifts, suggesting future work on adaptive coverage control and class-aware reliability estimation for broader SF-GDA scenarios.


\bibliographystyle{IEEEtranN}
\bibliography{reference}

\clearpage
\clearpage
\onecolumn
\appendix

\section{Detailed Proofs and Supplementary Material}
\label{app:details}

This appendix provides detailed proofs, implementation details, and additional
experimental protocols for \method{}. The notation follows
Sections~\ref{subsec:safe_subspace_bounded_noise}--\ref{sec:methodology}. The
purpose of the appendix is to make explicit the distinction between two notions
that are easy to conflate: (i) a population noise bound on selected pseudo labels
and (ii) target-only structural certificates used to identify reliable samples in
practice.

\subsection{Detailed Proofs}
\label{app:proofs}

\subsubsection{Proof of Theorem~\ref{thm:bounded_noise_safe_subspace}}

Let \(x\in\mathcal{H}_{\tau,\rho}\) and denote its pseudo-label by
\(c=\tilde y(x)\). By the definition of the safe subspace in
Eq.~\eqref{eq:safe_subspace_theory}, we have
\begin{equation}
    p_S(c\mid x)=s(x)\ge \tau .
\end{equation}
Assumption~1 then gives
\begin{equation}
    p_T(c\mid x)
    \ge p_S(c\mid x)-\beta_{\tau}(\alpha)
    \ge \tau-\beta_{\tau}(\alpha).
\end{equation}
Since \(\eta(x)=\Pr[\tilde y(x)\neq y\mid x]=1-p_T(c\mid x)\), we obtain
\begin{equation}
    \eta(x)\le 1-\tau+\beta_{\tau}(\alpha),
\end{equation}
which proves Eq.~\eqref{eq:center_noise_bound}.

We next prove the aggregate neighborhood certificate. For any
\(x'\in\mathcal{N}_{\mathrm{same}}^{\tau}(x)\), the construction of
\(\mathcal{N}_{\mathrm{same}}^{\tau}(x)\) implies
\(\tilde y(x')=\tilde y(x)=c\) and \(s(x')\ge\tau\). Applying Assumption~1 at
\(x'\) yields
\begin{equation}
    p_T(c\mid x')
    \ge p_S(c\mid x')-\beta_{\tau}(\alpha)
    \ge \tau-\beta_{\tau}(\alpha).
\end{equation}
Moreover, because \(x\in\mathcal{H}_{\tau,\rho}\), at least a \(\rho\)-fraction
of the target neighborhood \(\mathcal{N}(x)\) belongs to
\(\mathcal{N}_{\mathrm{same}}^{\tau}(x)\). The posterior probabilities of the
remaining neighbors are nonnegative. Therefore,
\begin{align}
    \frac{1}{|\mathcal{N}(x)|}
    \sum_{x'\in\mathcal{N}(x)}p_T(c\mid x')
    &\ge
    \frac{|\mathcal{N}_{\mathrm{same}}^{\tau}(x)|}{|\mathcal{N}(x)|}
    \big(\tau-\beta_{\tau}(\alpha)\big) \\
    &\ge
    \rho\big(\tau-\beta_{\tau}(\alpha)\big),
\end{align}
which proves Eq.~\eqref{eq:aggregate_neighbor_certificate}.

Finally, we prove the geometric certificate. For any
\(x'\in\mathcal{N}_{\mathrm{same}}^{\tau}(x)\), Assumption~2 gives
\begin{equation}
    p_T(c\mid x)
    \ge p_T(c\mid x') - Ld(x,x').
\end{equation}
Combining this inequality with the lower bound on \(p_T(c\mid x')\) gives
\begin{equation}
    p_T(c\mid x)
    \ge \tau-\beta_{\tau}(\alpha)-Ld(x,x').
\end{equation}
Averaging over \(x'\in\mathcal{N}_{\mathrm{same}}^{\tau}(x)\) yields
\begin{equation}
    p_T(c\mid x)
    \ge
    \tau-\beta_{\tau}(\alpha)
    -
    L\frac{1}{|\mathcal{N}_{\mathrm{same}}^{\tau}(x)|}
    \sum_{x'\in\mathcal{N}_{\mathrm{same}}^{\tau}(x)}d(x,x'),
\end{equation}
which is Eq.~\eqref{eq:geometric_neighbor_certificate}. If
\(\tau-\beta_{\tau}(\alpha)>1/2\), then Eq.~\eqref{eq:center_noise_bound}
implies \(\eta(x)<1/2\) for every \(x\in\mathcal{H}_{\tau,\rho}\), which is the
Massart bounded-noise condition on the selected safe subspace. 

\paragraph{Remark on the role of \(\rho\).}
The proof intentionally separates the pointwise noise bound from the structural
certificates. Under Assumption~1, the pointwise pseudo-label noise at the center
sample is already controlled by \(\tau\) and \(\beta_{\tau}(\alpha)\). The
neighborhood ratio \(\rho\) should not be interpreted as a factor that directly
multiplies the center-point posterior. Instead, \(\rho\) provides observable
target-domain evidence that the selected pseudo-label is locally supported rather
than an isolated high-confidence error. This is why \(\rho\) enters the empirical
safe-subspace selection rule in Eq.~\eqref{eq:safe_subspace_final}, while the
Massart noise condition follows from Eq.~\eqref{eq:center_noise_bound}.

\subsubsection{Proof of Proposition~\ref{prop:finite_sample_structural_gate}}

For a fixed target representation \(x\), define the Bernoulli variable
\begin{equation}
B_i=\mathbb{I}[\tilde y(x_i)=\tilde y(x),\ s(x_i)\ge \tau],
\end{equation}
where \(x_i\) is the \(i\)-th sampled neighbor of \(x\). The empirical structural gate is
\begin{equation}
\hat r_{\tau}(x)=\frac{1}{k_{\mathrm{nn}}}\sum_{i=1}^{k_{\mathrm{nn}}}B_i,
\end{equation}
and its population counterpart is \(r_{\tau}(x)=\mathbb{E}[B_i]\). Hoeffding's inequality gives
\begin{equation}
\Pr\left[\left|\hat r_{\tau}(x)-r_{\tau}(x)\right|\ge \epsilon\right]
\le 2\exp(-2k_{\mathrm{nn}}\epsilon^2).
\end{equation}
Setting the right-hand side to \(\delta\) proves Eq.~\eqref{eq:structural_gate_concentration}. If \(\hat r_{\tau}(x)\ge\rho_{\min}+\epsilon\), then with the same probability,
\begin{equation}
r_{\tau}(x)\ge \hat r_{\tau}(x)-\epsilon\ge \rho_{\min}.
\end{equation}
This proves the finite-sample reliability statement. 

\subsubsection{Proof of Corollary~\ref{cor:voting_bound}}

Fix \(x\in\mathcal{H}_{\tau,\rho}\). Let
\(Z_k=\mathbb{I}[f_S^k(x)\neq y]\) be the error indicator of the \(k\)-th source
expert. Under the idealized condition in Corollary~\ref{cor:voting_bound}, the
variables \(Z_1,\ldots,Z_{K_e}\) are conditionally independent given \(x\), and
\(\mathbb{E}[Z_k\mid x]\le \bar e<1/2\). Majority voting is wrong only when
\begin{equation}
    \frac{1}{K_e}\sum_{k=1}^{K_e}Z_k \ge \frac{1}{2}.
\end{equation}
Hoeffding's inequality gives
\begin{align}
    \Pr\!\left[\frac{1}{K_e}\sum_{k=1}^{K_e}Z_k\ge\frac{1}{2}\mid x\right]
    &\le
    \Pr\!\left[
    \frac{1}{K_e}\sum_{k=1}^{K_e}Z_k-\bar e
    \ge \frac{1}{2}-\bar e
    \mid x
    \right] \\
    &\le
    \exp\!\left(-2K_e\left(\frac{1}{2}-\bar e\right)^2\right).
\end{align}
This proves Eq.~\eqref{eq:voting_bound}. 

The conditional-independence assumption is only a sufficient condition for the
closed-form exponential bound. In practice, source checkpoints trained on the
same source domain can have correlated errors. For this reason, \method{} does
not use Eq.~\eqref{eq:voting_bound} as an unconditional performance guarantee.
Instead, it uses the committee to measure empirical consensus and predictive
variance through Eqs.~\eqref{eq:consensus}--\eqref{eq:variance}. A target graph is
admitted into \(\mathcal{S}_{\mathrm{sem}}\) only when the committee is both
confident and low-disagreement, and it must further pass the structural gate in
Eq.~\eqref{eq:safe_subspace_final} before hard pseudo-label supervision is used.

\subsubsection{Proof of Proposition~\ref{prop:target_risk_bound}}

Let \(\mathcal{U}=\mathcal{X}_T\setminus\mathcal{H}_{\tau,\rho}\). The target risk
can be decomposed as
\begin{align}
    R_T(h)
    &=
    \Pr[h(x)\neq y,\,x\in\mathcal{H}_{\tau,\rho}]
    +
    \Pr[h(x)\neq y,\,x\in\mathcal{U}] \\
    &=
    \pi_{\mathcal H}
    \Pr[h(x)\neq y\mid x\in\mathcal{H}_{\tau,\rho}]
    +
    \pi_{\mathcal U}R_{\mathcal U}(h).
\end{align}
On the safe subspace, the event \(h(x)\neq y\) is contained in the union of the
two events \(h(x)\neq\tilde y(x)\) and \(\tilde y(x)\neq y\). Hence
\begin{align}
    \Pr[h(x)\neq y\mid x\in\mathcal{H}_{\tau,\rho}]
    &\le
    \Pr[h(x)\neq\tilde y(x)\mid x\in\mathcal{H}_{\tau,\rho}]
    +
    \Pr[\tilde y(x)\neq y\mid x\in\mathcal{H}_{\tau,\rho}] \\
    &\le
    \widetilde R_{\mathcal H}(h)+\bar\eta_{\tau}.
\end{align}
Substituting this inequality into the risk decomposition gives
Eq.~\eqref{eq:target_risk_bound_basic}. If the uncertain-set risk is upper
bounded by a regularizer-induced quantity
\(R_{\mathcal U}(h)\le\Gamma_{\mathcal U}(h)\), then replacing
\(R_{\mathcal U}(h)\) with \(\Gamma_{\mathcal U}(h)\) gives
Eq.~\eqref{eq:target_risk_bound_reg}. 

\subsubsection{A finite-sample variant}

The previous proposition is a population decomposition. For completeness, we
state a standard finite-sample form that explains the empirical objective in
Eq.~\eqref{eq:total_loss}. Suppose \(m\) selected target graphs are sampled from
\(\mathcal{H}_{\tau,\rho}\), and let \(\widehat R_{\mathcal H}(h)\) be the
empirical pseudo-label risk on them. For a bounded classification loss and a
hypothesis class with Rademacher complexity \(\mathfrak{R}_m(\mathcal{H})\), with
probability at least \(1-\delta\), uniformly over \(h\),
\begin{equation}
    \widetilde R_{\mathcal H}(h)
    \le
    \widehat R_{\mathcal H}(h)
    +2\mathfrak{R}_m(\mathcal{H})
    +\sqrt{\frac{\log(2/\delta)}{2m}}.
\end{equation}
Combining this inequality with Proposition~\ref{prop:target_risk_bound} gives
\begin{equation}
    R_T(h)
    \le
    \pi_{\mathcal H}
    \left(
    \widehat R_{\mathcal H}(h)
    +\bar\eta_{\tau}
    +2\mathfrak{R}_m(\mathcal{H})
    +\sqrt{\frac{\log(2/\delta)}{2m}}
    \right)
    +
    \pi_{\mathcal U}\Gamma_{\mathcal U}(h).
\end{equation}
This finite-sample variant should be read as a conventional generalization
statement for the selected pseudo-labeled subset. It is not used to claim that
pseudo-labels are reliable on the full target domain.

\subsection{Connection to Full-Domain Unbounded-Noise Analyses}
\label{app:connection_unbounded}

The safe-subspace guarantee does not contradict the full-domain unbounded-noise
result in Theorem~\ref{thm:unbounded_noise_sfda}. The two statements analyze
different adaptation protocols. The unbounded-noise result shows that if a fixed
source classifier is applied to all shifted target samples, then there may exist
a target region on which pseudo-labels are almost surely wrong. Our analysis
instead studies a selective protocol in which hard pseudo-label supervision is
restricted to
\begin{equation}
    \mathcal{H}_{\tau,\rho}
    =
    \{x:s(x)\ge\tau,\ r_{\tau}(x)\ge\rho\}.
\end{equation}
Samples outside this set are not discarded; they are used through soft target
regularization. Therefore, the theory should be interpreted as a sufficient
condition for reliable selective adaptation, not as a global claim that every
source-free pseudo-label is bounded-noise.

This distinction also explains the design of \method{}. Module I estimates
semantic reliability through source confidence and ensemble disagreement. Module
II learns a target-intrinsic representation in which neighborhood evidence is
more meaningful. Module III intersects semantic and structural evidence to form
\(\mathcal{H}_{\zeta,\rho}\). Module IV avoids assigning hard labels to the
remaining uncertain samples and instead controls them through entropy and KL
regularization.

\begin{figure*}[t]
\includegraphics[width=1.0\linewidth]{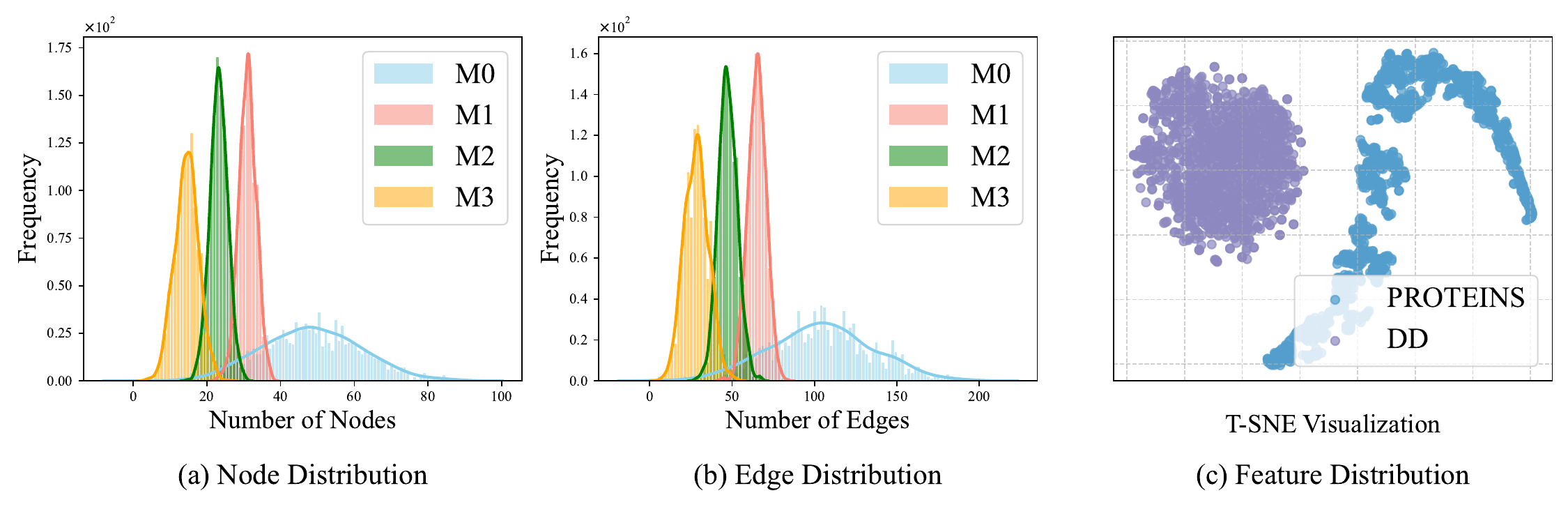}
\vspace{-0.8cm}
    \caption{Visualization of domain shifts across different types. (a,b) Node and edge distribution shifts between sub-datasets of Mutagenicity. (c) Feature distribution shift between PROTEINS and DD datasets.}
    \label{fig:shift}
\end{figure*}

\begin{table*}[t]
    \centering
    \small
    \caption{Statistics of the experimental datasets.}
    \label{tab:dataset}
    \resizebox{0.55\columnwidth}{!}{
    \begin{tabular}{lcccc}
        \toprule
        Datasets & Graphs & Avg. Nodes & Avg. Edges & Classes \\
        \midrule
        DD & 1,178 & 284.32 & 715.66 & 2 \\
        NCI1 & 4,110 & 29.87 & 32.30 & 2 \\
        Mutagenicity & 4,337 & 30.32 & 30.77 & 2 \\
        FRANKENSTEIN & 4,337 & 16.9 & 17.88 & 2 \\
        ogbg-molhiv & 41,127 & 25.5 & 27.5 & 2 \\
        CIFAR10 & 60,000 & 117.6 & 941.2 & 10 \\
        MNIST & 70,000 & 70.6 & 564.5 & 10 \\
        
        \midrule
        
        PROTEINS & 1,113 & 39.1 & 72.8 & 2 \\
        COX2 & 467 & 41.22 & 43.45 & 2 \\
        COX2\_MD & 303 & 26.28 & 335.12 & 2 \\
        BZR & 405 & 35.75 & 38.36 & 2 \\
        BZR\_MD & 306 & 21.30 & 225.06 & 2 \\
        
        \bottomrule
    \end{tabular}
    }
\end{table*}

\subsection{Dataset}\label{sec:dataset}

\subsubsection{Dataset Description}

We provide additional details of the datasets used in our experiments. 
For structure-based domain shifts, we use real-world graph benchmarks and image-derived graph benchmarks. 
For graph datasets, molecules are represented as graphs with atoms as nodes and chemical bonds as edges, while proteins are represented with amino acids as nodes and spatial or chemical proximity relations as edges. 
Following~\citep{yin2022deal,yin2023coco,yin2025dream}, we construct structural domains by sorting graphs according to graph-level structural statistics. 
Specifically, node-density shifts are generated by sorting graphs according to the number of nodes $|V|$, while edge-density shifts are generated by sorting graphs according to the number of edges $|E|$. 
The sorted graphs are then divided into quantile-based subdomains with a shared label space.

\noindent\textbf{(1) Structure-based domain shifts.}
\begin{itemize}[leftmargin=*]
\item \textbf{DD.} 
DD~\citep{dobson2003distinguishing} contains 1,178 protein graphs for binary classification, where each graph is labeled according to whether the protein is an enzyme. 
Compared with PROTEINS, DD graphs are typically larger and denser, making the dataset suitable for evaluating structural shifts with higher graph complexity. 
We partition DD into four subdomains, denoted as D0, D1, D2, and D3.
\begin{table*}[t]
\centering
\caption{Subdomain statistics for density-based partitions. \textit{Avg. Struct.} and \textit{Std. Struct.} denote the mean and standard deviation of the corresponding structural metrics. Class 0 and Class 1 are reported only for binary datasets; class counts for MNIST and CIFAR10 are omitted because they contain 10 classes.}
\label{tab:subdomain_stats}
\begin{tabular}{llcccccc}
\toprule
Dataset & Split & Domain & \#Graphs & Class 0 & Class 1 & Avg. Struct. & Std. Struct. \\
\midrule
Mutagenicity & Node & M0 & 1084 & 515 & 569 & 52.71 & 28.12 \\
Mutagenicity & Node & M1 & 1084 & 730 & 354 & 31.00 & 2.37 \\
Mutagenicity & Node & M2 & 1084 & 649 & 435 & 23.16 & 2.42 \\
Mutagenicity & Node & M3 & 1085 & 507 & 578 & 14.41 & 3.32 \\
Mutagenicity & Edge & M0 & 1084 & 557 & 527 & 52.80 & 16.91 \\
Mutagenicity & Edge & M1 & 1084 & 723 & 361 & 32.55 & 2.70 \\
Mutagenicity & Edge & M2 & 1084 & 618 & 466 & 23.66 & 2.81 \\
Mutagenicity & Edge & M3 & 1085 & 503 & 582 & 14.08 & 3.64 \\
NCI1 & Node & N0 & 1027 & 277 & 750 & 47.97 & 13.78 \\
NCI1 & Node & N1 & 1027 & 504 & 523 & 30.59 & 2.43 \\
NCI1 & Node & N2 & 1027 & 588 & 439 & 23.83 & 1.72 \\
NCI1 & Node & N3 & 1029 & 684 & 345 & 17.09 & 3.03 \\
NCI1 & Edge & N0 & 1027 & 281 & 746 & 52.25 & 14.98 \\
NCI1 & Edge & N1 & 1027 & 524 & 503 & 33.16 & 2.67 \\
NCI1 & Edge & N2 & 1027 & 546 & 481 & 25.79 & 1.86 \\
NCI1 & Edge & N3 & 1029 & 702 & 327 & 18.04 & 3.54 \\
FRANKENSTEIN & Node & F0 & 1084 & 468 & 616 & 28.46 & 14.17 \\
FRANKENSTEIN & Node & F1 & 1084 & 355 & 729 & 18.13 & 1.42 \\
FRANKENSTEIN & Node & F2 & 1084 & 457 & 627 & 13.08 & 1.47 \\
FRANKENSTEIN & Node & F3 & 1085 & 656 & 429 & 7.93 & 2.06 \\
FRANKENSTEIN & Edge & F0 & 1084 & 434 & 650 & 31.13 & 14.81 \\
FRANKENSTEIN & Edge & F1 & 1084 & 363 & 721 & 19.54 & 1.70 \\
FRANKENSTEIN & Edge & F2 & 1084 & 484 & 600 & 13.43 & 1.74 \\
FRANKENSTEIN & Edge & F3 & 1085 & 655 & 430 & 7.42 & 2.35 \\
DD & Node & D0 & 294 & 54 & 240 & 96.69 & 28.71 \\
DD & Node & D1 & 295 & 153 & 142 & 191.05 & 29.01 \\
DD & Node & D2 & 294 & 233 & 61 & 298.36 & 32.55 \\
DD & Node & D3 & 295 & 251 & 44 & 550.58 & 421.71 \\
DD & Edge & D0 & 294 & 57 & 237 & 229.79 & 70.62 \\
DD & Edge & D1 & 295 & 153 & 142 & 470.15 & 74.44 \\
DD & Edge & D2 & 294 & 227 & 67 & 746.97 & 82.80 \\
DD & Edge & D3 & 295 & 254 & 41 & 1414.19 & 1058.74 \\
ogbg-molhiv & Node & H0 & 10281 & 10080 & 201 & 14.90 & 2.65 \\
ogbg-molhiv & Node & H1 & 10281 & 9972 & 309 & 20.74 & 1.35 \\
ogbg-molhiv & Node & H2 & 10281 & 9987 & 294 & 25.92 & 1.85 \\
ogbg-molhiv & Node & H3 & 10284 & 9645 & 639 & 40.48 & 14.66 \\
ogbg-molhiv & Edge & H0 & 10281 & 10079 & 202 & 15.54 & 3.07 \\
ogbg-molhiv & Edge & H1 & 10281 & 9960 & 321 & 22.31 & 1.55 \\
ogbg-molhiv & Edge & H2 & 10281 & 9994 & 287 & 28.15 & 2.08 \\
ogbg-molhiv & Edge & H3 & 10284 & 9651 & 633 & 43.87 & 15.61 \\
MNIST & Edge & S0 & 23334 & - & - & 204.40 & 20.20 \\
MNIST & Edge & S1 & 23333 & - & - & 273.80 & 26.26 \\
MNIST & Edge & S2 & 23333 & - & - & 360.41 & 33.79 \\
CIFAR10 & Edge & C0 & 20000 & - & - & 400.52 & 14.96 \\
CIFAR10 & Edge & C1 & 20000 & - & - & 593.02 & 21.92 \\
CIFAR10 & Edge & C2 & 20000 & - & - & 795.90 & 29.14 \\
\bottomrule
\end{tabular}
\end{table*}
\item \textbf{NCI1.} 
NCI1~\citep{wale2008comparison} consists of 4,110 molecular graphs. 
Each graph corresponds to a chemical compound, and the prediction task is to determine whether the compound is active against cancer cell growth. 
We divide NCI1 into four structure-based subdomains, denoted as N0, N1, N2, and N3.

\item \textbf{Mutagenicity.} 
Mutagenicity~\citep{kazius2005derivation} contains 4,337 molecular graphs labeled by mutagenic effect. 
The dataset contains diverse molecular sizes and bonding patterns, which naturally induce structural heterogeneity. 
We split it into four subdomains, denoted as M0, M1, M2, and M3.

\item \textbf{FRANKENSTEIN.} 
FRANKENSTEIN~\citep{orsini2015graph} includes 4,337 molecular graphs for biological activity prediction. 
The graphs are relatively small compared with several other molecular benchmarks, providing a complementary setting for evaluating adaptation across different graph-size regimes. 
We partition the dataset into F0, F1, F2, and F3.

\item \textbf{ogbg-molhiv.} 
ogbg-molhiv~\citep{hu2021ogblsc} is an OGB molecular benchmark with 41,127 graphs, where the task is to predict whether a molecule inhibits HIV replication. 
Compared with the TUDataset benchmarks, it is larger and more class-imbalanced, offering a more realistic molecular adaptation scenario. 
We construct four structural subdomains, denoted as H0, H1, H2, and H3.

\item \textbf{MNIST.} 
MNIST~\citep{lecun2002gradient} contains 70,000 grayscale digit images from 10 classes. 
We convert each image into a graph representation, where nodes correspond to pixels or superpixels and edges encode spatial adjacency. 
The resulting graphs are partitioned into three edge-based subdomains, denoted as S0, S1, and S2.

\item \textbf{CIFAR-10.} 
CIFAR-10~\citep{krizhevsky2009learning} contains 60,000 color images from 10 object categories. 
Similar to MNIST, each image is transformed into a graph with local image regions as nodes and spatial neighborhood relations as edges. 
We construct three edge-based subdomains, denoted as C0, C1, and C2.
\end{itemize}

\noindent\textbf{(2) Feature-based domain shifts.}
For feature-based domain shifts, we evaluate paired domains that share the same prediction semantics but differ in node-feature distributions. 
These tasks test whether a method can adapt when the global graph-level semantics are comparable but the local feature statistics change across domains.

\begin{itemize}[leftmargin=*]
\item \textbf{PROTEINS and DD.} 
PROTEINS~\citep{dobson2003distinguishing} contains 1,113 protein graphs for binary graph classification, with labels indicating enzyme or non-enzyme classes. 
Together with DD, it forms the PROTEINS$\leftrightarrow$DD transfer pair. 
This pair evaluates adaptation across protein graph domains with related label semantics but different graph and feature characteristics.

\item \textbf{COX2 and COX2\_MD.} 
COX2 contains 467 molecular graphs, while COX2\_MD contains 303 molecular graphs from a modified domain with the same label space. 
Both datasets describe molecular structures and are used as the COX2$\leftrightarrow$COX2\_MD transfer pair. 
This setting evaluates feature-based distribution shifts between related molecular domains.

\item \textbf{BZR and BZR\_MD.} 
BZR contains 405 molecular graphs, and BZR\_MD contains 306 molecular graphs from the corresponding modified domain. 
The two datasets share the same binary prediction semantics and form the BZR$\leftrightarrow$BZR\_MD transfer pair. 
This pair provides another molecular feature-shift benchmark with consistent label space but shifted node-feature statistics.
\end{itemize}

Table~\ref{tab:subdomain_stats} provides detailed statistics of the constructed subdomains for all structure-based benchmarks. 
For each dataset and split type, it reports the subdomain identifier, number of graphs, class distribution, and the mean and standard deviation of the corresponding structural statistic. 
For node-density splits, Avg. Struct. and Std. Struct. refer to the number of nodes; for edge-density splits, they refer to the number of edges used to characterize graph connectivity. 
These statistics show that the constructed subdomains exhibit clear differences in graph scale or connectivity while maintaining the same label space, thereby forming meaningful structural domain shifts. 
They also reveal that density-based partitioning may introduce class-prior variations across subdomains, which makes the source-free adaptation setting more realistic and challenging.

\subsubsection{Data Processing}

For real-world graph datasets from TUDataset\footnote{\url{https://chrsmrrs.github.io/datasets/}}, including DD, PROTEINS, Mutagenicity, NCI1, FRANKENSTEIN, BZR, BZR\_MD, COX2, and COX2\_MD, we follow the standard preprocessing pipeline provided by PyTorch Geometric\footnote{\url{https://pyg.org/}}. 
Each graph is converted into a PyG data object, with the original node attributes or node labels used as input features when available. For image benchmarks, i.e., MNIST and CIFAR10, we transform each image into a graph representation. Nodes correspond to pixels or superpixels, and edges are constructed according to spatial proximity. Specifically, we build KNN graphs based on node spatial coordinates, where the neighborhood size is adjusted to control graph density and induce edge-density domain shifts~\citep{dwivedi2023benchmarking, wang2026dsbd}. The original image category is used as the graph-level label. For datasets from the Open Graph Benchmark (OGB)\footnote{\url{https://ogb.stanford.edu/}}, such as ogbg-molhiv, we follow the official OGB preprocessing and evaluation protocol. The molecular graphs are loaded with their provided atom and bond features, and the ROC-AUC metric is used following the standard OGB setting.

\subsection{Baselines}
\label{app:baseline_protocol}

We compare the proposed \method{} with a comprehensive set of competitive baselines on the datasets above. These baselines include six graph kernels and general graph neural networks, including WL~\citep{shervashidze2011weisfeiler}, PathNN~\citep{michel2023path}, GCN~\citep{kipf2022semi}, GIN~\citep{xu2018powerful}, CIN~\citep{bodnar2021weisfeiler}, and GMT~\citep{baek2021accurate}; four source-free domain adaptation methods: SFDA\_LLN~\citep{yi2023sfda_noisy}, SF(DA)$^2$~\citep{hwang2024sf},  NVC\_LLN~\citep{xu2025unraveling}, and Ucon\_SFDA~\cite{xu2025revisiting}; four source-free graph domain adaptation methods: SOGA~\citep{mao2024sourcefreegraph}, GraphCTA~\citep{zhang2024collaborate}, GALA~\citep{luo2024gala}, and GraphATA~\cite{zhang2025aggregate}.

\begin{itemize}
    \item \textbf{WL}: The Weisfeiler–Lehman (WL) graph kernel~\citep{shervashidze2011weisfeiler} captures hierarchical graph structures by iteratively relabeling nodes based on neighborhood information. It computes graph similarities through subtree pattern matching, enabling efficient and expressive structural comparison.
    \item \textbf{PathNN}: Path Neural Networks (PathNN)~\citep{michel2023path} enhance the expressive power of graph neural networks by modeling relationships along paths rather than only adjacent nodes. They aggregate path-based representations through attention mechanisms, effectively capturing higher-order structural dependencies and improving graph-level prediction accuracy.

    \item  \textbf{GCN}: Graph Convolutional Networks (GCN)~\citep{kipf2022semi} perform graph learning by propagating and aggregating feature information from neighboring nodes through spectral convolutions. This approach effectively captures local graph structure while enabling efficient end-to-end training.
    \item \textbf{GIN}: Graph Isomorphism Networks (GIN)~\citep{xu2018powerful} achieve maximal discriminative power among message-passing GNNs by using sum aggregation and multi-layer perceptrons to capture injective neighborhood functions, enabling them to distinguish complex graph structures equivalent to the Weisfeiler–Lehman test. 
    \item \textbf{CIN}: Cellular Weisfeiler–Lehman Networks (CIN)~\citep{bodnar2021weisfeiler} extend the Weisfeiler–Lehman framework to higher-dimensional cellular complexes, enabling message passing beyond edges to higher-order cells. This design captures multi-scale structural interactions and richer topological dependencies in complex graphs. 
    \item \textbf{GMT}: Graph Multiset Transformer (GMT) \citep{baek2021accurate} introduces attention-based multiset pooling to learn expressive graph-level representations. By employing learnable queries to aggregate node embeddings, GMT flexibly captures diverse structural patterns and global dependencies for accurate graph representation learning. 
    \item \textbf{SFDA\_LLN:} SFDA\_LLN~\citep{yi2023sfda_noisy} introduces a robust source-free domain adaptation framework that models label noise as unbounded and exploits the early-time training phenomenon (ETP) to prevent noise memorization. By incorporating early-learning regularization into the SFDA objective, it effectively mitigates noisy pseudo-labels and significantly improves cross-domain adaptation performance.
    \item \textbf{SF(DA)$^2$}: SF(DA)$^2$~\citep{hwang2024sf} reinterprets source-free domain adaptation from a data augmentation perspective. It generates diverse target-domain variants to simulate source data, enabling model adaptation through consistency regularization and self-training without requiring access to source samples.
    \item \textbf{NVC\_LLN:} NVC\_LLN~\citep{xu2025unraveling} provides a theoretical and empirical analysis of label noise in SFDA, distinguishing between bounded and unbounded noise regimes. It introduces principled strategies to mitigate noise-induced degradation, offering both theoretical insights and practical algorithms that enhance adaptation robustness across domains.
    \item \textbf{Ucon\_SFDA:} Ucon\_SFDA~\citep{xu2025revisiting} revisits source-free domain adaptation from the perspective of uncertainty control. It explicitly models and regulates target-side uncertainty to reduce unreliable pseudo-labeling, thereby improving adaptation robustness without accessing source data.

    \item \textbf{SOGA:} SOGA~\citep{mao2024sourcefreegraph}  introduces a graph-based framework for source-free domain adaptation. It transfers knowledge from a pre-trained source model to the target domain by aligning graph structural features and node representations without accessing source data, leveraging graph topology to enhance domain-invariant learning and cross-domain generalization.
    \item \textbf{GraphCTA:} GraphCTA~\citep{zhang2024collaborate} proposes source-free graph domain adaptation via bi-directional collaboration between source hypothesis and target structure, iteratively refining pseudo-labels and representations through consistency, topology-aware alignment, and cooperative training to improve cross-domain generalization.
    \item \textbf{GALA:} GALA~\citep{luo2024gala} introduces a diffusion-driven graph alignment framework that preserves structural consistency between domains. It employs a jigsaw-style self-supervised task to enhance feature discrimination and jointly optimizes diffusion alignment and representation learning, enabling robust adaptation without access to source data.
    \item \textbf{GraphATA:} GraphATA~\citep{zhang2025aggregate} addresses multi-source-free graph domain adaptation through node-centric aggregation. It dynamically aggregates knowledge from multiple source hypotheses according to local graph context, enabling fine-grained target adaptation without using source graphs.
\end{itemize}

All methods are evaluated on the same source-target splits. Results
are reported as mean performance with standard deviation over five random seeds. Small numerical differences whose standard-deviation ranges overlap are treated
as empirically comparable rather than as strict wins. This convention is
important because several transfer pairs have margins that are smaller than the
run-to-run variance.

\subsection{Algorithm}

The complete training procedure is summarized in Algorithm~\ref{alg:s2plr}.
\begin{algorithm}[t]
\caption{Training Procedure of \method{}}
\label{alg:s2plr}
\begin{algorithmic}[1]
\State \textbf{Input:} Source experts \(\mathcal{M}\), target data \(\mathcal{D}^t\), thresholds \(\zeta,\rho_{\min}\), variance percentile \(q_u\), neighborhood size \(k_{\mathrm{nn}}\), maximum epochs \(E_{\max}\).
\State \textbf{Output:} Adapted target model \(f_T\).
\State Initialize \(f_T\) from the primary source checkpoint and initialize the target structure encoder \(g_{\psi}\).
\State Warm up \(g_{\psi}\) on target graphs by minimizing \(\mathcal{L}_{\mathrm{GCL}}\) in Eq.~\eqref{eq:gcl_loss}.
\For{epoch \(=1\) to \(E_{\max}\)}
    \State Compute ensemble consensus \(\bar p_j\), pseudo-label \(\hat y_j\), confidence \(s_j\), and variance \(u_j\) for all \(G_j^t\in\mathcal{D}^t\).
    \State Set \(\nu=Q_{q_u}(\{u_j\}_{j=1}^{n_t})\).
    \State Select semantic candidates \(\mathcal{S}_{\mathrm{sem}}\) using Eq.~\eqref{eq:selection_criteria}.
    \State Compute target structural embeddings with \(g_{\psi}\) and construct the \(k_{\mathrm{nn}}\)-nearest-neighbor graph.
    \State Compute \(\rho_j\) using Eq.~\eqref{eq:consistency_score} and identify \(\mathcal{H}_{\zeta,\rho}\) using Eq.~\eqref{eq:safe_subspace_final}.
    \State Set \(\mathcal{U}=\mathcal{D}^t\setminus\mathcal{H}_{\zeta,\rho}\).
    \For{each mini-batch \(\mathcal{B}\subset\mathcal{D}^t\)}
        \State Compute \(\mathcal{L}_{\mathcal H}\) on \(\mathcal{B}\cap\mathcal{H}_{\zeta,\rho}\) using Eq.~\eqref{eq:safe_ce_loss}.
        \State Compute \(\mathcal{L}_{\mathrm{reg}}\) on \(\mathcal{B}\cap\mathcal{U}\) using Eq.~\eqref{eq:reg_loss}.
        \State Compute \(\mathcal{L}_{\mathrm{GCL}}\) on \(\mathcal{B}\) using Eq.~\eqref{eq:gcl_loss}.
        \State Update \(f_T\) and \(g_{\psi}\) by minimizing Eq.~\eqref{eq:total_loss}.
    \EndFor
\EndFor
\State \textbf{return} \(f_T\).
\end{algorithmic}
\end{algorithm}

\begin{table*}[t]
\centering
\small
\caption{Paired Wilcoxon signed-rank tests on reported task-level mean performance over all node- and edge-shift transfer tasks, with Holm correction for multiple comparisons.}
\label{tab:significance_tests}
\begin{tabular}{lcccc}
\toprule
Comparison & Test Scope & Test Statistic $W$ & Holm-adjusted $p$-value & Significant at $0.05$ \\
\midrule
\method{} vs. GraphATA 
& 132 node/edge tasks & 849.0 & $8.94\times 10^{-16}$ & Yes \\

\method{} vs. GALA 
& 132 node/edge tasks & 810.0 & $8.65\times 10^{-16}$ & Yes \\

\method{} vs. GraphCTA 
& 132 node/edge tasks & 521.5 & $7.43\times 10^{-18}$ & Yes \\

\method{} vs. SOGA 
& 132 node/edge tasks & 378.0 & $3.29\times 10^{-19}$ & Yes \\

\method{} vs. UCon\_SFDA 
& 132 node/edge tasks & 272.5 & $4.39\times 10^{-20}$ & Yes \\

\method{} vs. NVC\_LLN 
& 132 node/edge tasks & 73.5 & $8.84\times 10^{-22}$ & Yes \\

\method{} vs. SF(DA)$^2$ 
& 132 node/edge tasks & 137.5 & $4.14\times 10^{-21}$ & Yes \\

\method{} vs. SFDA\_LLN 
& 132 node/edge tasks & 103.0 & $1.50\times 10^{-21}$ & Yes \\
\bottomrule
\end{tabular}
\end{table*}

\subsection{More experimental results}

\subsubsection{More performance comparison}\label{sec:more_experiments}

In this section, we present additional performance comparisons between \method{} and all baseline methods on more graph benchmarks, as summarized in Tables~\ref{tab:dd_node}--\ref{tab:hiv_idx}. 
The results cover node-density and edge-density domain shifts on DD, FRANKENSTEIN, Mutagenicity, NCI1, and ogbg-molhiv. Overall, \method{} achieves the best or consistently competitive performance in most transfer settings, further demonstrating its robustness across molecular and protein graph domains.

Compared with conventional graph classifiers and kernel-based methods, \method{} shows clear advantages because it explicitly addresses source-target structural mismatch during adaptation. 
Although existing source-free and graph adaptation baselines improve over non-adaptive models, their performance still varies across transfer directions, especially under large graph-size or connectivity shifts. 
The paired Wilcoxon signed-rank tests in Table~\ref{tab:significance_tests} further support this observation. 
Based on the reported task-level mean performance over 132 node- and edge-shift transfer tasks, \method{} achieves statistically significant improvements over representative source-free and graph adaptation baselines after Holm correction, including GraphATA, GALA, GraphCTA, SOGA, Ucon\_SFDA, NVC\_LLN, SF(DA)$^2$, and SFDA\_LLN. 
This indicates that the gains of \method{} are not concentrated on a few favorable transfer pairs, but are consistent across a broad range of structural domain shifts. 
In particular, the significant improvements over strong graph-specific baselines such as GraphATA, GALA, and GraphCTA suggest that topology-aware adaptation alone is not sufficient; reliable source-free graph adaptation also benefits from multi-expert semantic reliability estimation, target-intrinsic structure learning, neighborhood-consistency verification, and soft regularization on uncertain samples. 
These results reinforce the conclusion that effective source-free graph adaptation requires both conservative safe-subspace supervision and principled exploitation of the remaining uncertain target data.

\subsubsection{More Ablation study}\label{sec:more_ablation}

To further validate the effectiveness of each component in \method{}, we conduct additional ablation studies on the DD, NCI1, FRANKENSTEIN, and ogbg-molhiv datasets. 
Specifically, we evaluate five ablated variants of \method{}, including \method{} w/o ME, \method{} w/o CF, \method{} w/o TS, \method{} w/o NC, and \method{} w/o SR. 
The corresponding experimental results are reported in Tables~\ref{tab:ablation_study_dd}, \ref{tab:ablation_study_nci1}, \ref{tab:ablation_study_frank}, and \ref{tab:ablation_study_molhiv}. 
Overall, the observed trends are consistent with those discussed in Section~\ref{sec:ablation}, further confirming that each component contributes to the robustness of \method{} under source-free graph domain adaptation.

These ablations also correspond to the theoretical risk decomposition in Proposition~\ref{prop:target_risk_bound}. 
Removing multi-expert uncertainty quantification weakens the identification of consensus pseudo-labels and makes the safe set more vulnerable to source-biased predictions. 
Removing the confidence filter primarily weakens the control of the pseudo-label noise term $\bar\eta_{\tau}$, while removing the neighborhood-consistency gate weakens the empirical certification of the safe subspace $\mathcal{H}_{\tau,\rho}$. 
Removing the target-intrinsic structure branch makes neighborhood construction less reliable and may enlarge the effective radius term $L\bar r$. 
Removing the soft regularization on the uncertain set $\mathcal{U}$ weakens the control of the uncertain-set risk term $\Gamma_{\mathcal U}(h)$. 
Beyond accuracy, safe-subspace coverage and selected pseudo-label precision are direct post-hoc diagnostics for the selective-learning mechanism. 
When target labels are used for such analysis, these diagnostics are reported separately from model selection and do not affect the source-free adaptation protocol.

\begin{figure*}[t]
    \centering

    \begin{subfigure}[t]{0.24\linewidth}
        \centering
        \includegraphics[width=\linewidth]{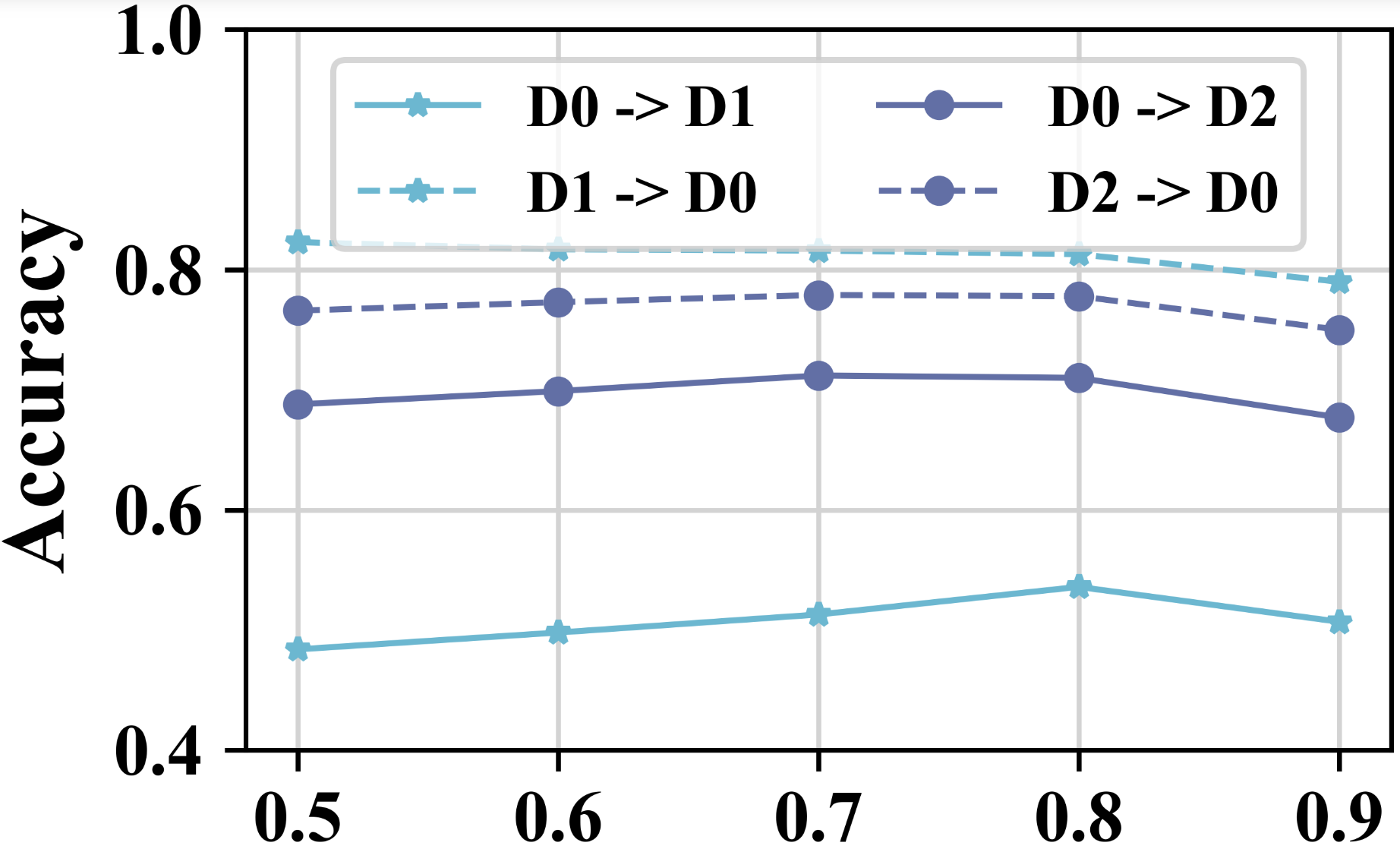}
        \caption{DD}
    \end{subfigure}
    \hfill
    \begin{subfigure}[t]{0.24\linewidth}
        \centering
        \includegraphics[width=\linewidth]{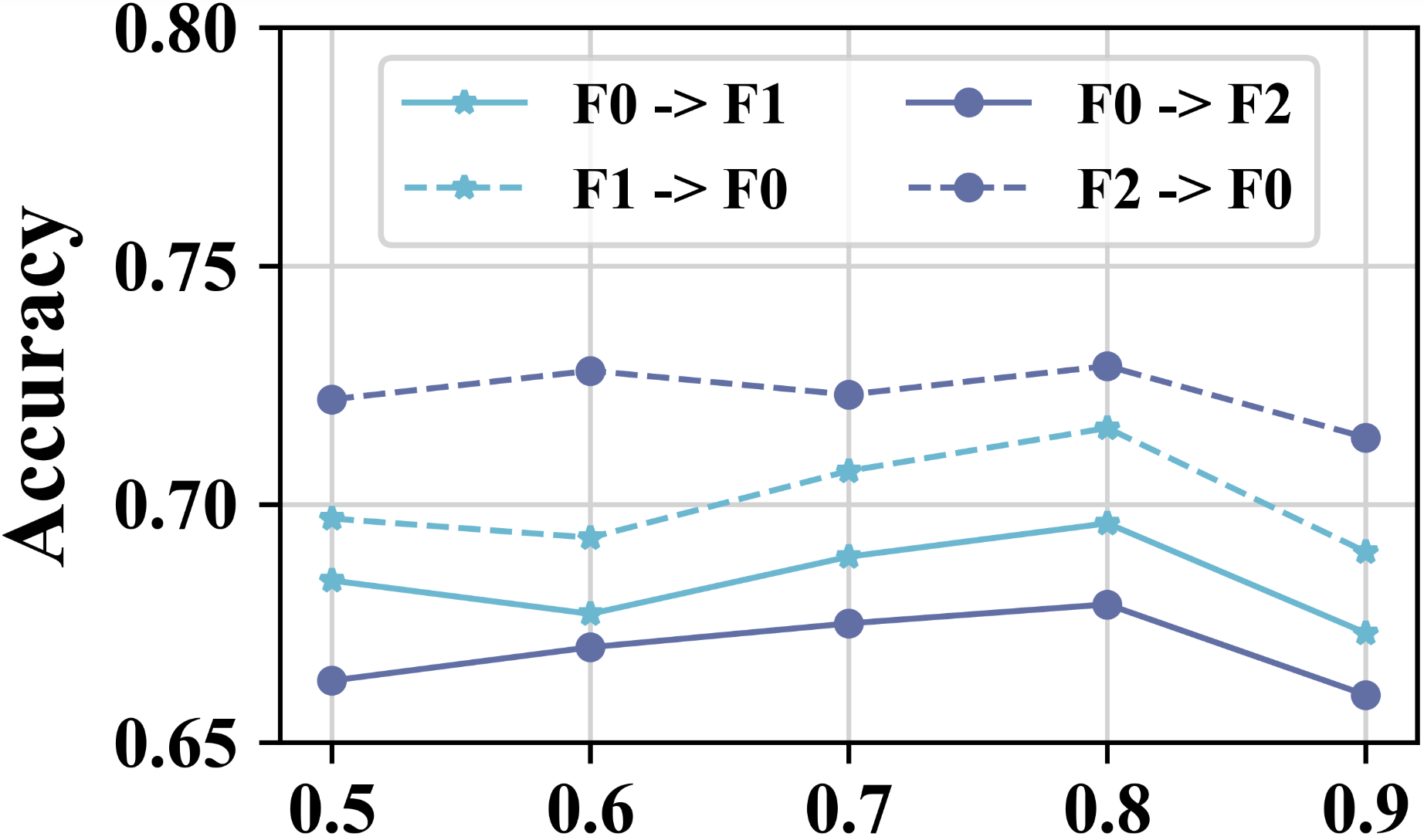}
        \caption{FRANKENSTEIN}
    \end{subfigure}
    \hfill
    \begin{subfigure}[t]{0.24\linewidth}
        \centering
        \includegraphics[width=\linewidth]{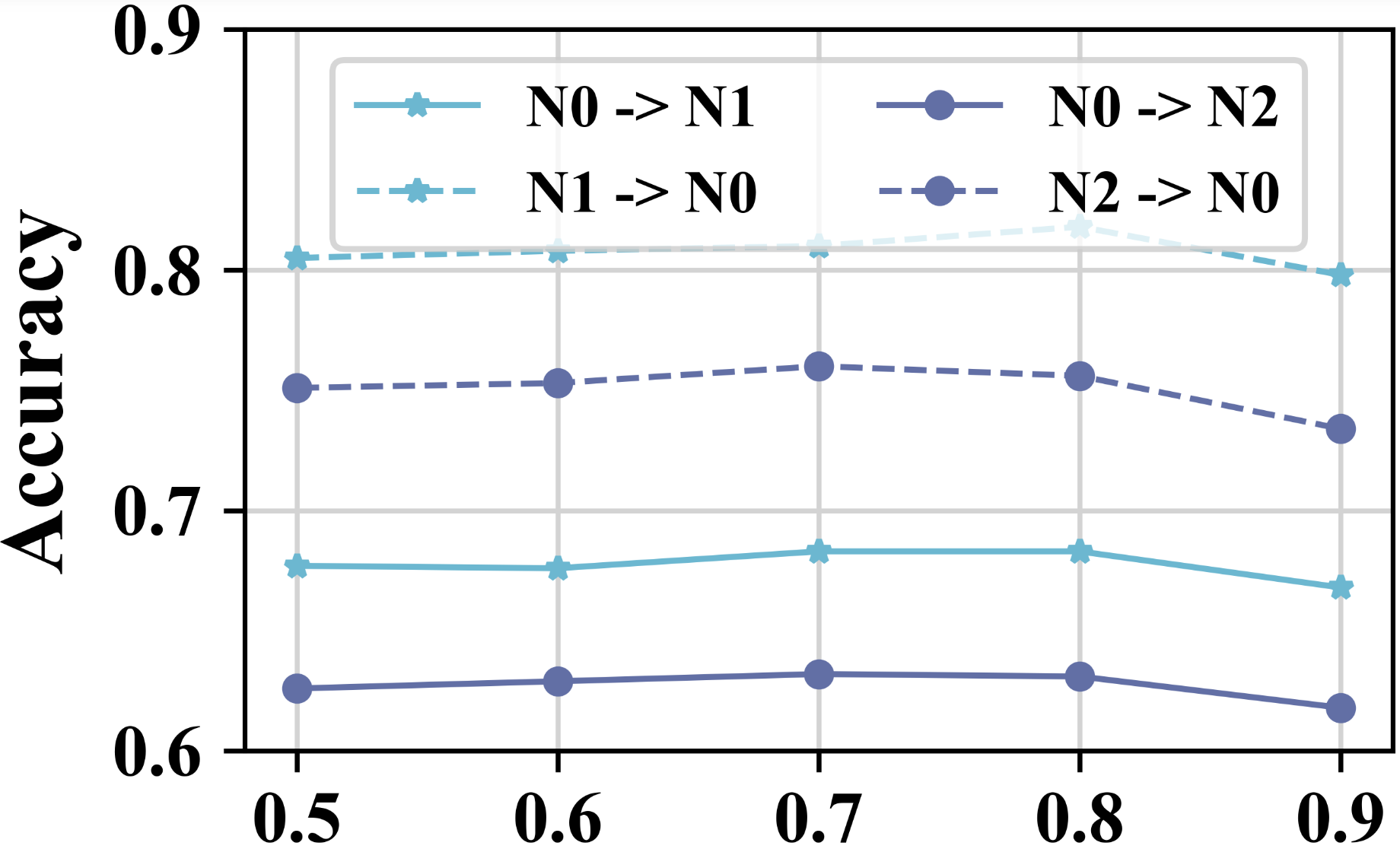}
        \caption{NCI1}
    \end{subfigure}
    \hfill
    \begin{subfigure}[t]{0.24\linewidth}
        \centering
        \includegraphics[width=\linewidth]{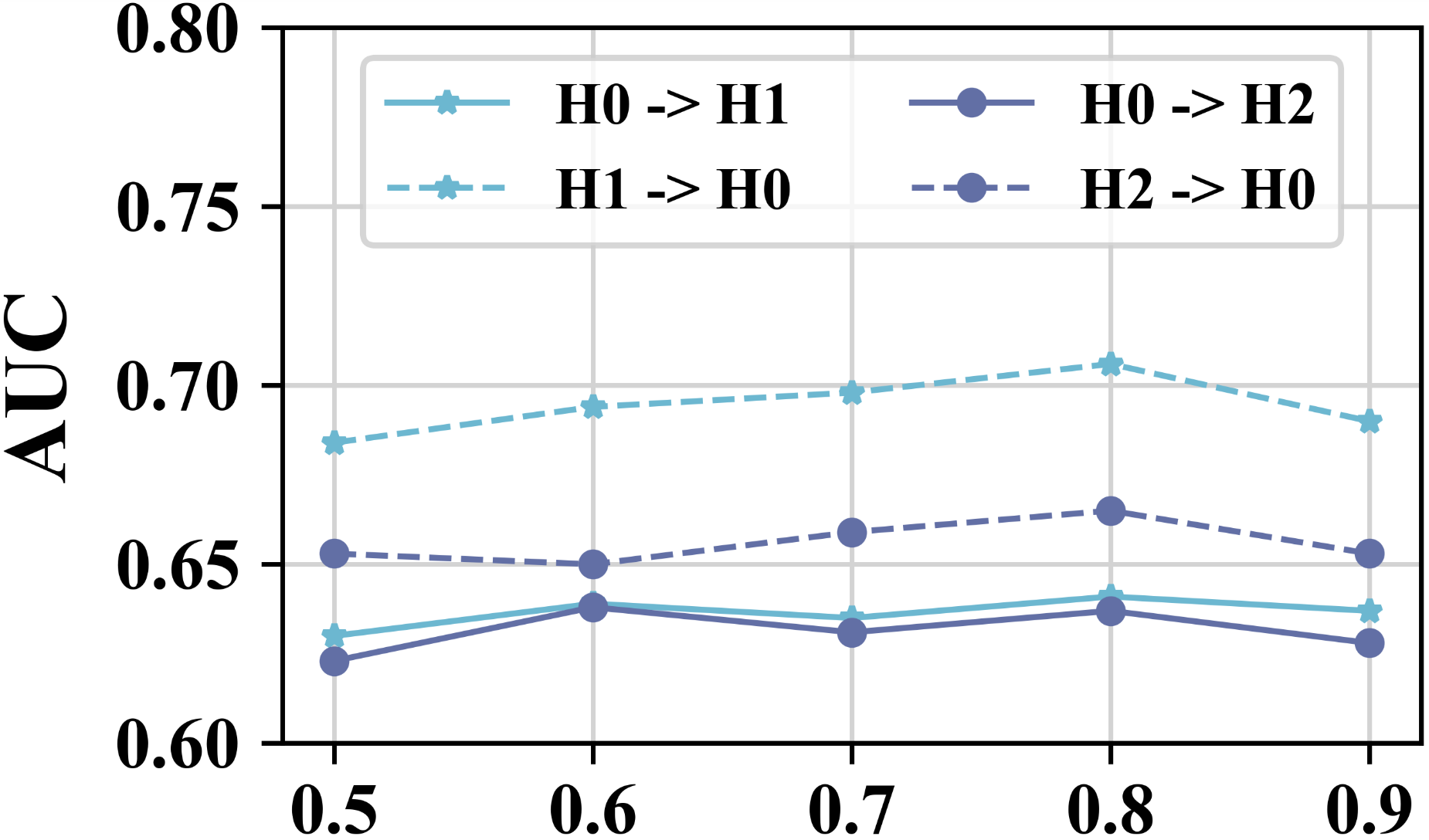}
        \caption{ogbg-molhiv}
    \end{subfigure}

    \vspace{-3pt}

    \caption{Sensitivity analysis of the confidence threshold $\zeta$ on DD, FRANKENSTEIN, NCI1, and ogbg-molhiv.}

    \vspace{-0.1cm}
    \label{fig:zeta_sensitivity}
\end{figure*}

\begin{figure*}[t]
    \centering

    \begin{subfigure}[t]{0.24\linewidth}
        \centering
        \includegraphics[width=\linewidth]{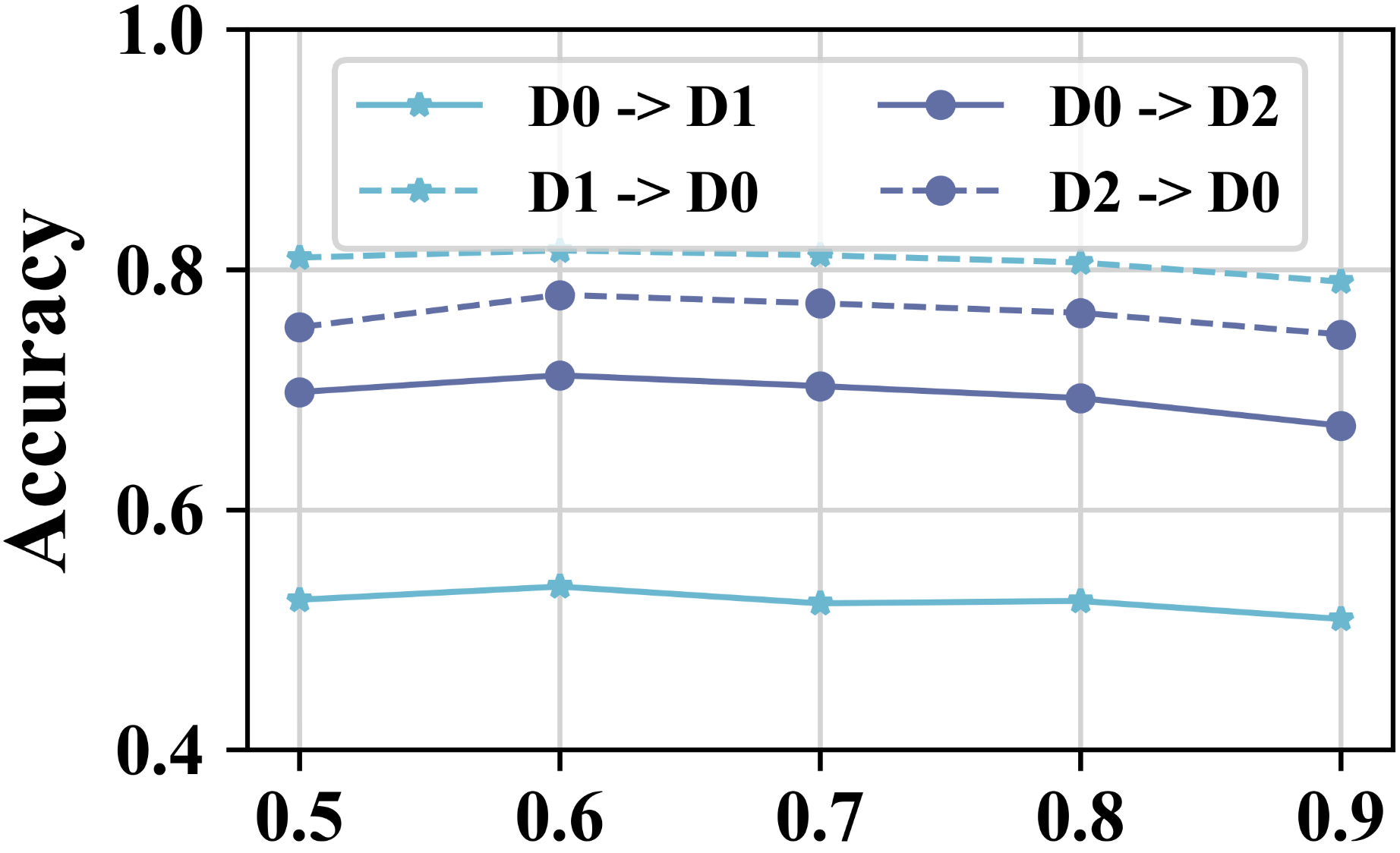}
        \caption{DD}
    \end{subfigure}
    \hfill
    \begin{subfigure}[t]{0.24\linewidth}
        \centering
        \includegraphics[width=\linewidth]{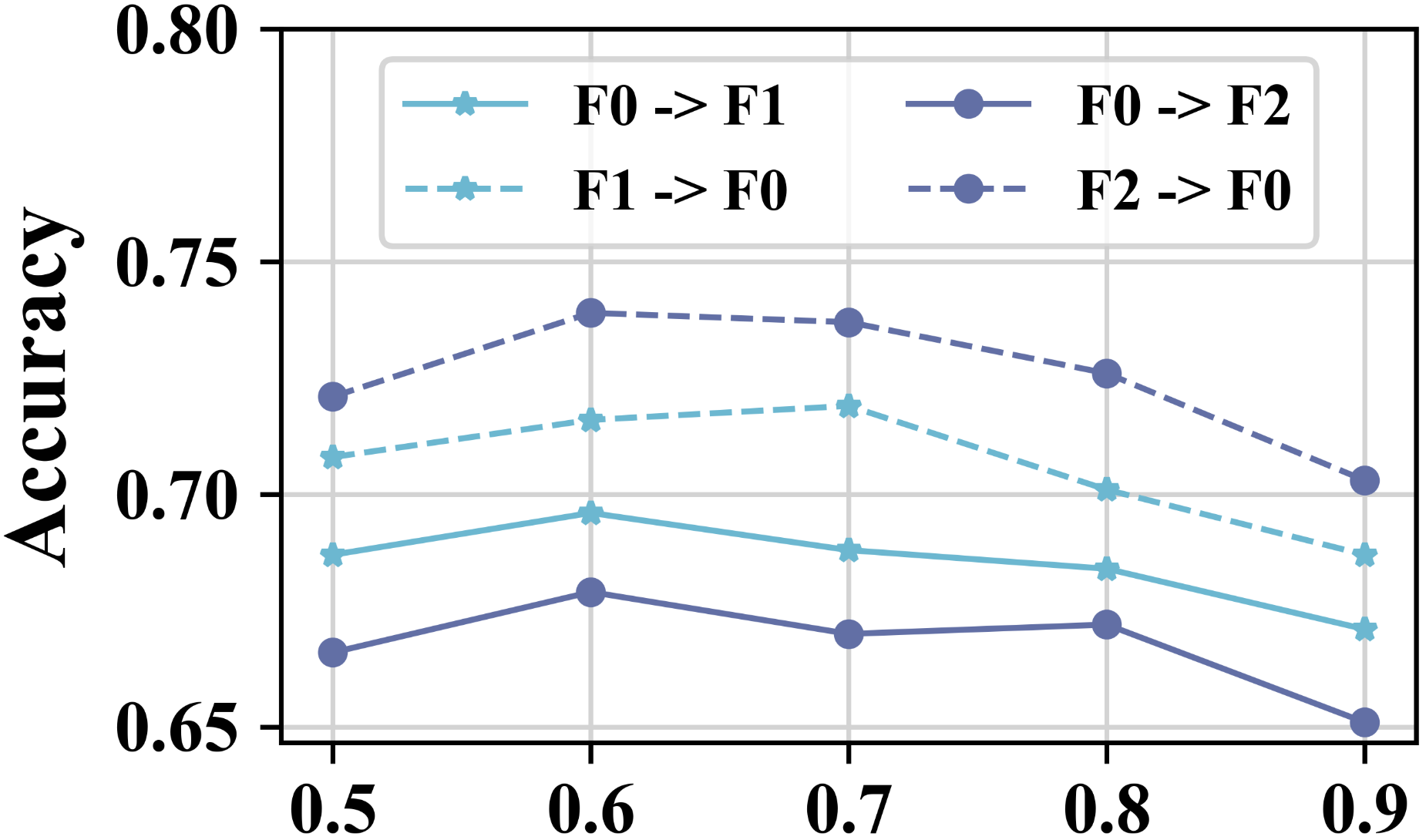}
        \caption{FRANKENSTEIN}
    \end{subfigure}
    \hfill
    \begin{subfigure}[t]{0.24\linewidth}
        \centering
        \includegraphics[width=\linewidth]{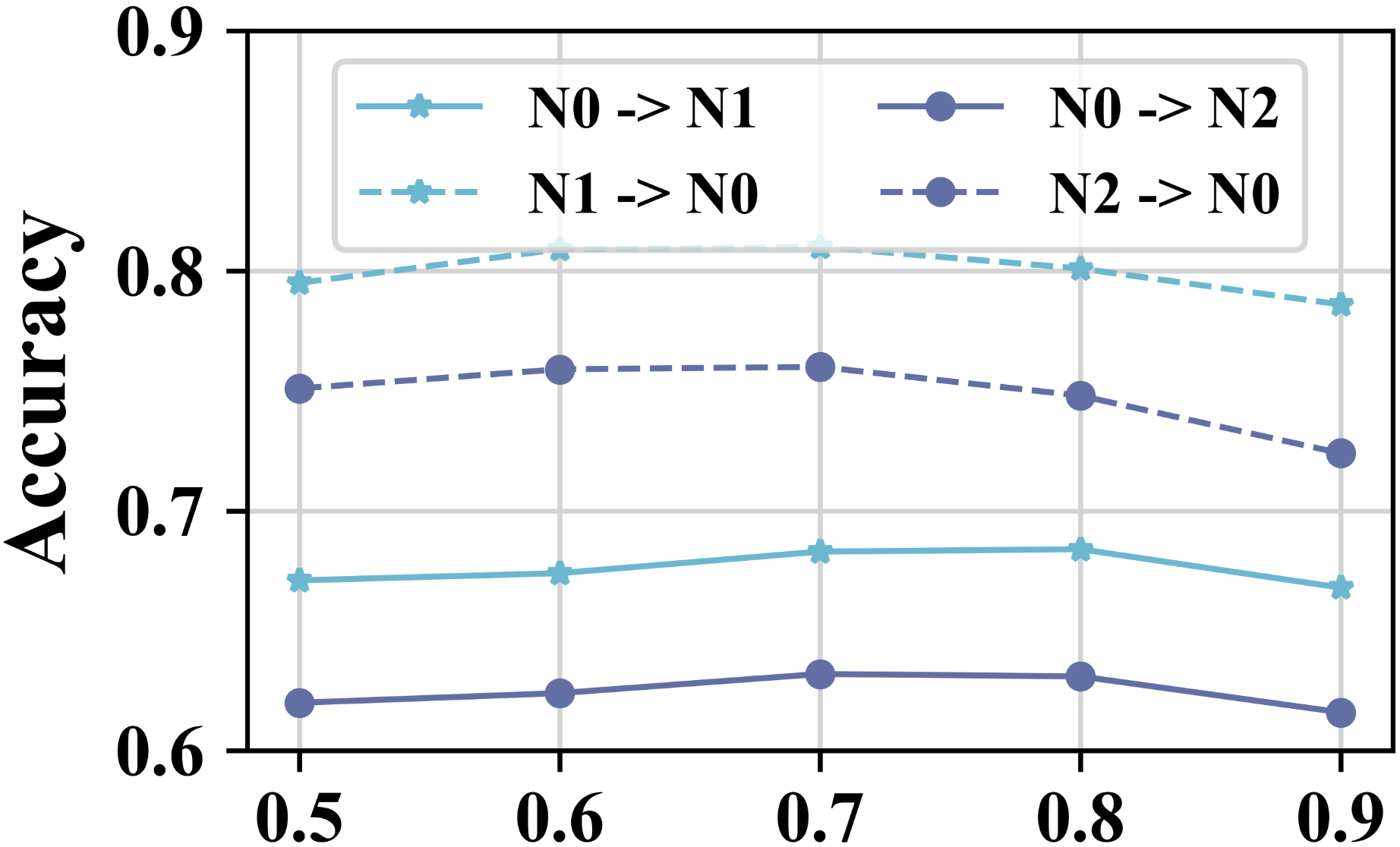}
        \caption{NCI1}
    \end{subfigure}
    \hfill
    \begin{subfigure}[t]{0.24\linewidth}
        \centering
        \includegraphics[width=\linewidth]{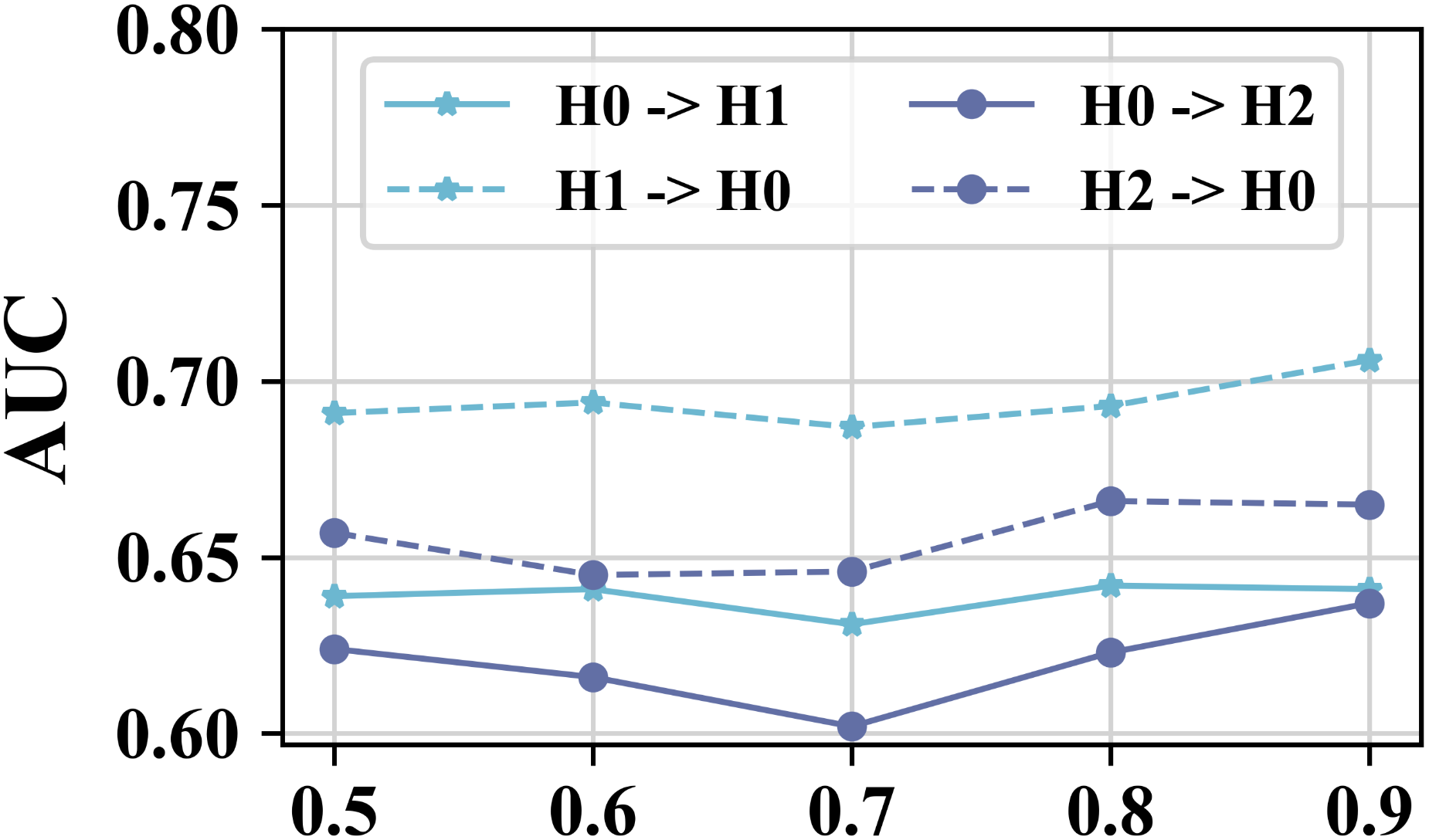}
        \caption{ogbg-molhiv}
    \end{subfigure}

    \vspace{-3pt}

    \caption{Sensitivity analysis of the neighborhood-consistency threshold $\rho_{\min}$ on DD, FRANKENSTEIN, NCI1, and ogbg-molhiv.}

    \vspace{-0.1cm}
    \label{fig:rho_sensitivity}
\end{figure*}

\begin{figure*}[t]
    \centering

    \begin{subfigure}[t]{0.19\linewidth}
        \centering
        \includegraphics[width=\linewidth]{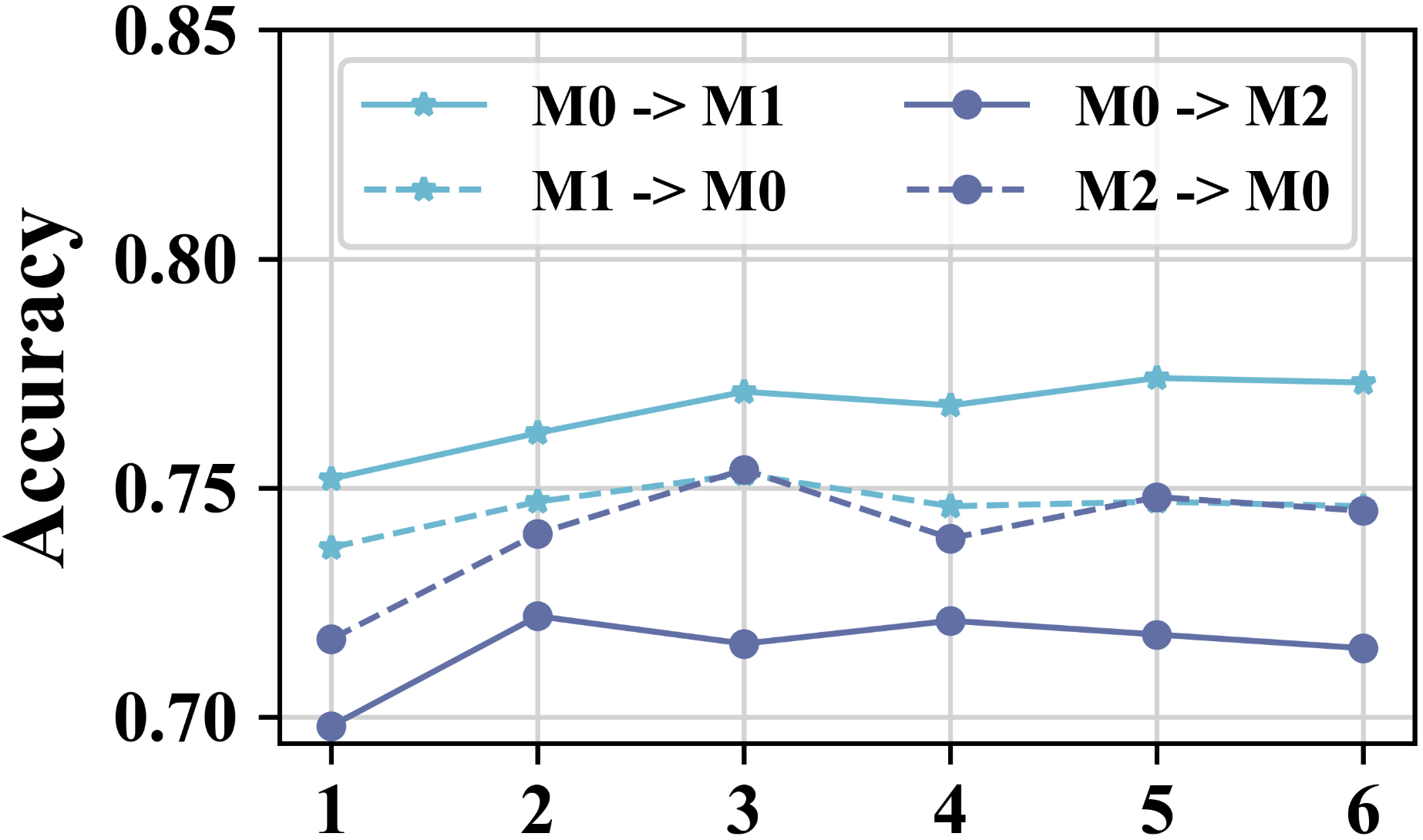}
        \caption{Mutagenicity}
    \end{subfigure}
    \hfill
    \begin{subfigure}[t]{0.19\linewidth}
        \centering
        \includegraphics[width=\linewidth]{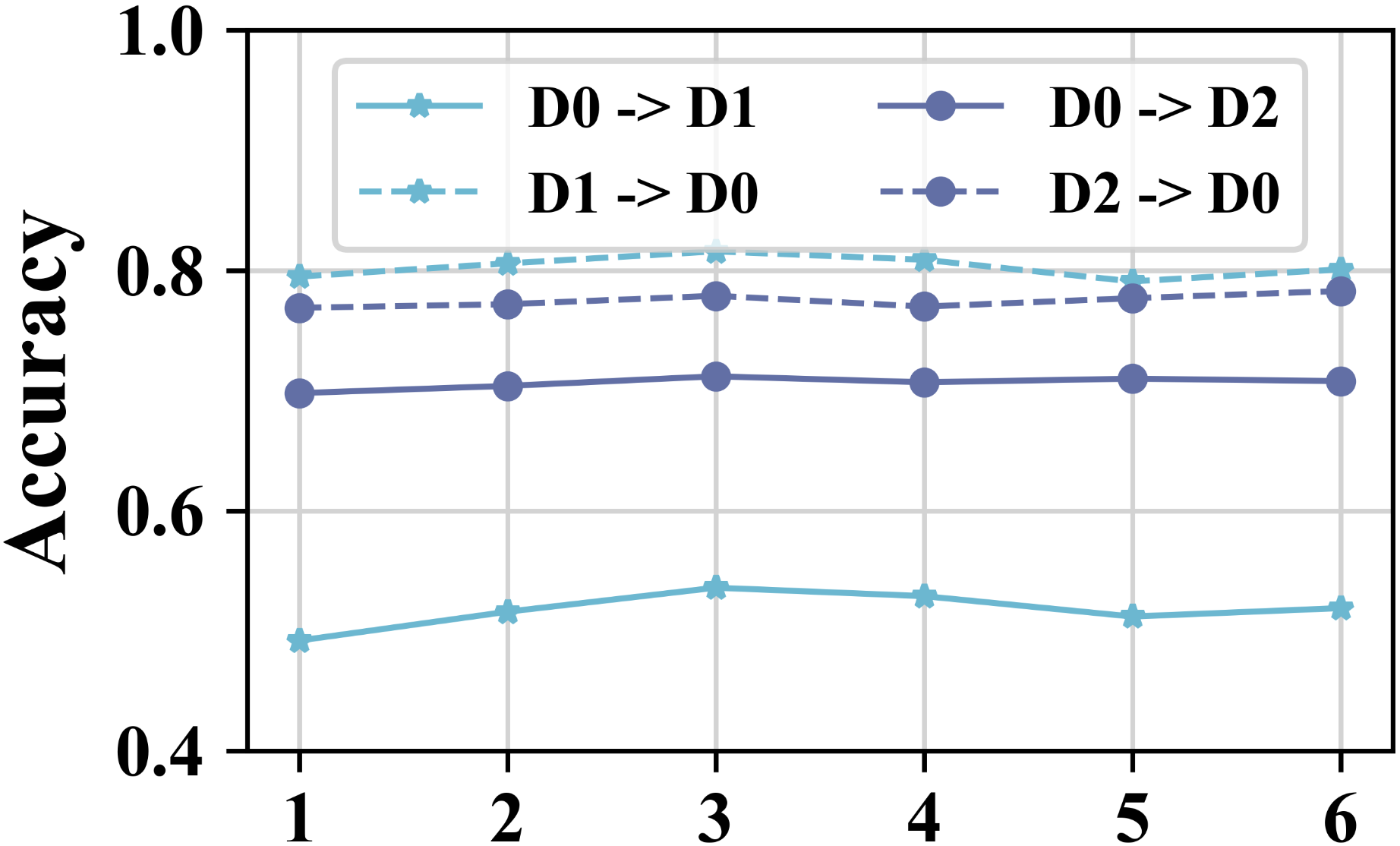}
        \caption{DD}
    \end{subfigure}
    \hfill
    \begin{subfigure}[t]{0.19\linewidth}
        \centering
        \includegraphics[width=\linewidth]{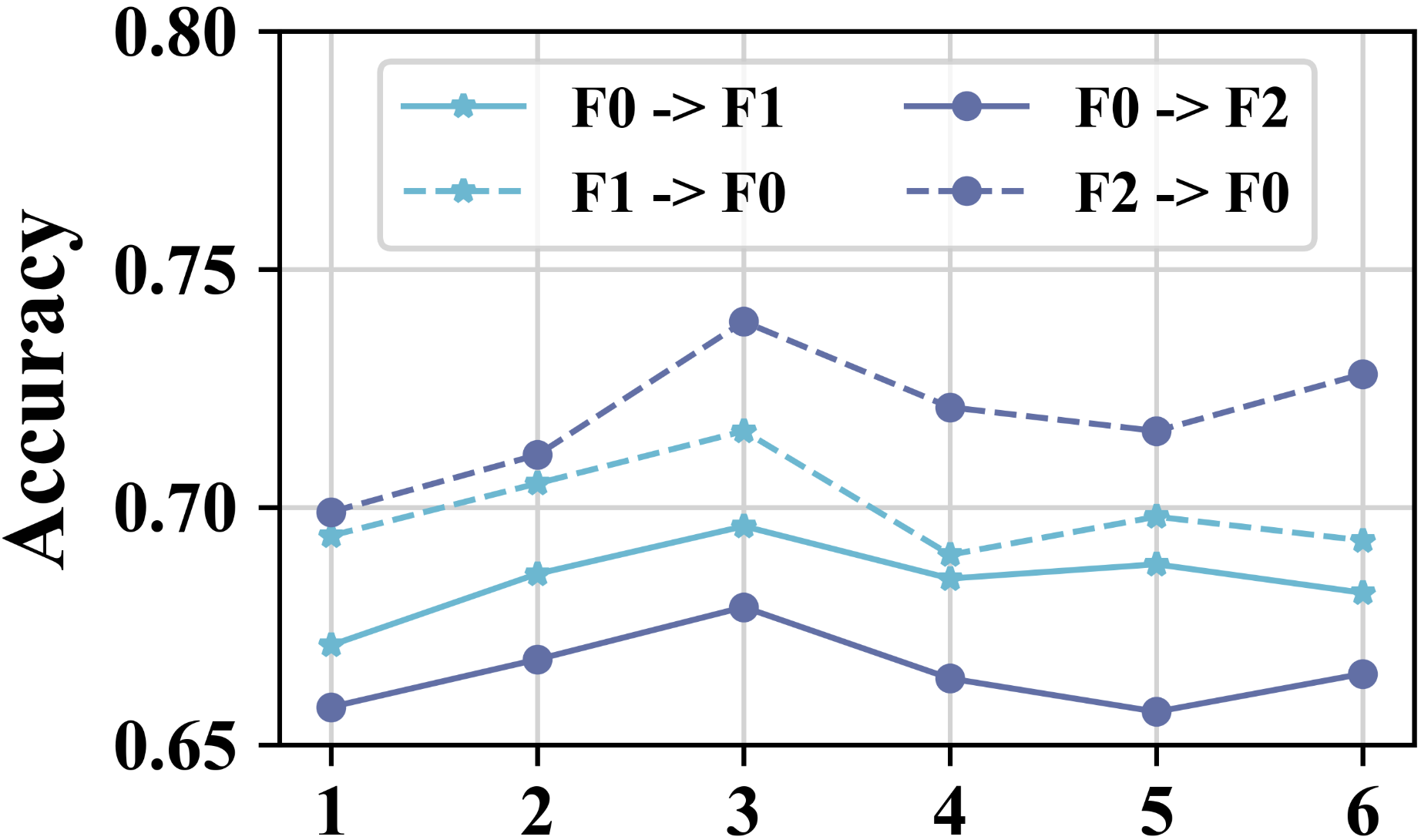}
        \caption{FRANKENSTEIN}
    \end{subfigure}
    \hfill
    \begin{subfigure}[t]{0.19\linewidth}
        \centering
        \includegraphics[width=\linewidth]{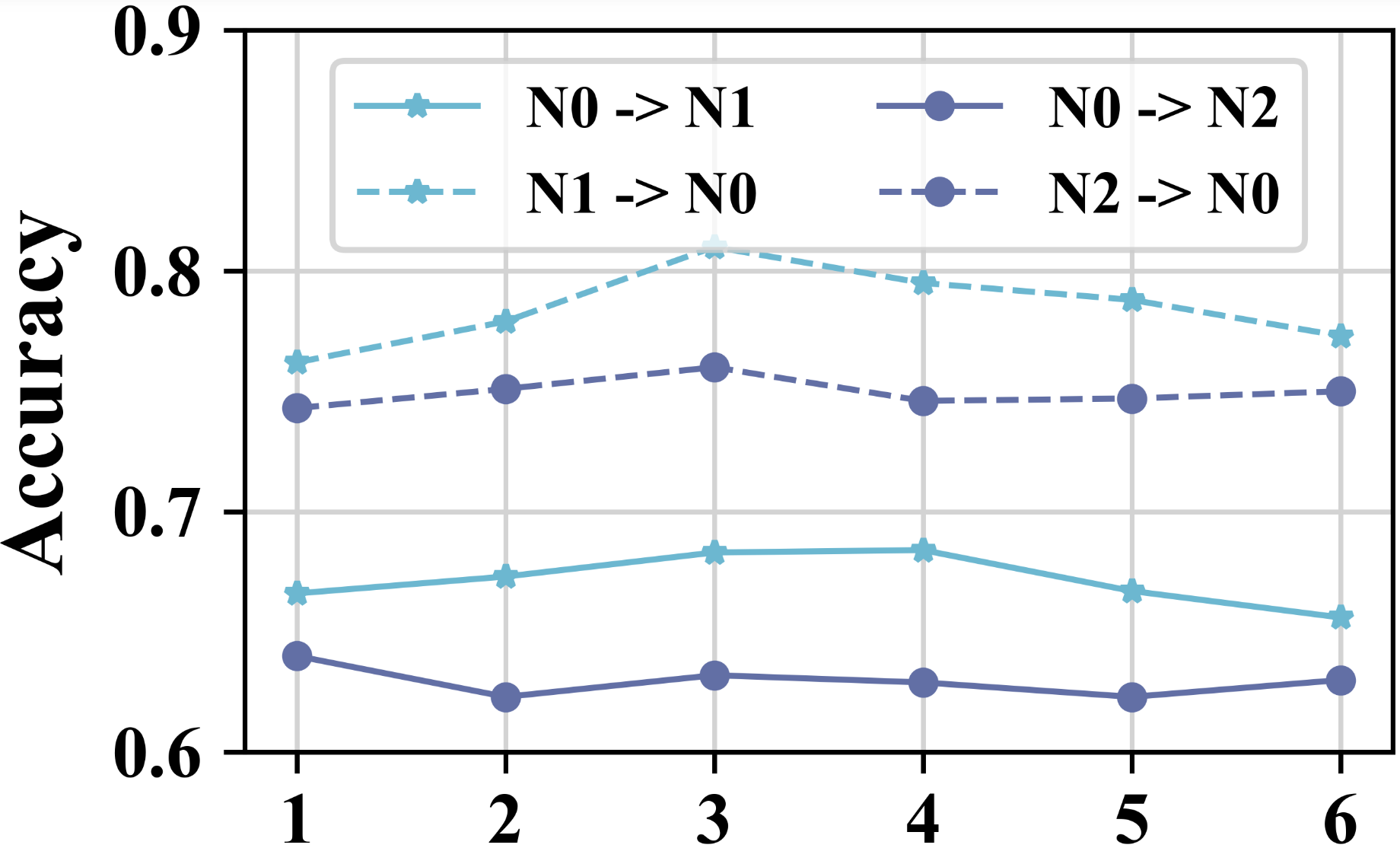}
        \caption{NCI1}
    \end{subfigure}
    \hfill
    \begin{subfigure}[t]{0.19\linewidth}
        \centering
        \includegraphics[width=\linewidth]{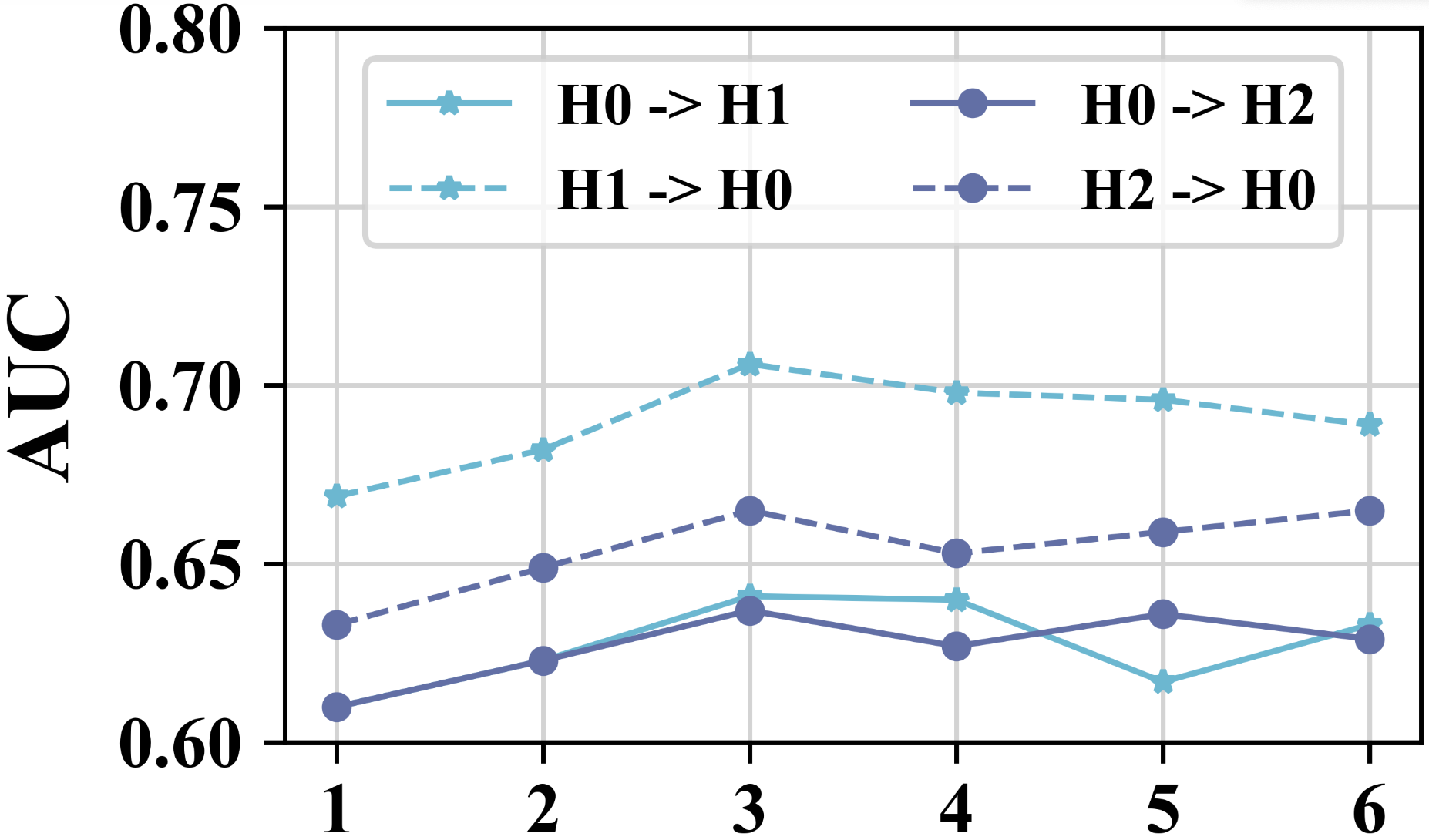}
        \caption{ogbg-molhiv}
    \end{subfigure}

    \caption{Sensitivity analysis of the number of source experts $K_e$ on Mutagenicity, DD, FRANKENSTEIN, NCI1, and ogbg-molhiv.}

    \label{fig:ke_sensitivity}

    \vspace{-0.3cm}
\end{figure*}

\subsubsection{More Sensitivity Analysis}
\label{sec:more_sensitivity}

We further analyze the sensitivity of \method{} to the confidence threshold $\zeta$ and the neighborhood-consistency threshold $\rho_{\min}$ on DD, FRANKENSTEIN, NCI1, and ogbg-molhiv. 
The results are shown in Fig.~\ref{fig:zeta_sensitivity} and~\ref{fig:rho_sensitivity}, and the observed trends are consistent with the analysis in Section~\ref{sec:sensitivity}. 
For each dataset, we vary one threshold at a time over $\{0.5,0.6,0.7,0.8,0.9\}$ while keeping the other hyperparameters fixed.

Overall, \method{} exhibits stable performance across a broad range of threshold values. 
For the confidence threshold $\zeta$, moderate values generally lead to better results because they remove low-confidence pseudo-labels while retaining sufficient safe samples for hard supervision. 
When $\zeta$ becomes too large, the selected safe subspace may become overly conservative, which slightly degrades performance on several transfer directions. 
For the neighborhood-consistency threshold $\rho_{\min}$, a similar quality--coverage trade-off can be observed: small values may admit locally inconsistent samples, whereas overly strict values may discard useful target samples under domain shift. 
Across DD, FRANKENSTEIN, NCI1, and ogbg-molhiv, the best or near-best performance is usually achieved in the middle range, especially around $0.6$--$0.8$. 
These results confirm that \method{} is not overly sensitive to the exact threshold choice and that its semantic and structural filters remain effective across both molecular and protein graph benchmarks.

We additionally examine the sensitivity of \method{} to the number of source experts $K_e$, as shown in Fig.~\ref{fig:ke_sensitivity}. 
We vary $K_e$ from 1 to 6 while keeping the other hyperparameters fixed. 
The source experts are added cumulatively in the following order: GMT, GIN, PathNN, GCN, SAGE, and RandomWalkNN. 
Thus, $K_e=1$ uses only GMT, $K_e=2$ uses GMT and GIN, $K_e=3$ uses GMT, GIN, and PathNN, and larger values further include GCN, SAGE, and RandomWalkNN. 
Different from $\zeta$ and $\rho_{\min}$, which control the strictness of sample selection, $K_e$ affects the reliability of multi-expert uncertainty estimation.

The results show that increasing $K_e$ from a single source hypothesis to a moderate-size heterogeneous committee generally improves performance, indicating that complementary experts provide more stable disagreement signals and help suppress source-biased pseudo-labels. 
However, the gains tend to saturate when $K_e$ becomes larger, and further increasing the number of experts does not consistently improve performance across datasets. 
This suggests that additional experts may introduce redundant or less informative disagreement signals, making safe-subspace selection overly conservative in some transfer directions. 
Overall, the results support the default choice of $K_e=3$, which uses GMT, GIN, and PathNN and achieves a favorable trade-off between reliability estimation, adaptation performance, and computational cost across Mutagenicity, DD, FRANKENSTEIN, NCI1, and ogbg-molhiv.

\begin{table*}[t]
\centering
\small
\caption{Time (in seconds) and GPU memory (in MB) consumption of different methods in the adaptation stage for each epoch.}
\label{tab:computational_cost}
\begin{tabular}{lcccc}
\toprule
Method & Source Hypotheses & Trainable Params & Adapt. Time & GPU Memory \\
\midrule
GraphATA & 1 & 164,738 & 0.9843 & 1,732 \\ 
GALA & 1 & 144,819 & 2.2843 & 5,530 \\
GraphCTA & 1 & 101,256 & 0.7563 & 1,312 \\
SOGA & 1 & 134,274 & 0.1873 & 1,678 \\
\method{}-Single & 1 & 445,828 & 1.7230 & 1,473 \\
\method{}-Homo3 & 3 & 445,828 & 2.0377 & 1,614 \\
\method{}-Hetero3 & 3 & 445,828 & 1.7617 & 5,665 \\
\bottomrule
\end{tabular}
\end{table*}
\subsubsection{Efficiency and Resource Consumption Analysis} 

In this section, we report the computational cost of \method{} and representative baselines, including GALA, GraphCTA, and SOGA, in Table~\ref{tab:computational_cost}. 
The goal is to examine whether the performance gains of \method{} come from a substantially larger adaptation budget or from the proposed reliability-guided refinement mechanism. 
All measurements are conducted on the Mutagenicity dataset under the same hardware setting. 
We report the per-epoch adaptation time, peak GPU memory, number of released source hypotheses, and trainable parameters during target adaptation. 
For \method{}, \method{}-Single uses one frozen GMT source hypothesis, \method{}-Homo3 uses three frozen GMT source hypotheses trained with different random seeds, and \method{}-Hetero3 uses a heterogeneous frozen source committee composed of GMT, GIN, and PathNN. 
During target adaptation, all released source hypotheses are kept frozen and are used only for forward-pass reliability estimation, including pseudo-label prediction and predictive-variance computation. 
Gradients are propagated only through the target-side trainable modules. 
Therefore, increasing the number of source hypotheses affects the number of stored checkpoints and the forward-pass cost, but does not increase the number of trainable parameters optimized during adaptation.

The trainable parameter count of \method{} comes from two target-side modules: the primary target model and the target-intrinsic structure encoder $g_{\psi}$. 
In our implementation, the primary target model is instantiated as GMT and is responsible for target prediction and pseudo-label supervision, while $g_{\psi}$ is instantiated as GIN and is used to learn target-intrinsic structural representations for graph contrastive learning and neighborhood-consistency verification. 
Both modules are updated during target adaptation. 
Thus, the trainable parameters of \method{} correspond to the combination of the trainable GMT target model and the trainable GIN-based structure encoder, together with minor overhead from task-specific heads or projection layers. 
Importantly, this parameter count remains identical for \method{}-Single, \method{}-Homo3, and \method{}-Hetero3, because the additional source hypotheses are frozen and never fine-tuned. 
The moderate runtime increase of the multi-hypothesis variants mainly comes from extra forward passes through frozen experts, while the higher memory usage of \method{}-Hetero3 is due to the heterogeneous frozen encoders used during reliability estimation. 
Overall, the results show that the gains of \method{} are not caused by additional trainable source models or test-time ensembling, but by using frozen source hypotheses to improve pseudo-label reliability, followed by target-structural verification and safe-subspace supervision.

\subsection{Limitations}
\label{sec:limitations}

The proposed method has two limitations. First, the main version assumes that the source provider can release multiple source hypotheses or checkpoints. This is compatible with the source-free setting because no source graph is accessed during adaptation, but it may increase storage and inference cost. Second, the theoretical guarantee is selective: it applies to the safe subspace and does not certify correctness on all target samples. This is why uncertain samples are handled by soft regularization rather than hard pseudo-labels. These limitations are reflected in the design of the single-expert variant and the risk decomposition in Proposition~\ref{prop:target_risk_bound}.

\begin{table*}[t]
\small
\centering
\caption{The results of ablation studies on the DD dataset (source $\rightarrow$ target).}
\vspace{-4pt}
\resizebox{1.0\textwidth}{!}{

}
\vspace{-5pt}
\label{tab:ablation_study_dd}
\end{table*}
\begin{table*}[t]
\small
\centering
\caption{The results of ablation studies on the FRANKENSTEIN dataset (source $\rightarrow$ target).}
\vspace{-4pt}

\resizebox{1.0\textwidth}{!}{
%
}

\vspace{-5pt}
\label{tab:ablation_study_frank}
\end{table*}
\begin{table*}[t]
\small
\centering
\caption{The results of ablation studies on the NCI1 dataset (source $\rightarrow$ target).}
\vspace{-4pt}

\resizebox{1.0\textwidth}{!}{
%
}

\vspace{-5pt}
\label{tab:ablation_study_nci1}
\end{table*}
\begin{table*}[t]
\small
\centering
\caption{The results of ablation studies on the ogbg-molhiv dataset (source $\rightarrow$ target).}
\vspace{-4pt}

\resizebox{1.0\textwidth}{!}{
%
}

\label{tab:ablation_study_molhiv}
\end{table*}

\begin{table*}[t]
\small
	\caption{The classification results (in \%) on the DD dataset under node density domain shift (source $\rightarrow$ target).  D0, D1, D2, and D3 denote the sub-datasets partitioned with node density. \textbf{Bold} results indicate the best performance.}\resizebox{1.0\textwidth}{!}{
	%
}
    \label{tab:dd_node}
\end{table*}

\begin{table*}[t]
\small
	\caption{The classification results (in \%) on the FRANKENSTEIN dataset under node density domain shift (source $\rightarrow$ target).  F0, F1, F2, and F3 denote the sub-datasets partitioned with node density. \textbf{Bold} results indicate the best performance.}\resizebox{1.0\textwidth}{!}{
	%
}
    \label{tab:frank_node}
\end{table*}
\begin{table*}[t]
\small
	\caption{The classification results (in \%) on the Mutagenicity dataset under node density domain shift (source $\rightarrow$ target).  M0, M1, M2, and M3 denote the sub-datasets partitioned with node density. \textbf{Bold} results indicate the best performance.}\resizebox{1.0\textwidth}{!}{
	%
}
    \vspace{-0.2cm}
    \label{tab:mutag_node}
\end{table*}
\begin{table*}[t]
\small
	\caption{The classification results (in \%) on the NCI1 dataset under node density domain shift (source $\rightarrow$ target).  N0, N1, N2, and N3 denote the sub-datasets partitioned with node density. \textbf{Bold} results indicate the best performance.}\resizebox{1.0\textwidth}{!}{
	%
}
    \label{tab:nci1_node}
\end{table*}
\begin{table*}[t]
\small
	\caption{ROC-AUC results (in \%) on the ogbg-molhiv dataset under node density domain shift (source $\rightarrow$ target).  H0, H1, H2, and H3 denote the sub-datasets partitioned with node density. \textbf{Bold} results indicate the best performance.}\resizebox{1.0\textwidth}{!}{
	%
}
    \label{tab:hiv_node}
\end{table*}

\begin{table*}[t]
\small
	\caption{The classification results (in \%) on the DD dataset under edge density domain shift (source $\rightarrow$ target).  D0, D1, D2, and D3 denote the sub-datasets partitioned with edge density. \textbf{Bold} results indicate the best performance.}\resizebox{1.0\textwidth}{!}{
	%
}
    \label{tab:dd_idx}
\end{table*}

\begin{table*}[t]
\small
	 \caption{The classification results (in \%) on the FRANKENSTEIN dataset under edge density domain shift (source $\rightarrow$ target).  F0, F1, F2, and F3 denote the sub-datasets partitioned with edge density. \textbf{Bold} results indicate the best performance.}\resizebox{1.0\textwidth}{!}{
	%
}
    \label{tab:frank_idx}
\end{table*}

\begin{table*}[t]
\small
	\caption{The classification results (in \%) on the Mutagenicity dataset under edge density domain shift (source $\rightarrow$ target).  M0, M1, M2, and M3 denote the sub-datasets partitioned with edge density. \textbf{Bold} results indicate the best performance.}\resizebox{1.0\textwidth}{!}{
	%
}
    \label{tab:mutag_idx}
\end{table*}

\begin{table*}[t]
\small
	\caption{The classification results (in \%) on the NCI1 dataset under edge density domain shift (source $\rightarrow$ target).  N0, N1, N2, and N3 denote the sub-datasets partitioned with edge density. \textbf{Bold} results indicate the best performance.}\resizebox{1.0\textwidth}{!}{
	%
}
    \label{tab:nci_idx}
\end{table*}

\begin{table*}[t]
\small
	\caption{ROC-AUC results (in \%) on the ogbg-molhiv dataset under edge density domain shift (source $\rightarrow$ target).  H0, H1, H2, and H3 denote the sub-datasets partitioned with edge density. \textbf{Bold} results indicate the best performance.}\resizebox{1.0\textwidth}{!}{
	%
}
    \label{tab:hiv_idx}
\end{table*}

\end{document}